\renewcommand\subsection{\@startsection{subsection}{2}{\z@}%
    {-21dd plus-8pt minus-4pt}{10.5dd}
     {\normalsize\upshape\bfseries}}
\let\csname opt@amsmath.sty\endcsname\relax
\DeclareMathAlphabet{\mathcal}{OMS}{cmsy}{m}{n} 
\journalname{ }
\newif\ifShowOutlineShowcase 
\newif\ifdraft 
\newcommand{\set}[1]{\{#1\}}                      
\newcommand{\card}[1]{|{#1}|}                     
\newcommand{\tup}[1]{\langle#1\rangle}            
\newcommand{\A}{\mathcal{\uppercase{A}}}
\newcommand{\E}{\mathcal{\uppercase{E}}}
\renewcommand{\P}{\mathcal{\uppercase{P}}}
\newcommand{\R}{\mathcal{\uppercase{R}}}
\newcommand{\ra}{\rightarrow}
\newcommand{\bqm}{\textbf{ ? }}
\newcommand{\cm}{\textbf{ \checkmark }}
\renewcommand{\vec}[1]{\vv{#1}}
\let\myendexample=\endexample
\def\endexample{\hfill\ensuremath{\blacksquare}\myendexample}
\newcommand{\udefined}{\bot}
\newcommand{\seq}{\sigma}
\newcommand{\prefix}[2]{#2^{#1}}
\newcommand{\pref}[2]{#2,#1}
\newcommand{\eventuniv}{\ensuremath{\E}\xspace}
\newcommand{\attnameuniv}{\ensuremath{\A}\xspace}
\newcommand{\attval}[1]{\#_{\text{{#1}}}}
\newcommand{\event}{\ensuremath{e}\xspace}
\newcommand{\trace}{\ensuremath{\tau}\xspace}
\newcommand{\eventlog}{\ensuremath{L}\xspace}
\newcommand{\str}{\mathsf{String}}
\newcommand{\num}{\mathsf{number}}
\newcommand{\langname}{\text{First-Order Event Expression}\xspace}
\newcommand{\langnameabr}{\text{FOE}\xspace}
\newcommand{\true}{\mathsf{true}}
\newcommand{\false}{\mathsf{false}}
\newcommand{\fforall}{\forall}
\newcommand{\fexists}{\exists}
\newcommand{\fand}{~\wedge~}
\newcommand{\for}{~\vee~}
\newcommand{\fimpl}{~\ra~}
\newcommand{\fandcompact}{\wedge}
\newcommand{\forcompact}{\vee}
\newcommand{\fimplcompact}{\ra}
\newcommand{\eventquery}[2]{\text{\texttt{\small e}}[#1]\textbf{. }{\text{{#2}}}}
\newcommand{\eventexpshort}{\ensuremath{\mathsf{eventExp}}\xspace}
\newcommand{\numexpb}{\ensuremath{\mathsf{numExp}}\xspace}
\newcommand{\nonnumexpb}{\ensuremath{\mathsf{nonNumExp}}\xspace}
\newcommand{\noteq}{\neq}
\newcommand{\logeq}{==}
\newcommand{\idx}{\mathsf{idx}\xspace}
\newcommand{\nat}{\text{\textit{pint}}\xspace}
\newcommand{\last}{\ensuremath{\mathsf{last}\xspace}}
\newcommand{\curr}{\ensuremath{\mathsf{curr}\xspace}}
\newcommand{\ar}{R\xspace}
\newcommand{\arset}{\R\xspace}
\newcommand{\targetarrow}{\Longrightarrow}
\newcommand{\cond}{\mathsf{Cond}}
\newcommand{\target}{\mathsf{Target}}
\newcommand{\otarget}{\mathsf{DefaultTarget}}
\newcommand{\ruletup}[1]{\tup{#1}}
\newcommand{\analrule}{analytic rule\xspace}
\newcommand{\analrules}{analytic rules\xspace}
\newcommand{\AnalRule}{Analytic Rule\xspace}
\newcommand{\rep}[1]{\ensuremath{\textsf{#1}}}
\newcommand{\eval}{eval}
\newcommand{\condtargetrules}{\text{conditional-target expressions}\xspace}
\newcommand{\inter}[4]{(#1)^{#2,#3}_{#4}}
\newcommand{\val}{\ensuremath{\nu}}
\newcommand{\satisfyb}[4]{(#4)^{#1,#2}_{#3}}
\newcommand{\aggsum}{\textsf{\textbf{sum}}\xspace}
\newcommand{\aggavg}{\textsf{\textbf{avg}}\xspace}
\newcommand{\aggmin}{\textsf{\textbf{min}}\xspace}
\newcommand{\aggmax}{\textsf{\textbf{max}}\xspace}
\newcommand{\aggcount}{\textsf{\textbf{count}}\xspace}
\newcommand{\aggcountval}{\textsf{\textbf{countVal}}\xspace}
\newcommand{\aggconcat}{\textsf{\textbf{concat}}\xspace}
\newcommand{\aggrange}{\textsf{\scalebox{.8}[.8]{\textbf{where}}}}
\newcommand{\aggwithin}{\textsf{\scalebox{.8}[.8]{\textbf{within}}}}
\newcommand{\aggcond}{\textsf{\scalebox{.8}[.8]{\textbf{and}}}}
\newcommand{\aggcondexp}{\ensuremath{\mathsf{aggCond}}\xspace}
\newcommand{\aggnumsrc}{\ensuremath{\mathsf{numSrc}}\xspace}
\newcommand{\aggnonnumsrc}{\ensuremath{\mathsf{nonNumSrc}}\xspace}
\newcommand{\idxset}{\mathsf{Idx}}
\newcommand{\st}{\mathsf{st}}
\newcommand{\ed}{\mathsf{ed}}
\newcommand{\concatoperand}{\odot}
\newcommand{\aggseparator}{;~}
\newcommand{\emptystr}{\text{"}\!\ \text{"}\xspace}
\newcommand{\agga}[5]{#1(#2\aggseparator\aggrange~x=#3:#4\aggseparator\aggcond~#5)}
\newcommand{\uagga}[4]{#1(#2\aggseparator\aggrange~x=#3:#4)}
\newcommand{\aggtwo}[3]{#1(#2,#3)}
\newcommand{\suma}[4]{\agga{\aggsum}{#1}{#2}{#3}{#4}}
\newcommand{\avga}[4]{\agga{\aggavg}{#1}{#2}{#3}{#4}}
\newcommand{\mina}[4]{\agga{\aggmin}{#1}{#2}{#3}{#4}}
\newcommand{\maxa}[4]{\agga{\aggmax}{#1}{#2}{#3}{#4}}
\newcommand{\concata}[4]{\agga{\aggconcat}{#1}{#2}{#3}{#4}}
\newcommand{\usuma}[3]{\uagga{\aggsum}{#1}{#2}{#3}}
\newcommand{\uavga}[3]{\uagga{\aggavg}{#1}{#2}{#3}}
\newcommand{\umina}[3]{\uagga{\aggmin}{#1}{#2}{#3}}
\newcommand{\umaxa}[3]{\uagga{\aggmax}{#1}{#2}{#3}}
\newcommand{\uconcata}[3]{\uagga{\aggconcat}{#1}{#2}{#3}}
\newcommand{\mintwo}[2]{\aggtwo{\aggmin}{#1}{#2}}
\newcommand{\maxtwo}[2]{\aggtwo{\aggmax}{#1}{#2}}
\newcommand{\counta}[3]{\aggcount(#1;~\aggrange~x=#2:#3)}
\newcommand{\countvala}[3]{\aggcountval(\text{#1};~\aggwithin~#2:#3)}
\newcommand{\aggb}[5]{
\ensuremath{
\begin{array}[t]{l@{ }l}
#1(&#2;~\aggrange~x=#3:#4;\\
&\aggcond~#5)
\end{array}
}
}
\newcommand{\sumb}[4]{\aggb{\aggsum}{#1}{#2}{#3}{#4}}
\newcommand{\countb}[3]{
\ensuremath{
\begin{array}[t]{l@{ }l}
\aggcount(&#1;\\
&\aggrange~x=#2:#3)
\end{array}
}
}
\newcommand{\aggc}[5]{
\ensuremath{
\begin{array}[t]{l@{ }l}
{#1}(&#2;\\
&\aggrange~x=#3:#4;\\
&\aggcond~#5)
\end{array}
}
}
\newcommand{\uaggc}[4]{
\ensuremath{
\begin{array}[t]{l@{ }l}
{#1}(&#2;\\
&\aggrange~x=#3:#4)
\end{array}
}
}
\newcommand{\sumc}[4]{\aggc{\aggsum}{#1}{#2}{#3}{#4}}
\newcommand{\usumc}[3]{\uaggc{\aggsum}{#1}{#2}{#3}}
\newcommand{\uavgc}[3]{\uaggc{\aggavg}{#1}{#2}{#3}}
\newcommand{\uconcatc}[3]{\uaggc{\aggconcat}{#1}{#2}{#3}}
\newcommand{\aggd}[5]{
\ensuremath{
\begin{array}[t]{l@{ }l}
{#1}(&#2;\\
&\aggrange~x=#3:#4;\\
&\aggcond~#5
\end{array}
}
}
\newcommand{\sumd}[4]{\aggd{\aggsum}{#1}{#2}{#3}{#4}}
\algnewcommand\algorithmicforeach{\textbf{for each}}
\newcommand{\encfunc}{\mathsf{enc}}
\newcommand{\encset}{\mathsf{Enc}}
\newcommand{\predfunc}{\P}
\newcommand{\condPingPonga}{\ensuremath{\cond_{1}}}
\newcommand{\condPingPongb}{\ensuremath{\cond_{2}}}
\newcommand{\condAbnormalWaitDur}{\ensuremath{\cond_{3}}}
\newcommand{\condEveryOrderWillBeDelivered}{\ensuremath{\cond_{4}}}
\newcommand{\condSoD}{\ensuremath{\cond_{5}}}
\newcommand{\condActDurLessThanAThreshold}{\ensuremath{\cond_{6}}}
\newcommand{\condActDurLessThanAThresholdb}{\ensuremath{\cond_{7}}}
\newcommand{\condConstValDur}{\ensuremath{\cond_{8}}}
\newcommand{\condDelay}{\ensuremath{\cond_{9}}}
\newcommand{\condNumValidation}{\ensuremath{\cond_{10}}}
\newcommand{\condInvolveThreeResources}{\ensuremath{\cond_{11}}}
\newcommand{\condProcPerformance}{\ensuremath{\cond_{12}}}
\newcommand{\condProcPerformanceb}{\ensuremath{\cond_{13}}}
\newcommand{\condEfficiency}{\ensuremath{\cond_{14}}}
\newcommand{\condEfficiencyb}{\ensuremath{\cond_{15}}}
\newcommand{\arPingPongExample}{\ar_{1}}
\newcommand{\arRemainingTimeExample}{\ar_{2}}
\newcommand{\arPingPongShowcase}{\ar_{3}}
\newcommand{\arAbWaitDur}{\ar_{4}}
\newcommand{\arOrderWillBeDeliveredShowcase}{\ar_{5}}
\newcommand{\arSoD}{\ar_{6}}
\newcommand{\arActDurSLA}{\ar_{7}}
\newcommand{\arActDurSLAb}{\ar_{8}}
\newcommand{\arDelay}{\ar_{9}}
\newcommand{\arOvertime}{\ar_{10}}
\newcommand{\arRemWaitDur}{\ar_{11}}
\newcommand{\arAvgActDur}{\ar_{12}}
\newcommand{\arRemAct}{\ar_{13}}
\newcommand{\arRemValidationAct}{\ar_{14}}
\newcommand{\arComplexProcess}{\ar_{15}}
\newcommand{\arNumDifResources}{\ar_{16}}
\newcommand{\arNumHandovers}{\ar_{17}}
\newcommand{\arLaborIntensive}{\ar_{18}}
\newcommand{\arTotalCost}{\ar_{19}}
\newcommand{\arMaxCost}{\ar_{20}}
\newcommand{\arAvgCost}{\ar_{21}}
\newcommand{\arValidationCost}{\ar_{22}}
\newcommand{\arTotalCostb}{\ar_{23}}
\newcommand{\arExpensive}{\ar_{24}}
\newcommand{\arProcPerformance}{\ar_{25}}
\newcommand{\arProcPerformanceb}{\ar_{26}}
\newcommand{\arEfficiency}{\ar_{27}}
\newcommand{\arEfficiencyb}{\ar_{28}}
\newcommand{\arNextEvent}{\ar_{29}}
\newcommand{\arLifeCycle}{\ar_{30}}
\newcommand{\arNextThreeEvents}{\ar_{31}}
\newcommand{\targetremtime}{\target_{\text{remainingTime}}}
\newcommand{\condPingPongExpa}{\ensuremath{\cond_{\text{E1}}}}
\newcommand{\condPingPongExpb}{\ensuremath{\cond_{\text{E2}}}}
\newcommand{\condPingPongExpc}{\ensuremath{\cond_{\text{E3}}}}
\newcommand{\condPingPongExpd}{\ensuremath{\cond_{\text{E4}}}}
\newcommand{\condPingPongExpe}{\ensuremath{\cond_{\text{E5}}}}
\newcommand{\condPingPongExpf}{\ensuremath{\cond_{\text{E6}}}}
\newcommand{\condEventuallyDeclined}{\ensuremath{\cond_{\text{E8}}}}
\newcommand{\arPingPongExpa}{\ar_{\text{E1}}} 
\newcommand{\arPingPongExpb}{\ar_{\text{E2}}} 
\newcommand{\arPingPongExpc}{\ar_{\text{E3}}} 
\newcommand{\arRemWaiting}{\ar_{\text{E4}}} 
\newcommand{\arRemWaitingb}{\ar_{\text{E5}}} 
\newcommand{\arRemTimeFillingApplication}{\ar_{\text{E6}}}
\newcommand{\arEventuallyDeclined}{\ar_{\text{E7}}}
\newcommand{\arComplexAppB}{\ar_{\text{E8}}} 
\newcommand{\arRemainingAct}{\ar_{\text{E9}}} 
\newcommand{\totaltasksforexperiments}{9\xspace}
\newcommand{\auc}{AUC\xspace}
\newcommand{\acc}{Accuracy\xspace}
\newcommand{\wprecision}{W. Prec\xspace}
\newcommand{\wrecall}{W. Rec\xspace}
\newcommand{\fmeasure}{F-Measure\xspace}
\newcommand{\mae}{MAE\xspace}
\newcommand{\rmse}{RMSE\xspace}
\newcommand{\zeroR}{ZeroR\xspace}
\newcommand{\logreg}{Logistic Reg.\xspace}
\newcommand{\nbayes}{Naive Bayes\xspace}
\newcommand{\dectree}{Decision Tree\xspace}
\newcommand{\randfor}{Random Forest\xspace}
\newcommand{\adaboost}{Ada Boost\xspace}
\newcommand{\extratree}{Extra Trees\xspace}
\newcommand{\voting}{Voting\xspace}
\newcommand{\nn}{Deep Neural Net.\xspace}
\newcommand{\zeroRreg}{ZeroR\xspace}
\newcommand{\linreg}{Linear Reg.\xspace}
\newcommand{\dectreereg}{Decision Tree\xspace}
\newcommand{\randforreg}{Random Forest\xspace}
\newcommand{\adaboostreg}{Ada Boost\xspace}
\newcommand{\extratreereg}{Extra Trees\xspace}
\newcommand{\nnreg}{Deep Neural Net.\xspace}
\begin{document}

\title{Specification-Driven Predictive Business Process Monitoring
  \thanks{This research has been supported by the Euregio
    Interregional Project Network IPN12 ``\textit{KAOS:
      Knowledge-Aware Operational Support} (KAOS)'', which is funded
    by the ``European Region Tyrol-South Tyrol-Trentino'' (EGTC) under
    the first call for basic research projects.
}
}


\author{Ario Santoso$^{1,2}$ 
\and Michael Felderer$^{1,3}$
}
\authorrunning{Ario Santoso \and Michael Felderer
} 

\institute{
Ario Santoso$^{1,2}$\\
{ario.santoso@uibk.ac.at}\\[0.5ex]
Michael Felderer$^{1,3}$\\ 
{michael.felderer@uibk.ac.at}\\
\at 
$
\begin{array}{@{}l@{\ }l}
^1&\text{Department of Computer Science, University of
Innsbruck, Austria} \\[0.3ex]
^2&\text{Faculty of Computer Science, }\\ &\text{Free University of
Bozen-Bolzano, Italy} \\[0.3ex]
^3&\text{Department of Software Engineering, } \\
&\text{Blekinge Institute of Technology, Sweden} 
\end{array}
$
%
}

\date{ }

\maketitle

\begin{abstract}

  Predictive analysis in business process monitoring aims at
  forecasting the future information of a running business
  process. The prediction is typically made based on the model
  extracted from historical process execution logs (event logs). In
  practice, different business domains might require different kinds
  of predictions. Hence, it is important to have a means for properly
  specifying the desired prediction tasks, and a mechanism to deal
  with these various 
  prediction tasks. 
  Although there have been many studies in this area, they mostly
  focus on a specific prediction task.
%
  This work introduces 
  a language for specifying the desired prediction tasks, and this
  language allows us to express various kinds of prediction
  tasks. This work also presents a mechanism for automatically
  creating the corresponding prediction model based on the given
  specification.
%
  Differently from previous studies, instead of focusing on a
  particular prediction task, we present an approach to deal with
  various
  prediction tasks based on the given specification of the desired
  prediction tasks.
%
%
  We also provide an implementation of the approach which is used to
  conduct experiments using real-life event logs.

  \keywords{Predictive Business Process Monitoring \and Prediction
    Task Specification Language \and Automatic Prediction Model
    Creation} 


\end{abstract}

\section{Introduction}
\label{sec:introduction}

%

Process mining~\cite{ProcessMiningManifesto,Aalst:2016} provides a
collection of techniques for extracting process-related information
from the logs of business process executions (event logs). One
important area in this field is predictive business process
monitoring, which aims at forecasting the future information of a
running process based on the models extracted from event logs.
Through predictive analysis, potential future problems can be detected
and preventive actions can be taken in order to avoid unexpected
situation, e.g., processing delay and Service-Level Agreement (SLA)
violations.
%
%
Many studies have been conducted in order to deal with various
prediction tasks
such as 
predicting the remaining processing
time~\cite{ASS11,TVLD17,RW13,PSBD14, PSBD18}, 
predicting the outcomes of a
process~\cite{MFDG14,DDFT16,VDLMD15,PVWFT16},
predicting future events~\cite{DGMPY17,TVLD17,ERF17b}, etc
(cf.~\cite{MLISFCDP15,MFE12,SWGM14,PVFTW12,BMDB16,CDLVT15}).
An 
overview of various works in the area of predictive business process
monitoring can be found in~\cite{MRR18,DGMM18}.


In practice, different business areas might need different kinds of
prediction tasks. For instance, 
an online retail company might be interested in predicting the
processing time until an order can be delivered to the customer,
%
%
while for an insurance company, predicting
the outcome of an insurance claim process would be interesting. On
the other hand, both of them might be interested in predicting whether
their processes comply with some business constraints (e.g., the
processing time must be less than  a certain amount of time).

When it comes to predicting the outcome of a process, business
constraint satisfaction and the existence of an unexpected behaviour,
it is important to specify the desired outcomes, the business
constraint and the unexpected behaviour precisely.
For instance, in the area of customer problem management, to increase
the customer satisfaction as well as to promote efficiency, we might
be interested in predicting the possibility of \emph{ping-pong
  behaviour} among the Customer Service (CS) officers while handling
the customer problems.
%
%
However, the definition of a ping-pong behaviour could be varied.  For
instance, when a CS officer transfers a customer problem to another
CS officer who belongs into the same group, it can already be considered
as a ping-pong behaviour since both of them should be able to handle
the same problem.  Another possible definition would be to consider a
ping-pong behaviour as a situation when a CS officer transfers a
problem to another CS officer who has the same expertise, and the
problem is transfered back to the original CS officer.


To have a suitable prediction service for our domain, we need to be
able to 
specify the desired prediction tasks properly.  Thus, we need a means
to express the specification.
Once we have characterized the prediction objectives and are able to
express them properly, we need a mechanism to create the corresponding
prediction model. To automate the prediction model creation, the
specification should be unambiguous and machine processable.
%
As illustrated above, such specification mechanism should also allow
us to specify constraints over the data, and compare data values at
different time points. For example, to characterize the ping-pong
behaviour, one possibility is to specify the behaviour as follows:
%
``\textit{there is an event at a certain time point in which the CS
  officer (who handles the problem) is different from the CS officer
  in the event at the next time point, but both of them belong to the
  same group}''. Note that here we need to compare the information
about the CS officer names and groups at different time points.
In other cases, we might even need to involve arithmetic
expressions. For instance, consider a business constraint that
requires that the length of customer order processing time to be less
than 3 hours, where the length of the processing time is the time
difference between the timestamp of the first activity and the last
activity within the process. To express this constraint, we need to be
able to specify that ``\textit{the time difference between the
  timestamp of the first activity and the last activity within the
  process is less than 3 hours}''.

The language should also enable us to specify how to compute/obtain
the target information to be predicted. For instance, in the
prediction of remaining processing time, we need to be able to define
that the remaining processing time is \emph{the time difference
  between timestamp of the last activity and the current
  activity}. 
We might also need to aggregate some data values, for instance in the
prediction of the total processing cost where the total cost is
\emph{the sum over the cost of all activities/events}.
%
%
%
In other cases, we might even need 
to specify an expression that counts the number of a certain activity.
For example in the prediction of the amount of work to be done
(workload), we might be interested in predicting the \emph{number of
  the remaining validation activities} that are necessary to be done
for processing a client application.

In this work, we tackle those problems by proposing an approach for
obtaining the desired prediction services based on the specification
of the desired prediction tasks. Specifically, we provide
the following contributions:
\begin{enumerate}
\item We introduce a 
  rich language for expressing the desired prediction tasks.
  This language allows us to specify various 
  desired prediction tasks.
%
%
  In some sense, this language 
  allows us to specify how to create the desired prediction models
  based on the event logs.
%
  We also provide a formal semantics for the language in order to
  ensure a uniform understanding 
   and 
   avoid ambiguity.
%
\item We devise a mechanism for building the corresponding prediction
  model based on the given 
  specification. This 
  includes the mechanism 
  for automatically processing the specification.
  Once created, the prediction model can be used to provide predictive
  analysis services in business process monitoring.
\item To provide a general idea on the capability of our
  language, 
  we exhibit how our proposal can be used for specifying
  various 
  prediction tasks (cf.~\Cref{sec:showcase}).
\item 
%
%
  We provide an implementation of our approach which enables the
  automatic creation of prediction models based on the specified
  prediction objective.
\item To demonstrate the applicability of our approach, we carry out
  experiments using real-life event logs that were provided for the
  Business Process Intelligence Challenge (BPIC) 2012, 2013, and 2015.
%
%
\end{enumerate}

Our approach for obtaining prediction services essentially consists of
the following main steps:
\begin{inparaenum}[\itshape (i)]
\item First, we specify the desired prediction tasks,
\item Second, we automatically create the prediction models based on
  the given specification,
\item Once created, we can use the constructed prediction models for
  predicting the future information of a running process.
\end{inparaenum}
%

Roughly speaking,
%
we specify the desired prediction task by specifying
how we want to map each (partial) business processes execution
information into the expected predicted information.
Based on this specification, we 
train either a classification or regression model that will serve as
the prediction model.
%
%
By specifying a set of desired prediction
tasks, 
we could obtain \emph{multi-perspective prediction services} that
enable us to focus on different aspects and predict various
information of interest.
Our approach is independent with respect to the
classification/regression model that is used. 
In our implementation, to get the expected quality of predictions, the
users are allowed to choose the desired classification/regression
model as well as the feature encoding mechanisms (in order to allow
some sort of feature engineering).
%


This article extends~\cite{AS-BPMDS-18} in several ways. First, we
extend the specification language so as to incorporate various
aggregate functions such as Max, Min, Average, Sum, Count, and
Concat. Importantly, our aggregate functions allow us not
only 
to perform aggregation over some values but also to choose the values
to be aggregated.
Obviously this extension increases the expressivity of the language
and 
allows us to specify many more interesting prediction tasks. Next, we
add various new showcases that exhibit the capabilities of our
language in specifying 
prediction tasks. We also extend the implementation of our prototype
in order to incorporate those extensions.
%
%
%
To demonstrate the applicability of our approach, more experiments on
different prediction tasks are also conducted and presented.
Apart from using the real-life event log that was provided for BPIC
2013~\cite{BPI-13-data}, 
we also use another real-life event logs, namely the event logs that
were provided for BPIC 2012~\cite{BPI-12-data} and BPIC
2015~\cite{BPI-15-data}.
Notably, our experiments also exhibit the usage of a Deep Learning
model~\cite{GBC16} in predictive process monitoring. In particular, we
use Deep Feed-Forward Neural Network. Though there have been some
works that exhibit the usage of deep learning models in predictive
process monitoring (cf.~\cite{TVLD17,ERF17a,ERF17b,DGMPY17,MEF17}),
here we consider the prediction tasks that are different from the
tasks that have been studied in those works.
%
%
We also add more thorough explanation on several concepts and ideas of
our approach so as to provide a better 
understanding. The discussion on the related work is also
extended. Last but not least, several examples are added in order to
support the explanation of various technical concepts as well as to
ease the understanding of the ideas.




The remainder of this article is structured as follows. In
\Cref{sec:preliminaries}, we provide the required background on the
concepts that are needed for the rest of the paper. Having laid the
foundation, in \Cref{sec:spec-language}, we present the language that
we introduce for specifying the desired prediction tasks. In
\Cref{sec:pred-model}, we present a mechanism for building the
corresponding prediction model based on the given specification. In
\Cref{sec:showcase}, we continue the explanation by providing numerous
showcases that exhibit the capability of our language in specifying
various prediction tasks. In \Cref{sec:implementation-experiment}, we
present the implementation of our approach as well as the experiments
that we have conducted. Related work is presented in
\Cref{sec:related-work}. 
Finally, in \Cref{sec:discussion} we present a discussion on some
potential limitations which pave the way towards our future direction,
and \Cref{sec:conclusion} concludes this work.


\section{Preliminaries}
\label{sec:preliminaries}

We will see later that we build the prediction models by using machine
learning classification/regression techniques and based on the data in
event logs.
To provide some background concepts, this section briefly explains the
typical structure of event logs as well as the notion of
classification and regression in machine learning.

\subsection{Trace, Event and Event Log}\label{sec:event-log-structure}
We follow the usual notion of event logs as in process
mining~\cite{Aalst:2016}. Essentially, an event log captures
historical information of business process executions.
%
Within an event log, an execution of a business process instance (a
case) is represented as a trace. In the following, we may use the
terms \emph{trace} and \emph{case} interchangeably.
Each trace has several events, and each event in a trace captures the
information about a particular event/activity that happens during the
process execution.
Events are characterized by various attributes, e.g., \emph{timestamp}
(the time when the event occurred). 
%


We now proceed to formally define the notion of event logs as well as
their components.  Let $\eventuniv$ be the \emph{event universe}
(i.e., the set of all event identifiers), and $\attnameuniv$ be the
set of \emph{attribute names}. For any event $\event \in \eventuniv$,
and attribute name $n \in \attnameuniv$, $\attval{n}(\event)$ denotes
the \emph{value of attribute} $n$ of $\event$. E.g.,
$\attval{timestamp}(\event)$ denotes the timestamp of the event
$\event$. If an event $\event$ does not have an attribute named $n$,
then $\attval{n}(\event) = \udefined$ (where $\udefined$ is undefined
value).
A \emph{finite sequence over $\eventuniv$ of length
  $n$} 
is a mapping $\seq: \set{1, \ldots, n} \ra
\eventuniv$, and we represent such a sequence as a tuple of elements
of $\eventuniv$, i.e., $\seq = \tup{\event_1, \event_2, \ldots,
  \event_n}$
where $\event_i = \seq(i)$ for $i \in \set{1, \ldots, n}$.
The set of all \emph{finite sequences} over $\eventuniv$ is denoted by
$\eventuniv^*$.
The \emph{length} of a sequence $\seq$ is denoted by $\card{\seq}$.

A \emph{trace} $\trace$ is a finite sequence over $\eventuniv$ such
that each event $\event \in \eventuniv$ occurs at most once in
$\trace$, i.e., $\trace \in \eventuniv^{*}$ and for
$1~\leq~i~<~j~\leq~\card{\trace}$, we have $\trace(i) \neq \trace(j)$,
where
$\trace(i)$ refers to the \emph{event of the trace $\trace$ at the
  index $i$}.
Let $\trace = \tup{e_1, e_2, \ldots, e_n}$ be a trace,
$\prefix{k}{\trace} = \tup{e_1, e_2, \ldots, e_{k}}$ denotes the
\emph{$k$-length trace prefix}~of~$\trace$ (for $1~\leq~k~<~n$).  

\begin{example}
  For example, let
  $\set{e_1, e_2, e_3, e_4, e_5, e_6, e_7}~\subset~\eventuniv$ be some
  event identifiers, then the sequence
  $\trace~=~\tup{e_3, e_7, e_6, e_4, e_5}~\in~\eventuniv^{*}$ is an
  example of a trace. In this case, we have that $\card{\trace}~=~5$,
  and $\trace(3)$ refers to the event of the trace $\trace$ at the
  index 3, i.e., $\trace(3) = e_6$. Moreover, $\prefix{2}{\trace}$ is
  the prefix of length 2 of the trace $\trace$, i.e.,
  $\prefix{2}{\trace} = \tup{e_3, e_7}$.
\end{example}

Finally, an \emph{event log} $\eventlog$ is a set of traces such that
each event occurs at most once in the entire log, i.e., for each
$\trace_1, \trace_2 \in \eventlog$ such that $\trace_1 \neq \trace_2$,
we have that $\trace_1 \cap \trace_2 = \emptyset$, where
$\trace_1 \cap \trace_2 = \set{\event \in \eventuniv~\mid~\exists i, j \in
  \mathbb{Z}^+ \text{ . } \trace_1(i) = \trace_2(j) = e }$.

An IEEE standard for representing event logs, called XES (eXtensible
Event Stream), has been introduced in~\cite{IEEE-XES:2016}. The
standard defines the XML format for organizing the structure of
traces, events and attributes in event logs. It also introduces some
extensions that define some attributes with pre-defined meaning such
as:
\begin{compactenum}
\item \textit{concept:name}, which stores the name of event/trace;
\item \textit{org:resource}, which stores the name/identifier of the
  resource that triggered the event (e.g., a person name);
\item \textit{org:group}, which stores the group name of the resource
  that triggered the event.
\end{compactenum}

\subsection{Classification and Regression}\label{sec:classification-regression}
In machine learning, 
a classification and regression model can be seen as a function
$f: \vv{X} \ra Y$ that takes some \emph{input
  features}/\emph{variables} $\vv{x} \in \vv{X}$ and predicts the
corresponding \emph{target value/output} $y \in Y$.
The key difference is that the output range of the classification task
is a finite number of discrete categories (qualitative outputs) while
the output range of the regression task is continous values
(quantitative outputs)~\cite{FHT01,HPK11}.  Both of them are
supervised machine learning techniques where the models are trained
with labelled data. I.e., the inputs for the training are pairs of
input variables~$\vec{x}$ and (expected) target value~$y$. This way,
the models learn how to map certain inputs~$\vec{x}$ into the expected
target value~$y$.



\section{Specifying the Desired Prediction Tasks }\label{sec:spec-language}

This section elaborates our mechanism for specifying the desired
prediction tasks.  Here we introduce a language that is able to
capture the desired prediction task in terms of the specification on
how to map each (partial) trace in the event log into the desired
prediction results.
Such specification can be used to train a classification/regression
model that will be used as the prediction model.


To express the specification of a prediction task, we introduce the
notion of \emph{\analrule}. An \emph{\analrule} $\ar$ is
an expression of the form:
%
\[
\begin{array}{r@{ \ }l}
  \ar = \ruletup{& 
    \cond_1 \targetarrow \target_1, ~\\
    &\cond_2 \targetarrow \target_2, ~\\
    &\qquad \qquad\vdots ~ \\
    &\cond_n \targetarrow \target_n, ~\\
    &\otarget \ },
\end{array}
\]
%
where
\begin{inparaenum}[\itshape (i)]
\item $\cond_i$ (for $i \in \set{1,\ldots,n}$) is called
  \emph{condition expression};
\item $\target_i$ (for $i \in \set{1,\ldots,n}$) is called
  \emph{target expression}.
\item $\otarget$ is a special target expression called \emph{default
    target expression}.
\item The expression $\cond_i~\targetarrow~\target_i$ is called
  \emph{conditional-target expression}.
\end{inparaenum}


\Cref{sec:overview-lang} provides an informal intuition of our
language for specifying prediction tasks. Throughout
\Cref{sec:formalization-cond-target-exp,sec:FOE}, we introduce the
language for specifying the condition and target expressions in
\analrules.
Specifically, \Cref{sec:FOE} introduces a language called \langname
(\langnameabr), while \Cref{sec:formalization-cond-target-exp}
elaborates several components that are needed to define such
language. We will see later that \langnameabr can be used to formally
specify condition expressions and a fragment of \langnameabr can be used
to specify target expressions. Finally, the formalization of
\analrules is provided in \Cref{sec:formalize-analytic-rule}.

\subsection{Overview: Prediction Task Specification Language
}\label{sec:overview-lang}

An \analrule $\ar$ is interpreted as a mapping that maps each
(partial) trace into a value that is obtained by evaluating the target
expression in which the corresponding condition is satisfied by the
corresponding trace.
%
Let $\trace$ be a (partial) trace, such mapping $\ar$ can be
illustrated as follows

\[
\ar(\trace)
  = \left\{ \begin{array}{l@{ \ }l}
\eval(\target_1) &\mbox{ if  } \trace \mbox{  satisfies } \cond_1 \mbox{, }\\
\eval(\target_2) &\mbox{ if  } \trace \mbox{  satisfies } \cond_2 \mbox{, }\\
\ \ \ \ \ \ \ \ \ \vdots& \ \ \ \ \ \ \ \  \ \ \ \ \ \ \ \ \vdots\\
\eval(\target_n) &\mbox{ if  } \trace \mbox{  satisfies } \cond_n \mbox{, }\\
\eval(\otarget) & \mbox{ otherwise}
          \end{array}
\right.
\]

\noindent
where $\eval(\otarget)$ and $\eval(\target_i)$ consecutively denote
the results of evaluating the target expression $\otarget$ and
$\target_i$, for $i \in \set{1,\ldots, n}$ 
 (The formal definition of this evaluation operation is given later).

We will see later that a target expression
specifies either the desired prediction result or expresses the way to
compute the desired prediction result. Thus, an \analrule $\ar$ can
also be seen as a means to map (partial) traces into either the
desired prediction results, or to compute the expected prediction
results of (partial) traces.

To specify condition expressions in \analrules, we introduce a
language called \langname (\langnameabr). Roughly speaking, an
\langnameabr formula is a First-Order Logic (FOL)
formula~\cite{Smul68} where the atoms are expressions over some event
attribute values and some comparison operators, e.g., $\logeq$,
$\noteq$, $>$, $\leq$.
%
The quantification in \langnameabr is restricted to the
indices of events (so as to quantify the time points).
The idea of condition expressions is to capture a certain property of
(partial) traces. To give some intuition, before we formally define
the language in \Cref{sec:FOE}, consider the ping-pong behaviour that
can be specified as follows:

\begin{center}
$
\begin{array}{r@{ \ }l}
  \condPingPonga =   &\fexists i . (i > \curr \fand i+1 \leq \last \fand\\
                                   & \eventquery{i}{org:resource} \noteq
                                     \eventquery{i+1}{org:resource}  
                                     \fand \\
                                   &\eventquery{i}{org:group} \logeq
                                     \eventquery{i+1}{org:group}
                                     )
\end{array}
$
\end{center}

\noindent
where 
\begin{inparaenum}[\itshape (i)]
\item $\eventquery{i+1}{org:group}$ is an expression for getting the
  org:group attribute value of the event at index $i+1$ (similarly for
  $\eventquery{i}{org:resource}$, $\eventquery{i+1}{org:resource}$,
  and $\eventquery{i}{org:group}$),
\item $\curr$ refers to the current time point, and 
\item $\last$ refers to
  the last time point.
\end{inparaenum}
 
The formula $\condPingPonga$ basically says that \textit{there exists
  a time point i that is greater 
  than the current time point (i.e., in the future), in which the
  resource (the person in charge) is different from the resource at
  the time point $\mathit{i+1}$ (i.e., the next time point), their
  groups are the same, and the next time point is still not later than
  the last time point}.
As for the target expression, some simple examples would be some
strings such as ``Ping-Pong'' and ``Not Ping-Pong''. Based on these,
we can create an example of an \analrule $\arPingPongExample$ as
follows:
\begin{center}
$
  \arPingPongExample = \ruletup{ \condPingPonga \targetarrow \mbox{``Ping-Pong''}, ~ \mbox{``Not Ping-Pong''} },
$
\end{center}
where $\condPingPonga$ is as above.  In this case,
$\arPingPongExample$ specifies a task for predicting the ping-pong
behaviour. In the prediction model creation phase, we will create a
classifier that classifies (partial) traces based on whether they
satisfy $\condPingPonga$ or not (i.e., a trace will be classified into
``Ping-Pong'' if it satisfies $\condPingPonga$, otherwise it will be
classified into ``Not Ping-Pong'').  During the prediction phase, such
classifier can be used to predict whether a given (partial) trace will
lead to ping-pong behaviour or not.
 
The target expression can be more complex than merely a string. For
instance, it can be an expression that involves arithmetic operations
over numeric values such as
%
%

\medskip
\noindent
$\targetremtime =$ \\
\hspace*{10mm}$\eventquery{\last}{time:timestamp}~-~\eventquery{\curr}{time:timestamp}$,
\footnote{Note that, as usual, a timestamp can be represented as
  milliseconds since Unix epoch (i.e., the number of milliseconds that
  have elapsed since Jan 1, 1970 00:00:00 UTC). 
}

\medskip
\noindent
where $\eventquery{\last}{time:timestamp}$ refers to the timestamp
of the last event and $\eventquery{\curr}{time:timestamp}$ refers
to the timestamp of the current event. 
Essentially, the expression $\targetremtime$ computes
\textit{the time difference between the timestamp of the last event
  and the current event (i.e., remaining processing time)}. Then we
can create an \analrule
\[
\arRemainingTimeExample = \ruletup{ \curr < \last \targetarrow \targetremtime, ~0}, 
\]
which specifies a task for predicting the remaining processing time,
because $\arRemainingTimeExample$ maps each (partial) trace into its
remaining processing time. In this case, during the prediction model
creation phase, we will create a regression model 
for predicting the remaining processing time of a given (partial)
trace.
%
\Cref{sec:showcase} provides more examples of prediction tasks
specification using our language.


\subsection{Towards Formalizing the Condition and Target
  Expressions}\label{sec:formalization-cond-target-exp}
%
%
This section is devoted to introduce several components that are
needed to define the language for specifying condition and target
expressions in \Cref{sec:FOE}.

As we have seen in
\Cref{sec:overview-lang}, 
we often need to refer to a particular index of an event within a
trace.
Recall the expression $\eventquery{i\!~\!+\!~\!1}{org:group}$ that refers to
the $\text{org:group}$ attribute value of the event at the index $i\!~\!+\!~\!1$, and
also the expression $\eventquery{\last}{time:timestamp}$ that
refers to the timestamp of the last event. The former requires us to
refer to the event at the index $i\!~\!+\!~\!1$, while the latter requires us to
refer to the last event in the trace.  
To capture this, we introduce the notion of \emph{index
  expression} $\idx$ defined as follows:
\[
\begin{array}{l@{ }c@{ }l}
  \idx &~::=~& i ~\mid~ \nat ~\mid~ \last ~\mid~ \curr ~\mid~ 
\idx_1\!~\!+\!~\!\idx_2 ~\mid~ \idx_1\!~\!-\!~\!\idx_2\\
\end{array}
\]
where
\begin{inparaenum}[\itshape (i)]
\item $i$ is an \emph{index variable}.
\item $\nat$ is a positive integer (i.e., $\nat \in \mathbb{Z}^+$).
\item $\last$ and $\curr$ are special indices in which the former
  refers to the index of the last event in 
  a trace, and the latter refers to the index of the current event
  (i.e., last event of the trace prefix 
  under consideration). For instance, given a $k$-length trace prefix
  $\prefix{k}{\trace}$ of the trace $\trace$, $\curr$ is equal to $k$
  (or $\card{\prefix{k}{\trace}}$), and $\last$ is equal to
  $\card{\trace}$.
\item $\idx + \idx$ and $\idx - \idx$ are the usual arithmetic addition and
  subtraction operations over indices.
\end{inparaenum}

The semantics of index expression is defined over traces and
considered trace prefix length. 
%
Since an index expression can be a variable, 
given a trace $\trace$ and a considered trace prefix length $k$, 
we first introduce a \emph{variable valuation} $\val$, i.e., a
mapping from index variables into $\mathbb{Z}^+$.
%
We assign meaning to index expression by associating to
$\trace$, $k$, and $\val$ an \emph{interpretation function}
$\inter{\cdot}{\trace}{k}{\val}$ which maps an index expression into
$\mathbb{Z}^+$. Formally, $\inter{\cdot}{\trace}{k}{\val}$ is
inductively defined as follows:
\[
\begin{array}{r@{\quad}c@{\quad}l }
\inter{i}{\trace}{k}{\val} &=& \val(i) \\
\inter{\nat}{\trace}{k}{\val} &=& \nat \in \mathbb{Z}^+ \\
\inter{\curr}{\trace}{k}{\val} &=& k \\
\inter{\last}{\trace}{k}{\val} &=& \card{\trace} \\
\inter{\idx_1~+~\idx_2}{\trace}{k}{\val} &=&\inter{\idx_1}{\trace}{k}{\val}~+~\inter{\idx_2}{\trace}{k}{\val} \\
\inter{\idx_1~-~\idx_2}{\trace}{k}{\val} &=& \inter{\idx_1}{\trace}{k}{\val}~-~\inter{\idx_2}{\trace}{k}{\val}
\end{array}
\]
The definition above says that the interpretation
function $\inter{\cdot}{\trace}{k}{\val}$ interprets index expressions
as follows:
\begin{inparaenum}[\itshape (i)]
\item each variable is interpreted based on how the variable valuation
  $\val$ maps the corresponding variable into a positive integer in
  $\mathbb{Z}^+$;
\item each positive integer is interpreted as itself, e.g.,
  $\inter{2603}{\trace}{k}{\val} = 2603$;
\item $\curr$ is interpreted into $k$;
\item $\last$ is interpreted into $\card{\trace}$; and
\item the arithmetic addition/subtraction operators are interpreted as
  usual.  
\end{inparaenum}



To access the value of an event attribute, we introduce
so-called \emph{event attribute accessor}, which is an expression of
the form
\begin{center}
$
  \eventquery{\idx}{attName}
$
\end{center}
where 
$\text{\textit{attName}}$ is an attribute name 
and $\idx$ is an index expression.
To define the semantics of event attribute accessor, 
we extend the definition of our interpretation function
$\inter{\cdot}{\trace}{k}{\val}$ such that it interprets an event
attribute accessor expression into the attribute value of the
corresponding event at the given index. 
Formally, $\inter{\cdot}{\trace}{k}{\val}$ is defined as follows:
\[
  \inter{\eventquery{\idx}{attName}}{\trace}{k}{\val}
  = \left\{ \begin{array}{l@{ \ \ }l}
\attval{attName}(e) &\mbox{{\normalsize if }}
\begin{array}[t]{l}
                                        \inter{\idx}{\trace}{k}{\val}
                                        = i, \\
 1 \leq i \leq \card{\trace},\\\mbox{{\normalsize and }} e = \trace(i) 
\end{array}
              \\
\ & \ \\
\udefined & \mbox{{\normalsize otherwise}}
          \end{array}
\right.
\]
Note that the above definition also says that if the event attribute
accessor refers to an index that is beyond the valid event indices in
the corresponding trace, then we will get undefined value (i.e.,
$\udefined$).


As an example of event attribute accessor, the expression
$\eventquery{i}{org:resource}$ refers to the value of the
attribute $\text{org:resource}$ of the event at the position $i$.

\begin{example}
  Consider the trace $\trace = \tup{e_1, e_2, e_3, e_4, e_5}$, let
  ``Bob'' be the value of the attribute org:resource of the event
  $e_3$ in $\trace$, i.e.,
  $\attval{org:resource}(e_3) = \mbox{``Bob''}$, and $e_3$ does not
  have any attributes named org:group, i.e.,
  $\attval{org:group}(e_3) = \udefined$. In this example, we have that
  $\inter{\eventquery{3}{org:resource}}{\trace}{k}{\val}~=~\mbox{``Bob''}$,
  and
  $\inter{\eventquery{3}{org:group}}{\trace}{k}{\val}~=~\udefined$.
\end{example}

The value of an event attribute within a trace can be either numeric (e.g., 26, 3.86)
or non-numeric (e.g., ``sendOrder''), and we might want to specify
properties that involve arithmetic operations over numeric values. 
%
Thus, we introduce the notion of \emph{numeric expression} and
\emph{non-numeric expression} 
as follows:
\[
\begin{array}{l@{ \ }c@{ \ }l} 
  \nonnumexpb &~::=~& \true ~\mid~ \false
                      ~\mid~ \str ~\mid~ \\
              &&
                 \eventquery{\idx}{NonNumericAttribute}
  \\ \\
  \numexpb &~::=~&\num ~\mid~ \idx ~\mid~ \\ &&
                                                \eventquery{\idx}{NumericAttribute}~\mid~ \\
              &&\numexpb_1 +
                 \numexpb_2 ~\mid~
  \\ &&
        \numexpb_1 -
        \numexpb_2
\end{array}
\]
where 
\begin{inparaenum}[\itshape (i)]
\item $\true$ and $\false$ are the usual boolean values,
\item $\str$ is the usual string (i.e., a sequence of characters),
\item $\num$ is a real number,
\item $\eventquery{\idx}{NonNumericAttribute}$ is
  an event attribute accessor for accessing an attribute with non-numeric
  values, and $\eventquery{\idx}{NumericAttribute}$ is
  an event attribute accessor for accessing an attribute with numeric
  values,
\item $\numexpb_1 + \numexpb_2$ and $\numexpb_1 - \numexpb_2$ are the
  usual arithmetic operations over numeric expressions.
\end{inparaenum}

To give the semantics for \emph{numeric expression} and
\emph{non-numeric expression}, we extend the definition of our
interpretation function $\inter{\cdot}{\trace}{k}{\val}$ by
interpreting $\true$, $\false$, $\str$, and $\num$ as themselves,
e.g., 
\begin{center}
$\inter{3}{\trace}{k}{\val}~=~3$,\\
$\inter{\text{``sendOrder"}}{\trace}{k}{\val}~=~\text{``sendOrder"}$,
\end{center}
and by interpreting the arithmetic operations as usual, e.g.,
\begin{center}
$\inter{26~+~3}{\trace}{k}{\val}~=~\inter{26}{\trace}{k}{\val}~+~\inter{3}{\trace}{k}{\val}~=~26~+~3~=~29$,\\
$\inter{86~-~3}{\trace}{k}{\val}~=~\inter{86}{\trace}{k}{\val}~-~\inter{3}{\trace}{k}{\val}~=~86~-~3~=~83$.
\end{center}
Formally, we extend our interpretation function as follows:
\[
\begin{array}{rcl} 
     \inter{\true}{\trace}{k}{\val}
     &=& \true \\
     \inter{\false}{\trace}{k}{\val}
     &=& \false \\
     \inter{\str}{\trace}{k}{\val}
     &=& \str \\
     \inter{\num}{\trace}{k}{\val}
     &=& \num \\
     \inter{\numexpb_1 + \numexpb_2}{\trace}{k}{\val}
     &=& \inter{\numexpb_1}{\trace}{k}{\val} + \inter{\numexpb_2}{\trace}{k}{\val} \\
  \inter{\numexpb_1 - \numexpb_2}{\trace}{k}{\val}
  &=& \inter{\numexpb_1}{\trace}{k}{\val} - \inter{\numexpb_2}{\trace}{k}{\val}
\end{array}
\]
Note that the value of an event attribute might be undefined, i.e., it
is equal to $\udefined$. In this case, we define that the arithmetic
operations involving $\udefined$ give $\udefined$,
e.g.,~$26~+~\udefined~=~\udefined$.

We now define the notion of \emph{event expression} as a comparison
between either numeric expressions or non-numeric expressions.
Formally, it is defined as follows:
\[
\begin{array}{l@{ \ }r@{ \ }l@{ \ }l}
     \eventexpshort  &~::=~& \true ~\mid~ \false ~\mid~&\\
                     &&\numexpb_1 ~\logeq~ \numexpb_2 &~\mid~ \\
                     &&\numexpb_1 ~\noteq~ \numexpb_2 &~\mid~ \\
                     &&\numexpb_1 ~<~ \numexpb_2 &~\mid~ \\
                     &&\numexpb_1 ~>~ \numexpb_2 &~\mid~ \\
                     &&\numexpb_1 ~\leq~ \numexpb_2 &~\mid~ \\
                     &&\numexpb_1 ~\geq~ \numexpb_2 &~\mid~ \\
                     && \nonnumexpb_1 ~\logeq~ \nonnumexpb_2 & ~\mid~ \\
                     && \nonnumexpb_1 ~\noteq~ \nonnumexpb_2 
             \end{array}
\]
%
%
where 
\begin{inparaenum}[\itshape (i)]
\item $\numexpb$ is a numeric expression;
\item $\nonnumexpb$ is a non-numeric expression;
\item the operators $\logeq$ and $\noteq$ are the usual logical
  comparison operators, namely \emph{equality} and \emph{inequality};
\item the operators $<$, $>$, $\leq$, and $\geq$ are the usual
  arithmetic comparison operators, namely \emph{less than},
  \emph{greater than}, \emph{less than or equal}, and \emph{greater
    than or equal}.
\end{inparaenum}

\begin{example} The expression
\[
  \eventquery{i}{org:resource} \noteq \eventquery{i+1}{org:resource}
\]
is an example of an event expression which says that the resource at
the time point $i$ is different from the resource at the time point
$i+1$. As another example, the expression
\[
\eventquery{i}{concept:name} \logeq \text{``OrderCreated"}
\]
is an event expression saying that the value of the attribute
concept:name of the event at the index $i$ is equal to
$\text{``OrderCreated"}$.
\end{example}

We interpret each logical/arithmetic comparison operator (i.e.,
$\logeq$, $\neq$, $<$, $>$, etc) in the event expressions as
usual. For instance, the expression $26 \geq 3$ is interpreted as
$\true$, while the expression ``receivedOrder'' $\logeq$ ``sendOrder''
is interpreted as $\false$. Additionally, any comparison involving
undefined value ($\udefined$) is interpreted as false. 
It is easy to see how to extend the formal definition of our
interpretation function $\inter{\cdot}{\trace}{k}{\val}$ towards
interpreting event expressions, therefore 
we omit the details.

\subsubsection{Adding Aggregate Functions}\label{sec:add-aggregate-func}

We now extend the notion of \emph{numeric expression} and
\emph{non-numeric expression} by adding several numeric and
non-numeric aggregate functions. A numeric (resp.\ non-numeric)
aggregate function is a function that performs an aggregation
operation over some values and return a numeric (resp.\ non-numeric)
value. Before providing the formal syntax and semantics of our
aggregate functions, in the following we illustrate the needs of
having aggregate functions and we provide some intuition on the shape
of our aggregate functions.

Suppose that each event in each trace has an attribute named
cost. Consider the situation where we want to specify a task for
predicting the total cost of all activities (from the first until the
last event) within a trace. In this case, we need to sum up all values
of the cost attribute in all events. To express this need, we
introduce the aggregate function $\aggsum$ and we can specify the
notion of total cost as follows:
\[
\usuma{\eventquery{x}{cost}}{1}{\last}.
\]
The expression above computes the sum of the values of
$\eventquery{x}{cost}$ for all ${x \in \set{1,\ldots,\last}}$.
In this case $x$ is called \emph{aggregation variable}, the expression
$\eventquery{x}{cost}$ specifies the \emph{aggregation source}, i.e.,
the source of the values to be aggregated, and the expression
${x = 1 : \last}$ specifies the \emph{aggregation range} by
defining the range of the aggregation variable~$x$.

In some situation, we might only be interested to compute the total
cost of a certain activity.
E.g., the total cost of all validation activities within a trace. To
do this, we introduce the notion of \textit{aggregation condition},
which allows us to select only some values that we want to
aggregate. For example, the expression
\[
\sumb{\eventquery{x}{cost}}{1}{\last}{\eventquery{x}{concept:name} \logeq \text{"Validation"}}.
\]
computes the sum of the values of the attribute $\eventquery{x}{cost}$
for all ${x \in \set{1,\ldots,\last}}$ in which 
the expression
\[
\eventquery{x}{concept:name} \logeq \text{"Validation"}
\]
is evaluated to true. Therefore, the summation only considers the values
of $x$ in which the activity name is $\text{"Validation"}$, 
and we only compute the total cost of all validation activities.
As before, $\eventquery{x}{cost}$ specifies the source of the values
to be aggregated, the expression ${x = 1 : \last}$ specifies the
\emph{aggregation range} by defining the range of the aggregation
variable $x$, and the expression
$
\eventquery{x}{concept:name}~\logeq~\text{"Validation"}
$
provides the \emph{aggregation condition}.

The expression for specifying the source of the values to be
aggregated can be more complex, for example when we want to compute
the average activity running time within a trace. In this case, the
running time of an activity is specified as the time difference
between the timestamp of that activity and the next activity, i.e.,
\[
  \eventquery{x+1}{time:timestamp} - \eventquery{x}{time:timestamp}.
\] 
Then, the average activity running time can be specified as follows:
\[
\uavgc{\eventquery{x+1}{time:timestamp} - \eventquery{x}{time:timestamp}}{1}{\last}.
\]
Essentially, the expression above computes the average of the time
difference between the activity at the timepoint ${x+1}$ and $x$,
where ${x \in \set{1,\ldots, \last}}$.

In other cases, we might not be interested in aggregating the
data values but we are interested in counting the number of a certain
activity/event. 
To do this, we introduce the aggregate function $\aggcount$. As an
example, we can specify an expression to count the number of
validation activities within a trace as follows:
\[
\countb{\eventquery{x}{concept:name} \logeq \text{"validation"}}{1}{\last}
\]
where $\eventquery{x}{concept:name}~\logeq~\text{"validation"}$ is an
\emph{aggregation condition}. The expression above counts
how many times the specified aggregation condition is true within the
specified range. Thus, in this case,
it counts the number of the events between the first and the last
event, in which the activity name is $\text{"validation''}$.

We might also be interested in counting the number of different values
of a certain attribute within a trace. For example, we might be
interested in counting the number of different resources that are
involved within a trace. To capture this, we introduce the aggregate
function $\aggcountval$. We can then specify the expression to count
the number of different resources between the first and the last event
as follows:
\[
\countvala{org:resource}{1}{\last}
\]
where 
\begin{inparaenum}[\itshape (i)]
\item org:resource is the name of the attribute in which we want to
  count its number of different values; and
\item the expression ``$\aggwithin~1~:~\last$'' is the \emph{aggregation
    range}.
\end{inparaenum}








We will see later in \Cref{sec:showcase} that the presence of
aggregate functions allows us to express numerous interesting
prediction tasks.
Towards formalizing the aggregate functions, we first formalize the
notion of \emph{aggregation conditions}. An aggregation condition is
an unquantified First Order Logic (FOL)~\cite{Smul68} formula where
the atoms are event expressions and may use only a single unquantified
variable, namely the aggregation variable. The values of the
unquantified/free variable in aggregation conditions is ranging over
the specified aggregation range in the corresponding aggregate
function. Formally \emph{aggregation conditions} are defined as
follows:
\[
\begin{array}{l}
  \aggcondexp ~::=~ \eventexpshort ~\mid~ \neg \psi ~\mid~ \psi_1
  \wedge \psi_2 ~\mid~ \psi_1 \vee \psi_2 
\end{array}
\]
where $\eventexpshort$ is an event expression, and the semantics of
$\aggcondexp$ is based on the usual FOL semantics. 
Formally, we extend the definition of our interpretation function
$\inter{\cdot}{\trace}{k}{\val}$ as follows:
\[
\begin{array}{l@{ \ }l@{ \ \ }c@{\ }l}
  \satisfyb{\trace}{k}{\val}{\neg\psi}&= \true, &\mbox{{\normalsize if }}&
                                                                \satisfyb{\trace}{k}{\val}{\psi}
  = \false\\
  \satisfyb{\trace}{k}{\val}{\psi_1 \fandcompact \psi_2}&= \true, &\mbox{{\normalsize
                                                                          if }}&
                                                                                 \satisfyb{\trace}{k}{\val}{\psi_1}
                                                                                 =
                                                                                 \true
                                                                                 \mbox{{\normalsize,
                                                                                 and
                                                                                 }}\satisfyb{\trace}{k}{\val}{\psi_2}
                                                   =
                                                   \true\\
  \satisfyb{\trace}{k}{\val}{\psi_1 \forcompact \psi_2}&= \true, &\mbox{{\normalsize
                                                                         if }}&
                                                                                \satisfyb{\trace}{k}{\val}{\psi_1}
                                                                                =
                                                                                \true
                                                                                \mbox{{\normalsize,
                                                                                or
                                                                                \
                                                                                }}
                                                                                \satisfyb{\trace}{k}{\val}{\psi_2}
                                                                                =
                                                                                \true\\
\end{array}
\]

With this machinery in hand, we are ready to define the syntax and the
semantics of numeric and non-numeric aggregate functions.  We first
extend the syntax of the numeric and non-numeric expressions by adding
the \emph{numeric and non-numeric aggregate functions} as follows:



\medskip
\noindent
$  \nonnumexpb ~::=~ $\\
\begin{tabular}{@{\quad}l}
  $\true ~\mid~ \false ~\mid~ \str ~\mid~ \eventquery{\idx}{NonNumericAttribute} ~\mid~$\\
  $\concata{\aggnonnumsrc}{\st}{\ed}{\aggcondexp}$
\end{tabular}

\medskip
\noindent
$  \numexpb ~::=~$$\num$ $~\mid~$ \hspace*{0.7mm}$\idx$\hspace*{0.7mm} $~\mid~$
$\eventquery{\idx}{NumericAttribute}$\hspace*{1.7mm} $\mid~$\\
\begin{tabular}{@{\quad}l@{\ }l@{}l}
  $\numexpb_1~+~\numexpb_2$&$~\mid~\numexpb_1~-~\numexpb_2$&$\mid~$\\
  \multicolumn{2}{@{\quad}l}{$\suma{\aggnumsrc}{\st}{\ed}{\aggcondexp}$}&$\mid~$\\
  \multicolumn{2}{@{\quad}l}{$\avga{\aggnumsrc}{\st}{\ed}{\aggcondexp}$}&$\mid~$\\
  \multicolumn{2}{@{\quad}l}{$\mina{\aggnumsrc}{\st}{\ed}{\aggcondexp}$}&$\mid~$\\
  \multicolumn{2}{@{\quad}l}{$\maxa{\aggnumsrc}{\st}{\ed}{\aggcondexp}$}&$\mid~$\\
  $\mintwo{\numexpb_1}{\numexpb_2}$&$~\mid~\maxtwo{\numexpb_1}{\numexpb_2}$&$\mid~$\\
  \multicolumn{2}{@{\quad}l}{$\counta{\aggcondexp}{\st}{\ed}$}&$\mid~$\\
  \multicolumn{2}{@{\quad}l}{$\countvala{attName}{\st}{\ed}$}
\end{tabular}



\medskip
\noindent
where 
\begin{inparaenum}[\itshape (i)]
%
\item $\num$, $\idx$, $\eventquery{\idx}{NumericAttribute}$,
  $\eventquery{\idx}{NonNumericAttribute}$, ${\numexpb_1+\numexpb_2}$,
  and ${\numexpb_1 - \numexpb_2}$ are as before;
\item $\st$ and $\ed$ are either positive integers (i.e.,
  $\st \in \mathbb{Z}^+$ and $\ed \in \mathbb{Z}^+$) or special
  indices (i.e., $\last$ or $\curr$), and $\st \leq \ed$;
\item $x$ is a variable called \emph{aggregation variable}, and the
  range of its value is between $\st$ and $\ed$ (i.e.,
  ${\st \leq x \leq \ed}$). The expression ${\aggrange~x=\st : \ed}$ as
  well as ${\aggwithin~x = \st : \ed}$ are called \emph{aggregation
    variable range};
%
%
\item $\aggnumsrc$ and $\aggnonnumsrc$ specify the source of the
  values to be aggregated. The $\aggnumsrc$ is specified as numeric
  expression while $\aggnonnumsrc$ is specified as non-numeric
  expression. Both of them may and can only use the corresponding
  aggregation variable $x$, and they cannot contain any aggregate
  functions;
%
%
\item \aggcondexp is an \emph{aggregation condition} over the corresponding
  aggregation variable $x$ and no other variables are allowed to
  occur in \aggcondexp;
%
\item $\text{attName}$ is an attribute name;
\item For the aggregate functions, as the names describe, \aggsum
  stands for summation, \aggavg stands for average, \aggmin stands for
  minimum, \aggmax stands for maximum, \aggcount stands for counting,
  \aggcountval stands for counting values, and \aggconcat stands for
  concatenation. The behaviour of these aggregate functions is quite
  intuitive. Some intuition has been given previously and we explain
  their details behaviour while providing their formal semantics
  below. The aggregate functions \aggsum, \aggavg, \aggmin, \aggmax,
  \aggconcat that have aggregation conditions \aggcondexp are also
  called \emph{conditional aggregate functions}.
%
\end{inparaenum}


Notice that a numeric aggregate function is also a numeric expression
and a numeric expression is also a component of a numeric aggregate
function (either in the source value or in the aggregation
condition). Hence, it may create some sort of nested aggregate
function.  However, to simplify the presentation, in this work we do
not allow nested aggregation functions of this form, but technically
it is possible to do that under a certain care on the usage of the
variables (Similarly for the non-numeric aggregate function).


To formalize the semantics of aggregate functions, we first introduce
some notations. Given a variable valuation $\val$, we write
$\val[x \mapsto d]$ to denote a new variable valuation obtained from
the variable valuation $\val$ as follows:
\[
\val[x \mapsto d](y)
  = \left\{ \begin{array}{l@{ \qquad}l}
              d &\mbox{{\normalsize if  }} y = x\\
              \val(y) &\mbox{{\normalsize if }} y \neq x
          \end{array}
\right.
\]
Intuitively, $\val[x \mapsto d]$ substitutes each variable $x$ with
$d$, while the other variables (apart from $x$) are substituted the
same way as $\val$ is defined. Given a conditional summation aggregate
function 
\[
\suma{\aggnumsrc}{\st}{\ed}{\aggcondexp},
\]
a trace $\trace$, a considered trace prefix length $k$, and a variable
valuation $\val$, we define its corresponding \emph{set $\idxset$ of
  valid aggregation indices} as follows:
\[
\begin{array}{ll}
  \idxset = \set{ d \in \mathbb{Z}^+~\mid~ &\st \leq d \leq \ed, \inter{\aggcondexp
                                              }{\trace}{k}{\val[x~\mapsto~d]}
                                              = \true, \\
                                            &\text{and } \inter{\aggnumsrc
                                              }{\trace}{k}{\val[x~\mapsto~d]} \neq \udefined}.
\end{array}
\]
basically, $\idxset$ collects the values within the given aggregation
range (i.e., between $\st$ and $\ed$), in which, by substituting the
aggregation variable $x$ with those values, the aggregation condition
$\aggcondexp$ is evaluated to $\true$ and $\aggnumsrc$ is not
evaluated to undefined value $\udefined$.
For the other conditional aggregate functions $\aggavg$, $\aggmax$,
$\aggmin$, and $\aggconcat$,
the corresponding set of valid aggregation indices can be defined
similarly. 
%

\begin{example}\label{ex:idx-set}
  Consider the trace $\trace~=~\tup{e_1, e_2, e_3, e_4}$, let
  $\text{"validation"}$ be the value of the attribute
  $\text{concept:name}$ of the event $e_2$ and $e_4$ in $\trace$,
  i.e.,
  \begin{center}
    $\attval{concept:name}(e_2) = \attval{concept:name}(e_4) =
    \text{"validation"}$.
  \end{center}
  Moreover, let $\attval{concept:name}(e_1)~=~\text{"initialization"}$
  and $\attval{concept:name}(e_3)~=~\text{"assembling"}$.  Suppose
  that the cost of each activity is the same, let say it is equal to
  3, i.e.,
  \begin{center}
    $\attval{cost}(e_1) = \attval{cost}(e_2) = \attval{cost}(e_3) =
    \attval{cost}(e_4) = 3$,
  \end{center}
  and we have the following aggregate function specification:
  \[
    \suma{\eventquery{x}{cost}}{1}{\last}{\true},
  \]
  and
  \[
    \sumb{\eventquery{x}{cost}}{1}{\last}{\eventquery{x}{cost} \logeq
      \text{"validation"}}
  \]
  The former computes the total cost of all activities while the
  latter computes the total cost of validation activities. In this
  case, the corresponding set of the valid aggregation indices (with
  respect to the given trace $\trace$) for the first aggregate
  function is $\idxset_1 = \set{1, 2, 3, 4}$, while 
  for the second aggregate function we have $\idxset_2 = \set{2, 4}$
  because the second aggregate function requires that the activity
  name (i.e., the value of the attribute $\text{concept:name}$) to be
  equal to $\text{"validation"}$ and it is only true when $x$ is equal
  to either 2 or 4.
\end{example}

Having this machinery in hand, we are now ready to formally define the
semantics of aggregate functions.  The formal semantics of the
conditional aggregate functions $\aggsum$, $\aggavg$, $\aggmax$,
$\aggmin$
is provided in
\Cref{fig:semantics-aggregate-function-conditional}. Intuitively, the
aggregate function $\aggsum$ computes
%
%
the sum of the values that are obtained from the evaluation of the
specified numeric expression $\aggnumsrc$ over the specified
aggregation range (i.e., between $\st$ and $\ed$). Additionally, the
computation of the summation ignores undefined values and it only
considers those indices within the specified aggregation range in
which the aggregation condition is evaluated to true. The intuition
for the aggregate functions $\aggavg$, $\aggmax$, $\aggmin$ is
similar, except that $\aggavg$ computes the average, $\aggmax$
computes the maximum values, and $\aggmin$ computes the minimum
values.

\begin{example}\label{ex:conditional-aggregation-semantics}
  Continuing \Cref{ex:idx-set}, the first aggregate function is
  evaluated to 12 because we have that $\idxset_1 = \set{1,2,3,4}$,
  and
\[
\begin{array}{lll}
  \sum\limits_{d \in \idxset_1}
  \inter{\eventquery{x}{cost}}{\trace}{k}{\val[x~\mapsto~d]} &=& 
                                                                 \inter{\eventquery{x}{cost}}{\trace}{k}{\val[x~\mapsto~1]}~+\\
                                                             &&\inter{\eventquery{x}{cost}}{\trace}{k}{\val[x~\mapsto~2]}~+\\
                                                             &&\inter{\eventquery{x}{cost}}{\trace}{k}{\val[x~\mapsto~3]}~+\\
                                                             &&\inter{\eventquery{x}{cost}}{\trace}{k}{\val[x~\mapsto~4]} \\
                                                             &=& 12.
\end{array}
\]
On the other hand, the second aggregate
function is evaluated to 6 because we have that $\idxset_2 =
\set{2,4}$, and 
\[
\begin{array}{lll}
  \sum\limits_{d \in \idxset_2}
  \inter{\eventquery{x}{cost}}{\trace}{k}{\val[x~\mapsto~d]} &=& 
                                                             \inter{\eventquery{x}{cost}}{\trace}{k}{\val[x~\mapsto~2]}~+\\
                                                             &&\inter{\eventquery{x}{cost}}{\trace}{k}{\val[x~\mapsto~4]} \\
                                                             &=& 6
\end{array}
\]
\end{example}

\begin{figure*}[btp]
\centering
{
\begin{tabular}{|r@{ \ }c@{ \ }l|}
\hline
&& \\[-1.5ex]
$\inter{\suma{\aggnumsrc}{\st}{\ed}{\aggcondexp}}{\trace}{k}{\val}$ &=& 
$\left\{ \begin{array}{l@{\qquad \qquad \ \qquad \qquad}l}
\sum\limits_{d \in \idxset} \inter{\aggnumsrc
  }{\trace}{k}{\val[x~\mapsto~d]}&\text{if  }
                                                               \idxset
           \neq \emptyset\\[2ex]
              \udefined &\mbox{otherwise}
          \end{array}
\right.$\\[3.9ex] 
$\inter{\avga{\aggnumsrc}{\st}{\ed}{\aggcondexp}}{\trace}{k}{\val}$ &=& 
$\left\{ \begin{array}{l@{\qquad\ \ \ \quad}l}
\left(\sum\limits_{d \in \idxset} \inter{\aggnumsrc
  }{\trace}{k}{\val[x~\mapsto~d]}\right)~/~\card{\idxset}&\text{if }
                                                               \idxset\neq \emptyset\\[2ex]
              \udefined &\mbox{otherwise}
          \end{array}
\right.$\\[3.9ex] 
  $ \inter{\maxa{\aggnumsrc}{\st}{\ed}{\aggcondexp}}{\trace}{k}{\val}$
                                                                    &=& 
$\left\{ \begin{array}{l@{\qquad}l}
           \inter{\aggnumsrc
           }{\trace}{k}{\val[x~\mapsto~i]}
           &\mbox{if there exists } i \in \idxset
             \mbox{ such that}\\
           &\text{for each } j \in \idxset \text{ we have that  }  \\
           &{\scriptsize \inter{\aggnumsrc
           }{\trace}{k}{\val[x~\mapsto~i]}
           \geq \inter{\aggnumsrc
           }{\trace}{k}{\val[x~\mapsto~j]} } \\[1.5ex]
              \udefined &\mbox{otherwise}
          \end{array}
\right.$\\[6.5ex] 
  $ \inter{\mina{\aggnumsrc}{\st}{\ed}{\aggcondexp}}{\trace}{k}{\val}$
                                                                    &=& 
$\left\{ \begin{array}{l@{\qquad}l}
           \inter{\aggnumsrc
           }{\trace}{k}{\val[x~\mapsto~i]}
           &\mbox{if there exists } i \in \idxset
             \mbox{ such that}\\
           &\text{for each } j \in \idxset \text{ we have that  }  \\
           &{\scriptsize \inter{\aggnumsrc
           }{\trace}{k}{\val[x~\mapsto~i]}
           \leq \inter{\aggnumsrc
           }{\trace}{k}{\val[x~\mapsto~j]} } \\[1.5ex]
              \udefined &\mbox{otherwise}
          \end{array}
\right.$\\[6.5ex] 
  \multicolumn{3}{|l|}{where \ 
  $\idxset = \set{ d \in \mathbb{Z}^+ \mid \st \leq d \leq
  \ed,~\inter{\aggcondexp }{\trace}{k}{\val[x~\mapsto~d]} = \true,~
  \text{ and }~\inter{\aggnumsrc }{\trace}{k}{\val[x~\mapsto~d]} \neq
  \udefined}$.
  }\\[1.5ex]
\hline
\end{tabular}
}
\caption{Formal Semantics of Aggregate Functions $\aggsum$,
  $\aggavg$, $\aggmax$, $\aggmin$ in the presence of aggregation conditions}
\label{fig:semantics-aggregate-function-conditional}
\end{figure*}

\begin{figure*}[btp]
\centering
{
\begin{tabular}{@{}|l|@{}}
\hline
\ \\[-1.5ex]
$
\begin{array}{l}
\inter{\concata{\aggnonnumsrc}{\st}{\ed}{\aggcondexp}}{\trace}{k}{\val}
  =\\
\qquad\left\{ \begin{array}{l@{\quad}l}
                \emptystr&\mbox{{\normalsize
                                                                                     if }}\ \st > \ed
                \\[1ex]
                \inter{\aggnonnumsrc}{\trace}{k}{\val[x~\mapsto~\st]}\
                \concatoperand 
                \inter{\concata{\aggnonnumsrc}{\st +
                1}{\ed}{\aggcondexp}}{\trace}{k}{\val}
                         &\mbox{{\normalsize
                           if }}\ \st \leq \ed,\\
                         &\quad\inter{\aggnonnumsrc}{\trace}{k}{\val[x~\mapsto~\st]}
                           \neq \udefined\\
                         &\quad\mbox{and } \inter{\aggcondexp}{\trace}{k}{\val[x~\mapsto~\st]}
                           =
                           \true\\[1ex]
                \emptystr\concatoperand \inter{\concata{\aggnonnumsrc}{\st
                + 1}{\ed}{\aggcondexp}}{\trace}{k}{\val}&\mbox{{\normalsize
                                                          if }}\ \st \leq \ed
                                                          \mbox{ and either}
                \\
                         &\quad \inter{\aggnonnumsrc}{\trace}{k}{\val[x~\mapsto~\st]}
                           = \udefined\\
                                                          &\quad
                                                            \mbox{or }\inter{\aggcondexp}{\trace}{k}{\val[x~\mapsto~\st]}
                                                          = \false                \\[1ex]
%
%
              \udefined &\mbox{otherwise}
          \end{array}
\right.
\end{array}
$                       
\\[3ex]
  where $\concatoperand$ is a concatenation operator that simply concatenates
  two non-numeric values.\\[1ex]
  \hline
\end{tabular}
}
\caption{Formal Semantics of the Aggregate Function $\aggconcat$}
\label{fig:semantics-aggregate-function-concat}
\end{figure*}

The aggregate function $\maxtwo{\numexpb_1}{\numexpb_2}$ computes the
maximum value between the two values that are obtained by evaluating
the specified two numeric expressions $\numexpb_1$ and
$\numexpb_2$. It gives undefined value $\udefined$ if one of them is
evaluated to undefined value $\udefined$
%
%
(Similarly for the aggregate function
$\mintwo{\numexpb_1}{\numexpb_2}$ except that it computes the minimum
value).
Formally, the semantics of these functions is defined as follows:
\[
\begin{array}{l}
  \inter{\mintwo{\numexpb_1}{\numexpb_2}}{\trace}{k}{\val} =\\
  \quad \quad \left\{ \begin{array}{l@{ \ \ }l}
                         \inter{\numexpb_1}{\trace}{k}{\val} &\mbox{{\normalsize if  }}
                                                               \inter{\numexpb_1}{\trace}{k}{\val}
                                                               \leq \inter{\numexpb_2}{\trace}{k}{\val}\\
                         \inter{\numexpb_2}{\trace}{k}{\val} &\mbox{{\normalsize if  }}                                                     
                                                               \inter{\numexpb_1}{\trace}{k}{\val}
                                                               > \inter{\numexpb_2}{\trace}{k}{\val}\\
              \udefined &\mbox{otherwise}
          \end{array}
\right.
\end{array}
\]

\[
\begin{array}{l}
  \inter{\maxtwo{\numexpb_1}{\numexpb_2}}{\trace}{k}{\val} =\\
  \quad\quad \left\{ \begin{array}{l@{ \ \ }l}
                         \inter{\numexpb_1}{\trace}{k}{\val} &\mbox{{\normalsize if  }}
                                                               \inter{\numexpb_1}{\trace}{k}{\val}
                                                               \geq \inter{\numexpb_2}{\trace}{k}{\val}\\
                         \inter{\numexpb_2}{\trace}{k}{\val} &\mbox{{\normalsize if  }}                                                     
                                                               \inter{\numexpb_1}{\trace}{k}{\val}
                                                               < \inter{\numexpb_2}{\trace}{k}{\val}\\
              \udefined &\mbox{otherwise}
          \end{array}
\right.
\end{array}
\]

The formal semantics of the aggregate function $\aggcount$ is provided below
\[
\begin{array}{l}
 \inter{\counta{\aggcondexp}{\st}{\ed}}{\trace}{k}{\val} =\\
\quad \ \card{\set{ d \in \mathbb{Z}^+ \mid \st \leq d \leq \ed \text{, and }
    \inter{\aggcondexp }{\trace}{k}{\val[x~\mapsto~d]} = \true}}
\end{array}
\]
Intuitively, 
it counts how many times the $\aggcondexp$ is evaluated to true within
the given range, i.e., between $\st$ and $\ed$. This aggregate
function is useful to count the number of events/activities within a
certain range that satisfy a certain condition. For example, to count
the number of the activity named ``\textit{modifying delivery appointment}''
within a certain range in a trace.

The semantics of the aggregate function $\aggcountval$ is formally
defined as follows:
\[
\begin{array}{l}
  \inter{ \countvala{attName}{\st}{\ed} }{\trace}{k}{\val} =\\
  \qquad \ \card{\set{ v \mid 
  \inter{\eventquery{x}{attName}}{\trace}{k}{\val[x~\mapsto~d]} = v
  \text{, and }\st \leq d \leq \ed }},
\end{array}
\]
intuitively, it counts the number of all possible values of the
attribute $\text{attName}$ within all events between the given start
and end timepoints (i.e., between $\st$ and $\ed$).

The aggregate function $\aggconcat$ concatenates the values that are
obtained from the evaluation of the given non-numeric expression under
the valid aggregation range (i.e., we only consider the value within
the given aggregation range in which the aggregation condition is
satisfied). Moreover, the concatenation ignores undefined values and
treats them as empty string. The formal semantics of the aggregate
function $\aggconcat$ is provided in
\Cref{fig:semantics-aggregate-function-concat}.

Notice that, for convenience, we could easily extend our language with
unconditional aggregate functions by adding the following:\\[1.5ex]
\hspace*{7mm}$\usuma{\aggnumsrc}{\st}{\ed}$\\
\hspace*{7mm}$\uavga{\aggnumsrc}{\st}{\ed}$\\
\hspace*{7mm}$\umina{\aggnumsrc}{\st}{\ed}$\\
\hspace*{7mm}$\umaxa{\aggnumsrc}{\st}{\ed}$\\
\hspace*{7mm}$\uconcata{\aggnonnumsrc}{\st}{\ed}$\\[1.5ex]
%
In this case, they simply perform an aggregation computation over the
values that are obtained by evaluating the specified
numeric/non-numeric expression over the specified aggregation range.
However, they do not give additional expressive power since they are
only syntactic variant of the current conditional aggregate
functions. This is the case because we can simply put ``$\true$'' as
the aggregation condition,
e.g., $\suma{\aggnumsrc}{\st}{\ed}{\true}$. Based on their
semantics, 
we get the aggregate functions that behave as unconditional aggregate
functions. I.e., they ignore the aggregation condition since it will
always be true for every values within the specified aggregation
range. In the following, for the brevity of presentation, when
aggregation condition is not important we often simply use the
unconditional version of aggregate functions.

\subsection{\langname (\langnameabr)}\label{sec:FOE}

Finally, we are ready to define the language for specifying condition
expression, 
namely \langname (\langnameabr). A part of this language is also used
to specify target expression. 

An \langnameabr formula is a First Order Logic (FOL)~\cite{Smul68}
formula where the atoms are event expressions and the quantification
is ranging over event indices. Syntactically \langnameabr is defined
as follows:
\[
\begin{array}{l@{ }c@{ }c@{}c@{}c@{}c@{}c@{}c@{}c}
  \varphi &~::=~ &  \eventexpshort &~\mid~& \neg \varphi &~\mid~&
                                                                  \fforall
                                                                  i. \varphi
  &~\mid~& \ \ \fexists
           i. \varphi
           \
           \ 
           ~\mid~\\
          && \varphi_1 \wedge \varphi_2 &~\mid~& \varphi_1 \vee \varphi_2 &~\mid~& \varphi_1 \ra
                                                                                   \varphi_2 &&
\end{array}
\]
\noindent
Where
\begin{inparaenum}[\itshape (i)]
\item \eventexpshort is an event expression;
\item $\neg \varphi$ is negated \langnameabr formula;
\item $\fforall i. \varphi$ is an \langnameabr formula where the variable $i$ is
  universally quantified;
\item $\fexists i. \varphi$ is an \langnameabr formula where the
  variable $i$ is existentially quantified;
\item $\varphi_1 \wedge \varphi_2$ is a conjunction of \langnameabr formulas;
\item $\varphi_1 \vee \varphi_2$ is a disjunction of \langnameabr formulas;
\item $\varphi_1 \ra \varphi_2$ is an \langnameabr implication formula saying that
  $\varphi_1$ implies $\varphi_2$;
\item The notion of free and bound variables is as usual in FOL,
  except that the variables inside aggregate functions, i.e.,
  aggregation variables, are not considered as free variables;
\item The aggregation variables cannot be existentially/universally
  quantified.
\end{inparaenum}

The semantics of \langnameabr constructs is based on the usual FOL
semantics.
Formally, 
%
we extend the definition of our interpretation function
$\inter{\cdot}{\trace}{k}{\val}$
as follows\footnote{We assume that variables are standardized apart,
  i.e., no two quantifiers bind the same variable (e.g.,
  $\fforall i . \fexists i . (i > 3)$), and no variable occurs both
  free and bound (e.g., $(i > 5) \fand \fexists i . (i > 3)$). As
  usual in FOL, every \langnameabr formula can be transformed into a
  semantically equivalent formula where the variables are standardized
  apart by applying some variable renaming~\cite{Smul68}.}:

\[
\begin{array}{l@{ \ }l@{ \ \ }c@{\ }l}
%
%
  \satisfyb{\trace}{k}{\val}{\neg\varphi}&= \true, &\mbox{{\normalsize if }}&
                                                                \satisfyb{\trace}{k}{\val}{\varphi}
  = \false\\
  \satisfyb{\trace}{k}{\val}{\varphi_1 \fandcompact \varphi_2}&= \true, &\mbox{{\normalsize
                                                                          if }}&
                                                                                 \satisfyb{\trace}{k}{\val}{\varphi_1}
                                                                                 =
                                                                                 \true
                                                                                 \mbox{{\normalsize,
                                                                                 and
                                                                                 }}\satisfyb{\trace}{k}{\val}{\varphi_2}
                                                   =
                                                   \true\\
  \satisfyb{\trace}{k}{\val}{\varphi_1 \forcompact \varphi_2}&= \true, &\mbox{{\normalsize
                                                                         if }}&
                                                                                \satisfyb{\trace}{k}{\val}{\varphi_1}
                                                                                =
                                                                                \true
                                                                                \mbox{{\normalsize,
                                                                                or
                                                                                \
                                                                                }}
                                                                                \satisfyb{\trace}{k}{\val}{\varphi_2}
                                                                                =
                                                                                \true\\

  \satisfyb{\trace}{k}{\val}{\varphi_1 \fimplcompact \varphi_2}&= \true, &\mbox{{\normalsize
                                                                           if }}&
                                                                                  \satisfyb{\trace}{k}{\val}{\varphi_1}
                                                                                  =
                                                                                  \true
                                                                                  \mbox{{\normalsize,
                                                                                  implies
                                                                                  \
                                                                                  }}\\
                                               &&&   \satisfyb{\trace}{k}{\val}{\varphi_2}
                                                   =
                                                   \true\\
   \satisfyb{\trace}{k}{\val}{\exists i. \varphi}&= \true,
                                                        &\mbox{{\normalsize
                                                          if }}& \mbox{{\normalsize for some }} c
                                                                 \in \set{1,
                                                                 \ldots,
                                                                 \card{\trace}}\mbox{{\normalsize, we
                                                                 have }}  \\
                                               &&&
                                                   \mbox{{\normalsize
                                                   that }}
                                                   \satisfyb{\trace}{k}{\val[i~\mapsto~c]}{\varphi} = \true\\
%
%
   \satisfyb{\trace}{k}{\val}{\forall i. \varphi} &= \true,&\mbox{{\normalsize if }}& \mbox{{\normalsize for every }} c
                                                                                     \in \set{1,
                                                                                     \ldots,
                                                                                     \card{\trace}}
                                                                                     \mbox{{\normalsize, we have
                                                                                     }}   \\
                                        &&&\mbox{{\normalsize
                                            that }}
                                                   \satisfyb{\trace}{k}{\val[i~\mapsto~c]}{\varphi}
                                                   = \true
\end{array}
\]




\noindent
As before, $\val[i \mapsto c]$ substitutes each variable $i$ with
$c$, while the other variables are substituted the same way as $\val$
is defined.
%
%
When $\varphi$ is a closed formula, its truth value does not depend on
the valuation of the 
variables, and we denote the interpretation of $\varphi$ simply by
$\inter{\varphi}{\trace}{k}{}$.
%
%
We also say that the trace $\trace$ and the prefix length $k$ satisfy
$\varphi$, written $\pref{k}{\trace} \models \varphi$, if
$\inter{\varphi}{\trace}{k}{} = \true$. With a little abuse of
notation, sometimes we also say that the \emph{$k$-length trace prefix
  $\prefix{k}{\trace}$ of the trace $\trace$ satisfies $\varphi$},
written $\prefix{k}{\trace} \models \varphi$, if
$\pref{k}{\trace} \models \varphi$.

\begin{example} \label{ex:foe-expressive-example}
An example of a closed \langnameabr formula is as follows:
\[
\begin{array}{l@{}l}
  \fforall i.&(\eventquery{i}{concept:name} \logeq \text{``OrderCreated"}
                \fimpl \\ 
              &\quad\begin{array}{l@{}l@{}l}
                      \fexists j.&(&j > i \fand 
                                      \eventquery{i}{orderID} \logeq
                                      \eventquery{j}{orderID} \fand  \\
                                    &&\eventquery{j}{concept:name} \logeq
                                      \text{``OrderDelivered"} \fand \\
                                    &&(\eventquery{j}{time:timestamp} -
                                      \eventquery{i}{time:timestamp}) \leq \\
                                    &&10.800.000\\ 
                                    &)&\\
\end{array}\\
 &)
\end{array}
\]

\noindent
which essentially says that \textit{whenever there is an event where
  an order is created, eventually there will be an event where the
  corresponding order is delivered and the time difference between the
  two events (the processing time) is less than or equal to 10.800.000
  milliseconds (3 hours)}.
\end{example}


In general, \langnameabr has the following main features:
\begin{inparaenum}[\itshape (i)]
%
\item it allows us to specify constraints over the data (attribute values);
\item it allows us to (universally/existentially) quantify different
  event time points and to compare different event attribute values at
  different event time points;
\item it allows us to specify arithmetic expressions/operations
  involving the data as well as aggregate functions;
%
%
\item it allows us to do selective aggregation operations (i.e.,
  selecting the values to be aggregated).
\item the fragments of \langnameabr, namely the numeric and
  non-numeric expressions, allow us to specify the way 
  to compute a certain value (We will see later that it is needed to
  specify how to compute the target value).
%
\end{inparaenum}


\subsubsection{Checking Whether a Closed \langnameabr Formula is Satisfied}\label{sec:checking-foe-satisfaction}
%

We now proceed to introduce several properties of \langnameabr
formulas that are useful for checking whether a trace $\trace$ and a
prefix length $k$ satisfy a closed \langnameabr formula
$\varphi$, 
i.e., to check whether $\pref{k}{\trace} \models \varphi$. This check
is needed when we create the prediction model based on the
specification of prediction task provided by an \analrule.

Let $\varphi$ be an \langnameabr formula, we write
$\varphi[i \mapsto c]$ to denote a new formula obtained by
substituting each variable $i$ in $\varphi$ by $c$. In the following,
\Cref{thm:exist-elimination,thm:forall-elimination} show that, while
checking whether a trace $\trace$ and a prefix length $k$ satisfy a
closed \langnameabr formula $\varphi$, we can eliminate the presence
of existential and universal quantifiers. 

\begin{theorem}\label{thm:exist-elimination}
  Given a closed \langnameabr formula $\fexists i.\varphi$, a trace $\tau$ and a
  prefix length $k$,
\begin{center}
$\pref{k}{\trace} \models \fexists i.\varphi 
\mbox{ \ iff \ }
%
  \pref{k}{\trace} \models \bigvee_{c \in \set{1, \ldots \card{\trace}}} \varphi[i \mapsto c]
$\end{center}
\end{theorem}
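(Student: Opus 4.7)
The plan is to prove the biconditional directly from the semantic clause for the existential quantifier, reducing it to a substitution lemma that equates the effect of extending the valuation $\val$ with the effect of performing the syntactic substitution $\varphi[i \mapsto c]$.

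First, I would unfold the left-hand side using the semantic clause for $\fexists$ stated earlier in the paper: by definition, $\pref{k}{\trace} \models \fexists i.\varphi$ holds iff there exists some $c \in \set{1, \ldots, \card{\trace}}$ such that $\satisfyb{\trace}{k}{\val[i \mapsto c]}{\varphi} = \true$ for any (equivalently, every) valuation $\val$, since $\fexists i.\varphi$ is closed and hence its truth does not depend on $\val$ beyond the binding of $i$. Symmetrically, the right-hand side $\pref{k}{\trace} \models \bigvee_{c \in \set{1, \ldots, \card{\trace}}} \varphi[i \mapsto c]$ unfolds, by the semantics of disjunction, to: there exists $c \in \set{1, \ldots, \card{\trace}}$ such that $\pref{k}{\trace} \models \varphi[i \mapsto c]$.

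The crux is therefore to establish the substitution lemma: for every $c \in \set{1, \ldots, \card{\trace}}$ and every valuation $\val$,
\[
\satisfyb{\trace}{k}{\val[i \mapsto c]}{\varphi} \ = \ \inter{\varphi[i \mapsto c]}{\trace}{k}{\val}.
\]
I would prove this by structural induction on $\varphi$. The base cases handle event expressions built from index expressions, numeric/non-numeric expressions, and event attribute accessors: in each of these, the interpretation rule reads the value of $i$ by applying the valuation, so substituting $c$ for every free occurrence of $i$ before interpreting yields exactly the same value as extending the valuation with $i \mapsto c$ first. The inductive cases for boolean connectives $\neg, \fandcompact, \forcompact, \fimplcompact$ are immediate from the induction hypothesis and the compositional semantic clauses. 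The quantifier cases $\fforall j.\psi$ and $\fexists j.\psi$ use the standardization-apart convention (footnoted in the excerpt) so that the bound variable $j$ differs from $i$ and no capture occurs; the substitution commutes with the quantifier and the induction hypothesis applies. For aggregate functions, the aggregation variable $x$ is distinct from $i$ by syntactic restriction (aggregation variables are not quantified by $\fforall/\fexists$ and are treated apart from free variables), so the substitution distributes into the aggregation source and aggregation condition, and the induction hypothesis applies pointwise over each index $d$ in the set $\idxset$ of valid aggregation indices.

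Combining the substitution lemma with the unfolded semantic clauses gives the desired equivalence: existence of a witness $c$ satisfying $\satisfyb{\trace}{k}{\val[i \mapsto c]}{\varphi} = \true$ coincides with existence of a disjunct $\varphi[i \mapsto c]$ evaluating to $\true$, which is exactly satisfaction of the finite disjunction. The main obstacle is purely bookkeeping in the substitution lemma, specifically ensuring that variable standardization prevents capture in the quantifier cases and that the distinct roles of index variables versus aggregation variables are respected so that $\varphi[i \mapsto c]$ is well-formed and leaves the aggregation subformulas untouched in their bound variables.
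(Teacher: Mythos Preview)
Your proposal is correct and follows the same overall line as the paper: unfold the semantics of $\fexists$ on the left, unfold the semantics of the finite disjunction on the right, and observe that both reduce to the existence of a witness $c \in \set{1,\ldots,\card{\trace}}$. The paper's proof is considerably more informal than yours: it simply asserts that $\pref{k}{\trace} \models \fexists i.\varphi$ holds iff there is some $c$ with $\pref{k}{\trace} \models \varphi[i \mapsto c]$, silently conflating the semantic clause (which uses the extended valuation $\val[i \mapsto c]$) with the effect of syntactic substitution. You correctly isolate this step as a substitution lemma and sketch its proof by structural induction, handling the quantifier and aggregate-function cases via the standardization-apart convention. So your argument is not a different route but a more rigorous execution of the same one; the paper treats the substitution lemma as obvious and omits it entirely.
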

\begin{proof}
  By the definition of the semantics of \langnameabr, we have that
  $\trace$ and $k$ satisfy $\fexists i.\varphi$ (i.e.,
  $\pref{k}{\trace}~\models~\fexists i.\varphi $) iff there exists an
  index $c \in \set{1, \ldots, \card{\trace}}$, such that~$\trace$
  and~$k$ satisfy the formula~$\psi$ that is obtained from $\varphi$
  by substituting each variable~$i$ in $\varphi$ with~$c$ (i.e.,
  $\pref{k}{\trace}~\models~\psi$ where $\psi$ is
  $\varphi[i \mapsto c]$)
  Thus, it is the same as satisfying the disjunctions of formulas that
  is obtained by considering all possible substitutions of the
  variable $i$ in $\varphi$ by all possible values of $c$
  (i.e.,~$\bigvee_{c \in \set{1, \ldots \card{\trace}}} \varphi[i
  \mapsto c]$). This is the case because such disjunctions of formulas
  can be satisfied by $\trace$ and $k$ if and only if there exists at
  least one formula in that disjunctions of formulas that is satisfied
  by $\trace$ and $k$. \qed
\end{proof}

\begin{theorem}\label{thm:forall-elimination}
  Given a closed \langnameabr formula $\fforall i.\varphi$, a trace
  $\tau$ and a prefix length $k$,
\begin{center}
$\pref{k}{\trace} \models \fforall i.\varphi 
\mbox{ \ iff \ }
%
  \pref{k}{\trace} \models \bigwedge_{c \in \set{1, \ldots \card{\trace}}} \varphi[i \mapsto c]
$\end{center}
\end{theorem}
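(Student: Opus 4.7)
The plan is to proceed by direct mirroring of the proof of Theorem~\ref{thm:exist-elimination}, exploiting the duality between $\forall$ and $\bigwedge$ (analogous to the duality between $\exists$ and $\bigvee$ used in the existential case). First I would unfold the semantics clause for the universal quantifier: by the definition of $\inter{\cdot}{\trace}{k}{\val}$ for $\forall i.\varphi$, we have $\pref{k}{\trace} \models \forall i.\varphi$ iff for every $c \in \set{1, \ldots, \card{\trace}}$, the interpretation $\satisfyb{\trace}{k}{\val[i \mapsto c]}{\varphi}$ evaluates to $\true$.

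Next, I would observe that since $\varphi[i \mapsto c]$ is obtained by syntactically substituting $i$ with $c$ throughout $\varphi$, and since $i$ is the only free variable of $\varphi$ (because $\forall i.\varphi$ is closed), the satisfaction $\pref{k}{\trace} \models \varphi[i \mapsto c]$ is equivalent to $\satisfyb{\trace}{k}{\val[i \mapsto c]}{\varphi} = \true$ for an arbitrary valuation $\val$. This step relies on a straightforward (and standard in FOL) substitution lemma, which I would either cite as folklore or justify by a brief induction on the structure of $\varphi$, covering the event-expression atoms and the propositional connectives, and noting that aggregation variables are shielded by the standardization-apart convention so no variable capture occurs.

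Finally, I would appeal to the semantics of conjunction: a conjunction $\bigwedge_{c \in \set{1, \ldots, \card{\trace}}} \varphi[i \mapsto c]$ is satisfied by $\pref{k}{\trace}$ iff \emph{every} conjunct $\varphi[i \mapsto c]$ is satisfied by $\pref{k}{\trace}$. Chaining the three equivalences yields exactly the statement of the theorem.

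The main (mild) obstacle is justifying the substitution step cleanly in the presence of aggregate functions, since $\varphi$ may contain aggregation subexpressions with their own bound variables $x$. However, because aggregation variables cannot be universally/existentially quantified and the convention of standardizing variables apart is already in force, substituting $i$ by a constant $c$ never interferes with any aggregation scope, so the substitution lemma goes through verbatim. Beyond this, the proof is essentially a one-line dualization of Theorem~\ref{thm:exist-elimination}.
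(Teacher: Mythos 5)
Your proposal is correct and follows essentially the same route as the paper's proof: unfold the semantics of the universal quantifier into a statement about all $c \in \set{1,\ldots,\card{\trace}}$, identify each instance with the syntactic substitution $\varphi[i \mapsto c]$, and conclude via the semantics of conjunction. The only difference is that you make the substitution lemma (and its non-interference with aggregation variables) explicit, whereas the paper treats that identification as immediate; this is a more careful rendering of the same argument, not a different one.
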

\begin{proof}
  The proof is quite similar to \Cref{thm:exist-elimination}, except
  that we use the conjunctions of formulas.  Basically, we have that
  $\trace$ and $k$ satisfy $\fforall i.\varphi$ (i.e.,
  $\pref{k}{\trace}~\models~\fforall i.\varphi $) iff for every
  $c \in \set{1, \ldots, \card{\trace}}$, we have that
  $\pref{k}{\trace}~\models~\psi$, where~$\psi$ is obtained
  from~$\varphi$ by substituting each variable~$i$ in~$\varphi$ with~$c$.
  In other words, $\trace$ and $k$ satisfy each formula that is
  obtained from~$\varphi$ by considering all possible substitutions of
  variable $i$ with all possible values of $c$. Hence it is the same
  as satisfying the conjunctions of those formulas
  (i.e.,~$\bigwedge_{c \in \set{1, \ldots \card{\trace}}} \varphi[i
  \mapsto c]$).  This is the case because such conjunctions of formulas
  can be satisfied by $\trace$ and $k$ if and only if each formula in
  that conjunctions of formulas is satisfied by $\trace$ and $k$. \qed
\end{proof}


To check whether a trace~$\trace$ and a prefix length~$k$
satisfy a closed \langnameabr formula~$\varphi$, i.e.,
$\pref{k}{\trace} \models \varphi$, we could perform the following
steps:
\begin{compactenum}
\item First, we eliminate all quantifiers. This can be done easily by
  applying \Cref{thm:exist-elimination,thm:forall-elimination}. As a
  result, each quantified variable will be instantiated with a
  concrete value;
%
%
%
\item Evaluate all aggregate functions as well as all event attribute
  accessor expressions based on the event attributes in $\trace$ so as
  to get the actual values of the corresponding event
  attributes. After this step, we have a formula that is constituted
  by only concrete values composed by either arithmetic operators
  (i.e.,~$+$~or~$-$), logical comparison operators
  (i.e.,~$\logeq$~or~$\neq$), or arithmetic comparison operators
  (i.e., $<$, $>$, $\leq$, $\geq$, $\logeq$ or $\neq$);
\item Last, we evaluate all arithmetic expressions as well as all
  expressions involving logical and arithmetic comparison operators.
  If the whole evaluation gives us $\true$
  (i.e.,~$\inter{\varphi}{\trace}{k}{} = \true$), then we have that
  $\pref{k}{\trace} \models \varphi$, otherwise
  $\pref{k}{\trace} \not\models \varphi$ (i.e.,~$\trace$ and $k$ do
  not satisfy $\varphi$).
\end{compactenum}


\noindent
The existence of this procedure gives us the following theorem:

\begin{theorem}
  Given a closed \langnameabr formula $\varphi$, a trace $\tau$ and a prefix
  length $k$, checking whether $\pref{k}{\trace} \models \varphi$ is
  decidable.
\end{theorem}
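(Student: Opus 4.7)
The plan is to exhibit the three-step procedure sketched immediately before the theorem and argue that it constitutes a terminating decision procedure. Concretely, I would define a recursive evaluation algorithm $\mathsf{Eval}(\varphi,\trace,k)$ that returns $\true$ or $\false$, and prove by structural induction on $\varphi$ that it terminates and returns $\true$ iff $\pref{k}{\trace}\models\varphi$.

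First I would handle the quantifier-elimination step. Since $\trace$ is a finite sequence, $\card{\trace}$ is a concrete positive integer. By \Cref{thm:exist-elimination}, each subformula $\fexists i.\psi$ can be rewritten into the finite disjunction $\bigvee_{c \in \set{1,\ldots,\card{\trace}}}\psi[i\mapsto c]$, and by \Cref{thm:forall-elimination} each $\fforall i.\psi$ can be rewritten into the corresponding finite conjunction. Applying these rewrites recursively (innermost-first, after standardizing variables apart as assumed in the semantics) yields, in finitely many steps, a quantifier-free formula $\varphi'$ with $\pref{k}{\trace}\models\varphi$ iff $\pref{k}{\trace}\models\varphi'$. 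Termination here follows because every rewrite strictly reduces the number of quantifier occurrences, and each rewrite produces only finitely many copies of a formula with one fewer quantifier.

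Second, I would argue that on a quantifier-free closed formula $\varphi'$ (whose only remaining variables are aggregation variables bound inside aggregate functions) the interpretation $\inter{\varphi'}{\trace}{k}{}$ is effectively computable. Index expressions reduce to integers via the inductive clauses for $+$, $-$, $\curr$, $\last$; event attribute accessors $\eventquery{\idx}{attName}$ are looked up directly in $\trace$ (returning $\udefined$ outside the valid range); numeric/non-numeric expressions without aggregates reduce by the arithmetic and comparison clauses. For each aggregate function $\aggsum,\aggavg,\aggmin,\aggmax,\aggcount,\aggcountval,\aggconcat$, the associated index set $\idxset$ is a finite subset of $\set{\st,\ldots,\ed}$ (a finite interval of $\mathbb{Z}^+$), and can be computed by iterating $x$ over $\st,\ldots,\ed$ and, at each step, evaluating the (quantifier-free, variable-free once $x$ is substituted) aggregation condition $\aggcondexp$ and source expression $\aggnumsrc$ or $\aggnonnumsrc$; since the paper disallows nested aggregates, this inner evaluation is itself a finite recursion on a strictly simpler expression. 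The aggregated value is then obtained directly from the defining formulas in \Cref{fig:semantics-aggregate-function-conditional} and \Cref{fig:semantics-aggregate-function-concat}.

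Finally, the third step just applies the boolean and comparison clauses to the fully reduced ground formula, which is trivially computable. Combining these three phases gives a terminating procedure whose output equals $\inter{\varphi}{\trace}{k}{}$ by the semantic clauses of \langnameabr, establishing decidability. The only mildly delicate point I expect is bookkeeping around standardization-apart and around the $\udefined$ value (propagating $\udefined$ correctly through arithmetic and excluding $\udefined$-producing indices from $\idxset$), but both are handled directly by the semantic definitions already given; no deeper combinatorial argument is required because the whole evaluation is bounded by $\card{\trace}$ and the syntactic size of $\varphi$.
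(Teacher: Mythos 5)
Your proposal is correct and follows essentially the same route as the paper: the paper's argument is exactly the three-step procedure (quantifier elimination via \Cref{thm:exist-elimination,thm:forall-elimination}, evaluation of aggregates and attribute accessors over the finite trace, then ground boolean/arithmetic evaluation), whose existence is taken to establish the theorem. You merely make explicit the termination and finiteness bookkeeping that the paper leaves implicit.
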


\noindent
This procedure has been implemented in our prototype as a part of the
mechanism for processing the specification of prediction task while
constructing the prediction model.




\subsection{Formalizing the \AnalRule}\label{sec:formalize-analytic-rule}
%
With this machinery in hand, we can formally say how to specify
condition and target expressions in \analrules, namely that condition
expressions 
are specified as closed \langnameabr formulas, while target
expressions 
are specified as either numeric expression or non-numeric expression,
except that 
target expressions are not allowed to have index variables (Thus, they
do not need variable valuation). We require 
an \analrule to be \emph{coherent}, i.e., all target expressions of an
\analrule should be either only numeric or non-numeric expressions. An
\analrule in which all of its target expressions are numeric
expressions is called \emph{numeric \analrule}, while an \analrule in
which all of its target expressions are non-numeric expressions is
called \emph{non-numeric \analrule}.

%
We can now formalize the semantics of \analrules as illustrated in
\Cref{sec:overview-lang}. Formally, given a trace $\trace$, a
considered prefix length $k$, and an \analrule $\ar$ of the form
\[
\begin{array}{r@{ \ }l}
\ar = \ruletup{
    &\cond_1 \targetarrow \target_1, ~\\
    &\cond_2 \targetarrow \target_2, ~ \\
    &\qquad \qquad\vdots ~ \\
    &\cond_n \targetarrow \target_n, ~\\
    &\otarget \ },
\end{array}
\]
%
$\ar$ maps $\trace$ and $k$ 
into a value obtained from evaluating the corresponding target
expression as follows:
\[
\ar(\pref{k}{\trace})
  = \left\{ \begin{array}{l@{ \qquad}l}
\inter{\target_1}{\trace}{k}{} &\mbox{{\normalsize if } } \pref{k}{\trace} \models \cond_1 \mbox{, }\\
\inter{\target_2}{\trace}{k}{} &\mbox{{\normalsize if } } \pref{k}{\trace} \models \cond_2 \mbox{, }\\
\ \ \ \ \ \ \ \ \ \vdots& \ \ \ \ \ \ \ \  \ \ \ \ \ \ \ \ \vdots\\
\inter{\target_n}{\trace}{k}{} &\mbox{{\normalsize if } } \pref{k}{\trace} \models \cond_n \mbox{, }\\
\inter{\otarget}{\trace}{k}{} & \mbox{{\normalsize otherwise}}
          \end{array}
\right.
\]
where 
%
%
$\inter{\target_i}{\trace}{k}{}$ is the application of our
interpretation function $\inter{\cdot}{\trace}{k}{}$ to the target
expression $\target_i$ in order to evaluate the expression and get the
value. Checking whether the given trace $\trace$ and the given prefix
length $k$ satisfy $\cond_i$, i.e.,
$\pref{k}{\trace} \models \cond_i$, can be done as explained
in~\Cref{sec:checking-foe-satisfaction}.
%
%
We also require an \analrule to be well-defined, i.e.,  
given a trace $\trace$, a prefix length $k$,
and an \analrule $\ar$, we say that \emph{$\ar$ is well-defined
  for~$\trace$~and~$k$}
if $\ar$ maps~$\trace$~and~$k$ 
into exactly one target value, i.e., for every condition expressions
$\cond_i$ and $\cond_j$ in which $\pref{k}{\trace}~\models~\cond_i$
and $\pref{k}{\trace}~\models~\cond_j$,
we have that
$\inter{\target_i}{\trace}{k}{}~=~\inter{\target_j}{\trace}{k}{}$.
This notion of well-definedness can be easily generalized to event logs as
follows: Given an event log $\eventlog$ and an \analrule $\ar$, we say
that \emph{$\ar$ is well-defined for $\eventlog$} if for every
possible
trace $\trace$ in $\eventlog$ and every possible prefix length $k$, we
have that $\ar$ is well-defined for~$\trace$~and~$k$. 
Note that such condition can be easily checked for the given event log
$\eventlog$ and an \analrule $\ar$ since the event log is finite. This
notion of well-defined is required in order to guarantee that the
given \analrule $\ar$ behaves as a function with respect to the given
event log $\eventlog$, i.e., $\ar$ maps every pair of trace $\trace$
and prefix length $k$ into a unique value.


Compared to enforcing that each condition in \analrules must not be
overlapped, our notion of well-defined gives us more flexibility in
making a specification using our language while also guaranteeing
reasonable behaviour. For instance, one can specify several
characteristics of ping-pong behaviour in a more convenient way by
specifying several \condtargetrules, i.e.,
\[
\begin{array}{c}
  \cond_1 \targetarrow \mbox{``Ping-Pong''},\\
  \cond_2 \targetarrow \mbox{``Ping-Pong''},\\ 
  \vdots\\
  \cond_m \targetarrow \mbox{``Ping-Pong''}\\ 
\end{array}
\]
(where each condition expression $\cond_i$ captures a particular
characteristic of a ping-pong behaviour), instead of using disjunctions
of these several condition expressions, i.e.,
\[
\begin{array}{c}
  \cond_1 \vee \cond_2 \vee \ldots \vee \cond_m  \targetarrow \mbox{``Ping-Pong''}
\end{array}
\]
which could end up into a very long specification of a condition
expression. 
%


\section{Building the Prediction Model }\label{sec:pred-model}

Given an \analrule $\ar$ and an event log $\eventlog$, if $\ar$ is a
numeric \analrule, we build a regression model. Otherwise, if $\ar$ is
a non-numeric \analrule, we build a classification model.


Given an \analrule $\ar$ and an event log $\eventlog$, our aim is to
create a prediction function that takes (partial) trace as the input
and predict the most probable output value for the given input.  To
this aim, 
we train a classification/regression model in which the input is the
features that are obtained from the encoding of all possible trace
prefixes in the event log $\eventlog$ (the training data).
There are several ways 
to encode (partial) traces into input features for training a machine
learning model.  For instance,~\cite{LCDDM15,SDGJM17} study various
encoding techniques such as index-based encoding, boolean encoding,
etc. In~\cite{TVLD17}, the authors use the so-called \emph{one-hot
  encoding} of event names, and also add some time-related features
(e.g., the time increase with respect to the previous event).
Some works consider the feature encodings that incorporate the
information of the last $n$-events. There are also several choices on
the information to be incorporated. One can incorporate only the name
of the events/activities, or one can also incorporate other
information (provided by the available event attributes) such as the
(human) resource who is in charged in the activity.

In general, an encoding technique can be seen as a function $\encfunc$
that takes a trace $\trace$ as the input and produces a set
$\set{x_1,\ldots, x_m}$ of features,
i.e.,~$\encfunc(\trace) = \set{x_1,\ldots, x_m}$. Furthermore, since a
trace $\trace$ might have arbitrary length (i.e., arbitrary number of
events), the encoding function must be able to transform these
arbitrary number of trace information into a fix number of
features. This can be done, for example, by considering the last
$n$-events of the given trace $\trace$ or by aggregating the
information within the trace itself. In the encoding that incorporates
the last $n$-events, if the number of the events within the trace
$\trace$ is less than $n$, then typically we can add~0 for all missing
information in order to get a fix number of features.

In our approach, users are allowed to choose the desired encoding
mechanism
by specifying a set $\encset$ of preferred encoding functions
(i.e., $\encset = \set{\encfunc_1, \ldots, \encfunc_n}$).
This allows us to do some sort of feature engineering (note that the
desired feature engineering approach, that might help increasing the
prediction performance, can also be added as one of these encoding
functions).
The set of features of a trace is then obtained by combining all
features produced by applying each of the selected encoding functions
into the corresponding trace.
In the implementation (cf.\ \Cref{sec:implementation-experiment}), we
provide some encoding functions that can be selected in order to
encode a trace. 
%

%
\begin{algorithm} \caption{- Procedure for building the prediction model} \label{algo:build-pred-model}
%
%
  \hspace*{0mm} \textbf{Input:}
  \hspace*{3.1mm}  an \analrule $\ar$, \\
  \hspace*{12.9mm}  an event log $\eventlog$, and \\
  \hspace*{12.9mm} a set
  $\encset = \set{\encfunc_1, \ldots, \encfunc_n}$ of encoding
  functions \\
  \hspace*{0mm} \textbf{Output:} \hspace*{1mm} a prediction function
  $\predfunc$ \smallskip
\begin{algorithmic}[1]

\ForEach {trace $\trace \in \eventlog$} 
\ForEach {$k$ where $1 < k< \card{\trace}$} 
\State $\prefix{k}{\trace}_{\text{encoded}}$ = $\encfunc_1(\prefix{k}{\trace}) \cup \ldots \cup \encfunc_n(\prefix{k}{\trace})$
\State targetValue = $\ar(\prefix{k}{\trace})$
\State add a new training instance for $\predfunc$, where
\Statex \hspace*{8mm} $\predfunc(\prefix{k}{\trace}_{\text{encoded}})$ = targetValue
\EndFor
\EndFor
\State Train the prediction function $\predfunc$ (either classification or regression model)
\end{algorithmic}
\end{algorithm}
%

\Cref{algo:build-pred-model} illustrates our procedure for building
the prediction model based on the given inputs, namely:
\begin{inparaenum}[\itshape (i)]
\item an \analrule $\ar$, 
\item an event log $\eventlog$, and 
\item a set $\encset = \set{\encfunc_1, \ldots, \encfunc_n}$ of encoding functions.
\end{inparaenum}
The algorithm works as follows: for each $k$-length trace
prefix $\prefix{k}{\trace}$ of each trace $\trace$ in the event log
$\eventlog$ (where $1 < k  < \card{\trace}$
), 
we do the following:
\begin{inparaenum}
\item[In line~3,] we apply each encoding function $\encfunc_i \in \encset$
  into $\prefix{k}{\trace}$,
  and combine all 
  obtained features. 
  This step gives us the encoded trace prefix.
\item[In line~4,] we compute the expected prediction result (target
  value) by applying the analytical rule $\ar$ to
  $\prefix{k}{\trace}$.
\item[In line~5,] we add a new training instance by specifying that
  the prediction function $\predfunc$ maps the encoded trace prefix
  $\prefix{k}{\trace}_{\text{encoded}}$ into the target value computed
  in the previous step.
\item[Finally,] we train the prediction function $\predfunc$ and get
  the desired prediction function.
\end{inparaenum}

Observe that the procedure above is independent with respect to the
classification/regression model and trace encoding technique that are
used. One can plug in different machine learning
classification/regression model as well as use different trace
encoding technique in order to get the desired quality of prediction.

\section{Showcases and Multi-Perspective Prediction
  Service}\label{sec:showcase}

An \analrule $\ar$ specifies a particular prediction task of
interest. To specify several desired prediction tasks, we only have to
specify several \analrules, i.e., $\ar_1, \ar_2, \ldots, \ar_n$. Given a set
$\arset = \set{\ar_1, \ar_2, \ldots, \ar_n}$ of \analrules, our approach
allows us to construct a prediction model for each \analrule
$\ar_i \in \arset$. 
By having all of the constructed prediction models where each of them
focuses on a particular prediction objective, we can obtain a
\emph{multi-perspective prediction analysis service}.
%

In~\Cref{sec:spec-language}, we have seen some examples of prediction
task specification for predicting the ping-pong behaviour and the
remaining processing time. In this section, we present numerous other
showcases of prediction task specification using our language.



\subsection{Predicting Unexpected Behaviour/Situation}

\ifShowOutlineShowcase
\todo[inline]{(\cm) 1. Predicting ping-pong behaviour\\
  (\bqm) 2. predicting whether the process involve many support
  groups or not if it involves more than 3 groups, then it is
  inefficient (kind of
  performance)\\
  (\bqm) 3. add other pingpong behaviour characterization?
}
\fi

\noindent
We can specify the task for predicting unexpected behaviour by first
expressing the characteristics of the unexpected behaviour.

\smallskip
\noindent
\textbf{Ping-pong Behaviour. \xspace} The condition expression
$\condPingPonga$ (in \Cref{sec:overview-lang}) expresses a possible
characteristic of ping-pong behaviour. Another possible
characterization of ping-pong behaviour is shown below:
%
\[
\begin{array}{r@{\ }l}
  \condPingPongb =   \fexists i . &(i > \curr 
                                     ~\fand~ i+2 \leq \last~\fand \\
                                   &\eventquery{i}{org:resource} \noteq
                                     \eventquery{i+1}{org:resource}
                                     \fand\\
                                   &\eventquery{i}{org:resource} \logeq
                                     \eventquery{i+2}{org:resource}\fand\\
                                   &\eventquery{i}{org:group} \logeq
                                     \eventquery{i+1}{org:group} 
                                     \fand\\
                                   & \eventquery{i}{org:group} \logeq
                                     \eventquery{i+2}{org:group}
                                     )
\end{array}
\]
In other word, $\condPingPongb$ characterizes the condition where
``\textit{an officer transfers a task into another officer of the same
  group, and then the task is transfered back to the original
  officer}''.
In the event log, this situation is captured by the changes of the
org:resource value in the next event, but then it changes back into
the original value in the next two events, while the values of
org:group remain the same.

We can then create an \analrule to specify the task for predicting
ping-pong behaviour as follows:
\[
\begin{array}{r@{ \ }l@{ \ }l@{ \ }l}
  \arPingPongShowcase = \langle &\condPingPonga &\targetarrow &\mbox{``Ping-Pong''}, \\
                    &\condPingPongb &\targetarrow &\mbox{``Ping-Pong''}, \\
                    &&&\mbox{``Not Ping-Pong''}\rangle,
\end{array}
\]
where $\condPingPonga$ is the same as specified in
\Cref{sec:overview-lang}.
During the construction of the prediction model, in the training
phase, $\arPingPongShowcase$ maps each trace prefix
$\prefix{k}{\trace}$ that satisfies either $\condPingPonga$ or
$\condPingPongb$ into the target value ``Ping-Pong'', and those
prefixes that neither satisfy $\condPingPonga$ nor $\condPingPongb$
into ``Not Ping-Pong''.
After training the model based on this rule, we get a classifier that
is trained for distinguishing between (partial) traces that most
likely and unlikely lead to ping-pong behaviour.
This example also exhibits the ability of our language to specify a
behaviour that has multiple characteristics.

\smallskip
\noindent
\textbf{Abnormal Activity Duration.\xspace}
The following expression specifies the \emph{existence of abnormal
  waiting duration} by stating that \emph{there exists a waiting
  activity in which the duration is more than 2 hours (7.200.000
  milliseconds)}:
\[
\begin{array}{l@{ }l}
  \condAbnormalWaitDur = \fexists i . &(i < \last) \fand \\
                                      &\eventquery{i}{concept:name}
                                        \logeq \mbox{``Waiting"} \fand \\
                                      &(\eventquery{i + 1}{time:timestamp}~-~\\
                                      &\qquad \quad \eventquery{i}{time:timestamp}) >
                                        7.200.000
\end{array}
\]
As before, we can then specify an \analrule for predicting whether a
(partial) trace is likely to have an abnormal waiting duration or not
as follows:
\[
\arAbWaitDur = \ruletup{\condAbnormalWaitDur \targetarrow
    \mbox{``Abnormal"},~\mbox{``Normal"}}.
\]
Applying the approach for constructing the prediction model in
\Cref{sec:pred-model}, we obtain a classifier that is trained to
predict whether a (partial) trace is most likely or unlikely to have
an abnormal waiting duration.


\subsection{Predicting SLA/Business Constraints Compliance} 
  
\ifShowOutlineShowcase
\todo[inline]{(\cm) 1. Similar to unexpected behaviour - activity duration is
    more than a certain threshold\\
    \quad$\bullet$ suppose that the SLA says that each activity\\
    \qquad must be done within 3 hours.  }

  \todo[inline]{(\cm) 2. Predicting the existence of abnormal activity duration\\
    \quad $\bullet$ Predicting the existence an activity in which the\\
    \qquad  running time is greater than a certain threshold\\
    \qquad   (e.g.,greater than 4 hours)\\

    \quad $\bullet$ Predicting the existence of abnormal waiting time \\
    \qquad  (using aggregation condition to filter the ‘waiting \\
    \qquad activity')\\
        
    \quad $\bullet$ Predicting the existence of abnormal time\\
    \qquad   difference between two activites (the time \\
    \qquad difference between the time when the job is \\
    \qquad submitted and being processed is more than \\
    \qquad 4 hours) }

  \todo[inline]{(\cm) 3. Predicting whether a certain activity will
    be followed by another activity (whether it complies with this constraint).\\

    \quad $\bullet$ ApplicationSubmission must be eventually \\
    \qquad followed by ApplicationValidation 

    \quad $\bullet$ CreatedOrder must be eventually \\
    \qquad followed by DeliveredOrder 
}

  \todo[inline]{(\cm) 4. Segregation of duties\\
  }

  \todo[inline]{(\bqm) 5. handover more than a certain number violate
    the SLA, suppose that there is an SLA that a process must be done
    with at most certain number of handover (\textbf{\emph{moved to performance}})\\
  }

  \todo[inline]{(\bqm) 6. Max Activity Duration Less than something,
    considered violation, suppose that we require that each activity
    must be done before x time (Note: \textit{\textbf{it is similar to
        number 1. Just different way to express
        it}}).\\
  }

  \todo[inline]{(\bqm) 7. Add  ApplicationSubmission must be eventually \\
    \qquad followed by ApplicationValidation?  (Note:
    \textit{\textbf{it is similar to number
        3. Just different constraint}}).\\
  }
\fi

  \noindent
  Using FOE, we can easily specify numerous expressive SLA conditions
  as well as business constraints. Furthermore, using the approach
  presented in \Cref{sec:pred-model}, we can create the corresponding
  prediction model, which predicts the compliance 
  of the corresponding SLA/business constraints.

\smallskip
\noindent
\textbf{Time-related SLA. \xspace}
Let $\condEveryOrderWillBeDelivered$ be the \langnameabr formula in
\Cref{ex:foe-expressive-example}. Roughly speaking,
$\condEveryOrderWillBeDelivered$ expresses an SLA stating that
\emph{each order that is created will be eventually delivered within 3
  hours}. We can then specify an \analrule for predicting the
compliance of this SLA as follows:
\[
\arOrderWillBeDeliveredShowcase = \ruletup{\condEveryOrderWillBeDelivered \targetarrow
    \mbox{``Comply"},~\mbox{``Not Comply"}}.
\]
Using $\arOrderWillBeDeliveredShowcase$, our procedure for
constructing the prediction model in \Cref{sec:pred-model} generates a
classifier that is trained to predict whether a (partial) trace is
likely or unlikely to comply with the given SLA.

\smallskip
\noindent
\textbf{Separation of Duties (SoD).\xspace} We could also specify a
constraint concerning \emph{Separation of Duties (SoD)}. For instance,
we require that the person who assembles the product
is different from the person who checks the product
(i.e., quality assurance). This can be expressed as follows:
 \[
\begin{array}{l}
  \condSoD = \fforall i. \fforall j. ((i < \last)
  \fand (j < \last) \fand \\
  \hspace*{20mm}\eventquery{i}{concept:name}
  \logeq \mbox{``assembling"} \fand\\
  \hspace*{20mm}\eventquery{j}{concept:name}
  \logeq \mbox{``checking"}) \fimpl\\
  \hspace*{25mm}(\eventquery{i}{org:resource}~\noteq~\eventquery{j}{org:resource}).
\end{array}
\]
Intuitively, $\condSoD$ states that for every two activities, if they
are assembling and checking activities, then the resources who are in
charge of those activities must be different.  Similar to previous
examples, we can specify an \analrule for predicting the compliance of
this constraint as follows:
\[
\arSoD = \ruletup{\condSoD \targetarrow
    \mbox{``Comply"},~\mbox{``Not Comply"}}.
\]
Applying our procedure for building the prediction model, we obtain a
classifier that is trained to predict whether or not a trace is likely
to
fulfil this constraint.

\smallskip
\noindent
\textbf{Constraint on Activity Duration.\xspace} 
Another example would be a constraint on the activity duration, e.g.,
a requirement which states that \emph{each activity must be finished
  within 2 hours}. This can be expressed as follows:
\[
\begin{array}{l}
  \condActDurLessThanAThreshold = \fforall i. (i < \last) \fimpl\\
  \hspace*{25mm}(\eventquery{i+1}{time:timestamp}~-\\
  \hspace*{33mm}\eventquery{i}{time:timestamp}) 
  < 7.200.000.
\end{array}
\]
$\condActDurLessThanAThreshold$ basically says that \emph{the time
  difference between two activities is always less than 2 hours
  (7.200.000 milliseconds)}.  
An analytic rule to predict the compliance of this SLA can be
specified as follows:
%
\begin{center}
$\arActDurSLA = \ruletup{\condActDurLessThanAThreshold \targetarrow
  \mbox{``Comply"},~\mbox{``Not Comply"}}$. 
\end{center}
Notice that we can express the same specification in a different way,
for instance
\[
\arActDurSLAb = \ruletup{\condActDurLessThanAThresholdb \targetarrow
  \mbox{``Not Comply"},~\mbox{``Comply"}}
\]
where 
\[
  \begin{array}{l}
    \condActDurLessThanAThresholdb = 
    \fexists i. (i < \last) \fand\\
    \hspace*{25mm} (\eventquery{i+1}{time:timestamp}~- \\
    \hspace*{33mm} \eventquery{i}{time:timestamp}) > 7.200.000.
\end{array}
\]
Essentially, $\condActDurLessThanAThresholdb$ expresses a
specification on the \emph{existence of abnormal activity
  duration}. 
It states that
%
%
\emph{there exists an activity in which the time difference between
  that activity and the next activity is greater than 7.200.000
  milliseconds (2 hours)}.
%
%
Using either $\arActDurSLA$ or $\arActDurSLAb$, our procedure for
building the prediction model (cf.~\Cref{algo:build-pred-model}) gives
us a classifier that is trained to distinguish between the partial
traces that most likely will and will not satisfy this activity
duration constraint.

We could even specify a more fine-grained constraint by focusing into
a particular activity. For instance, the following expression
specifies that \emph{each validation activity must be done within 2
  hours (7.200.000 milliseconds)}:
\[
\begin{array}{l}
  \condConstValDur = \fforall i. ((i < \last) \fand \\
  \hspace*{20mm}\eventquery{i}{concept:name}
  \logeq \mbox{``Validation"}) \fimpl\\
  \hspace*{25mm}(\eventquery{i+1}{time:timestamp}~-\\
  \hspace*{33mm}\eventquery{i}{time:timestamp}) 
  < 7.200.000.
\end{array}
\]
$\condConstValDur$ basically says that \emph{for each validation
  activity, the time difference between that activity and its next
  activity is always less than 2 hours (7.200.000 milliseconds)}.
Similar to the previous examples, it is easy to see that we could
specify an \analrule for predicting the compliance of this SLA and
create a prediction model that is trained to predict whether a
(partial) trace is likely or unlikely fulfilling this SLA.



\subsection{Predicting Time Related Information}

\ifShowOutlineShowcase
\todo[inline]{(\cm) 1. Predicting the Remaining Processing Time\\[1ex]
  (\bqm) 2. Predicting the Total Duration of a Certain Activity\\
  \quad $\bullet$ Predicting the Total Duration\\
  \qquad of the Remaining Validation Activity\\
  \quad $\bullet$ Predicting total duration of a certain activity\\
  \qquad within the  whole process (waiting time)\\
  \quad $\bullet$ Predicting the Total Duration of the \\
  \qquad Remaining Waiting  Activity\\[1ex]
} \todo[inline]{(\cm) 3. Predicting total REMAINING duration of a certain
  activity (e.g., the duration of the remaining waiting activities)\\[1ex]
  (\cm) 4. Predicting Over-time Fault (Overhead of Running Time)\\
  \quad $\bullet$ the overhead can also be considered as a \\
  \qquad way to quantifying the risk \\
  \qquad (the quantification of process risk)\\[1ex]
}
\todo[inline]{(\cm) 5. Predicting average activity duration\\[1ex]
  (\bqm) 6. Predicting maximal activity duration\\[1ex]
  (\bqm) 7. predicting average running time of a certain activity,
  and
  total time of a certain activity\\[1ex]
}
\todo[inline]{(\cm) 8. Predicting the timestamp of the next event\\[1ex]
  (\cm) 9. Predicting delay\\
  \quad $\bullet$ if the total duration is above the expected\\
  \qquad duration (suppose that the expected duration \\
  \qquad is given in the beginning)\\[1ex]
}
\fi

\noindent
In \Cref{sec:overview-lang}, we have seen how we can specify the task
for predicting the \emph{remaining processing time} (by specifying a
target expression that computes the time difference between the
timestamp of the last and the current events). In the following, we
provide another examples on predicting time related information.


\smallskip
\noindent
\textbf{Predicting Delay.\xspace} Delay can be defined as a condition
when the actual processing time is longer than the expected processing
time. Suppose we have the information about the expected processing
time, e.g., provided by an attribute ``$\text{expectedDuration}$'' of
the first event, we can specify an \analrule for predicting the
occurrence of delay as follows:
\begin{center}
$\arDelay = \ruletup{\condDelay \targetarrow \mbox{``Delay"},~\mbox{``Normal"}}$. 
\end{center}
where $\condDelay$ is specified as follows:
\[
\begin{array}{l}
(\eventquery{\last}{time:timestamp}~- \\
\qquad \qquad\eventquery{1}{time:timestamp}) > \eventquery{1}{expectedDuration}.
\end{array}
\]
$\condDelay$ states that the difference between the last event
timestamp and the first event timestamp (i.e., the processing time) is
greater than the expected duration (provided by the value of the event
attribute ``expectedDuration''). While training the classification
model, $\arDelay$ maps each 
trace prefix $\prefix{k}{\trace}$ into either ``Delay'' or ``Normal''
depending on whether the processing time of the whole trace $\trace$
is greater than the expected processing time or not. 
%

\smallskip
\noindent
\textbf{Predicting the Overhead of Running Time.\xspace} The
\emph{overhead of running time} is the amount of time that exceeds the
expected running time. If the actual running time does not go beyond
the expected running time, then the overhead is 0. Suppose that the
expected running time is 3 hours (10.800.000 milliseconds), the task
for predicting the overhead of running time can then be specified as
follows:
\[
  \arOvertime = \ruletup{ \curr < \last \targetarrow
    \maxtwo{\rep{Overhead}}{0}, ~0}.
\]
where \ $\rep{Overhead} = \rep{TotalRunTime} - 10.800.000$, and
\[
\begin{array}{l}
\rep{TotalRunTime} = \\
\quad\qquad\eventquery{\last}{time:timestamp} -  \eventquery{1}{time:timestamp}.
\end{array}
\]
In this case, $\arOvertime$ computes the difference between the actual
total running time and the expected total running time. Moreover, it
outputs 0 if the actual total running time is less than the expected
total running time, since it takes the maximum value between the
computed time difference and 0. Applying our procedure for creating
the prediction model, we obtain a regression model that predicts the
overhead of running time.

\smallskip
\noindent
\textbf{Predicting the Remaining Duration of a Certain
  \mbox{Event}. \xspace} Let the duration of an event be the
time difference between the timestamp of that event and its
succeeding event. The task for predicting the \emph{total duration
  of all remaining ``waiting'' events} can be specified as
follows:
\[
  \arRemWaitDur = \ruletup{ \curr < \last \targetarrow \rep{RemWaitingDur}, ~0}.
\]
where $\rep{RemWaitingDur}$ is defined as the \emph{sum of the duration of
all remaining waiting events}, formally as follows:
\[
\sumc{ \eventquery{x+1}{time:timestamp} -
  \eventquery{x}{time:timestamp}}{\curr}{\last}{\eventquery{x}{concept:name} \logeq \text{"waiting"}}
\]
As before, based on this rule, we can create a regression model that
predicts the total duration of all remaining waiting events.

\smallskip
\noindent
\textbf{Predicting Average Activity Duration.\xspace} 
We can specify the way to compute the \emph{average of activity
  duration} as follows:
\[
\begin{array}{l}
\rep{AvgActDur} = \\
\qquad\uavgc{ \eventquery{x+1}{time:timestamp} - \eventquery{x}{time:timestamp} }{1}{\last}
\end{array}
\]
where the activity duration is defined as the time difference between
the timestamp of that activity and its next activity. We can then
specify an \analrule that expresses the task for predicting the
average activity duration as follows:
\[
  \arAvgActDur = \ruletup{ \curr < \last \targetarrow \rep{AvgActDur}, ~0}.
\]
Similar to previous examples, applying our procedure for creating the
prediction model, we get a regression model that computes the
approximation of the average activity duration of a process. 



\subsection{Predicting Workload-related Information} 

\ifShowOutlineShowcase
\todo[inline]{(\cm) 1. Predicting the number of all remaining activities\\[1ex]
  (\bqm) 2. Predicting the TOTAL number of a \\
  \qquad CERTAIN ACTIVITY within a  process \\
  \qquad (e.g., validation)\\[1ex]
  (\cm) 3. Predicting the REMAINING number of \\
  \qquad a CERTAIN ACTIVITY (e.g., validation)\\[1ex]
}
\todo[inline]{(\cm)  4. Predicting whether a process is complex \\
  \qquad (based on the number of a certain \\
  \qquad activity - if there are more than 3 validation \\
  \qquad activities then it is a complex process)\\[1ex]
} \fi

\noindent
Knowing the information about the amount of work to be done (i.e.,
workload) would be beneficial. 
%
Predicting the activity frequency is one of the ways to get 
an overview of workload. The following task specifies how to predict
\emph{the number of the remaining activities} that are
necessary 
to be performed:
\[
\begin{array}{l}
  \arRemAct = \langle \curr < \last \targetarrow\\
  \qquad \qquad \qquad \counta{\true}{\curr}{\last}, ~0 \rangle
\end{array}
\]
In this case, $\arRemAct$ counts the number of remaining
activities. We could also provide a more fine-grained specification by
focusing on a certain activity. For instance, in the following we
specify the task for predicting \emph{the number of the remaining
  validation activities} that need to be done:
\[
\arRemValidationAct = \ruletup{ \curr < \last \targetarrow \rep{NumOfRemValidation}, ~0}.
\]
 where $\rep{NumOfRemValidation}$ is specified as follows:
\[
\countb{\eventquery{x}{concept:name} \logeq
  \text{"validation"}}{\curr}{\last}
\]
$\rep{NumOfRemValidation}$
%
%
counts the occurrence of validation activities
between the current event and the last event (the occurence of
validation activity is reflected by the fact that the value of the
attribute $\mbox{concept:name}$ is equal to $\text{"validation"}$).
Applying our procedure for creating the prediction model over
$\arRemAct$ and $\arRemValidationAct$,
consecutively we get regression models that predict the number of
remaining activities as well as the number of the remaining validation
activities.

We could also classify a process into complex or normal based on the
frequency of a certain activity. For instance, we could consider a
process that requires more than 25 validation activities as complex
(otherwise it is normal). The following \analrule specifies this task:
\[
  \arComplexProcess = \ruletup{ \condNumValidation > 25 \targetarrow
    \mbox{``complex"},~ \mbox{``normal"}}
\]
where $\condNumValidation$ is specified as follows:
\[
\countb{\eventquery{x}{concept:name} \logeq
  \text{"validation"}}{1}{\last}
\]
Based on $\arComplexProcess$, 
we could train a model to classify
whether a (partial) trace is 
likely to be a complex or a normal process.





\subsection{Predicting Resource-related Information}

\ifShowOutlineShowcase
\todo[inline]{(\cm) 1. Predicting the number of different \\
  \qquad resources that are needed to be involved.\\[1ex]

  (\cm) 2. Predicting the number of handovers \\
  \qquad among resources\\[1ex]

  (\cm) 3. Predicting a process is labor intensive \\
  \qquad or normal\\
  \qquad $\bullet$ based on whether the number of different \\
  \qquad resources that are involved in the process is \\
  \qquad below or above the threshold (whether it \\
  \qquad requires multiple different resources \\
  \qquad involvement (e.g., at least there are \\
  \qquad there different resources involved)) \\
  \qquad * Note: there are two ways to specify this \\[1ex]

} \fi


\noindent
Human resources could be a crucial factor in the process
execution. Knowing the number of different resources that are needed
for handling a process could be beneficial. The following \analrule
specifies the task for predicting the number of different resources
that are required: 
\[
\begin{array}{l}
\arNumDifResources = \ruletup{ \curr < \last \targetarrow \\
\qquad \qquad \qquad\countvala{org:resource}{1}{\last}, ~0}.
\end{array}
\]
During the training phase, since 
\begin{center}
$\countvala{org:resource}{1}{\last}$
\end{center}
is evaluated to the number of different values of the attribute
$\text{org:resource}$ within the corresponding trace,
$\arNumDifResources$ maps each trace prefix
$\prefix{k}{\trace}$ 
into the number of different resources.

To predict the number of \emph{task handovers} among
resources, we can specify the following prediction task:
\[
\begin{array}{l}
\arNumHandovers = \ruletup{ \curr < \last \targetarrow \rep{NumHandovers}, ~0}.
\end{array}
\]
where \rep{NumHandovers} is defined as follows:
\[
\begin{array}{l}
  \countb{\eventquery{x}{org:resource} \noteq
  \eventquery{x+1}{org:resource}}{1}{\last}
\end{array}
\]
i.e., \rep{NumHandovers} counts the number of changes on the value of
the attribute $\text{org:resource}$ and the changes of resources
reflect the task handovers among resources. Thus, in this case,
$\arNumHandovers$ maps each trace prefix $\prefix{k}{\trace}$ 
into the number of task handovers.

A process can be considered as \emph{labor intensive} if it involves
at least a certain number of different resources, e.g., three
different number of resources. This kind of task can be specified as
follows:
\[
  \arLaborIntensive = \ruletup{\condInvolveThreeResources \targetarrow
    \mbox{``LaborIntensive"},~ \mbox{``normal"}}
\]
where $\condInvolveThreeResources$ is as follows:
\[
\begin{array}{r@{\ }l}
\fexists i . \fexists j . \fexists k . &(i \noteq j \fand i \noteq k
                                         \fand j \noteq k \fand  \\
                                   &\eventquery{i}{org:resource} \noteq
                                     \eventquery{j}{org:resource}
                                     \fand\\
                                   &\eventquery{i}{org:resource} \noteq
                                     \eventquery{k}{org:resource}
                                     \fand\\
                                   &\eventquery{j}{org:resource} \noteq
                                     \eventquery{k}{org:resource}
                                     )
\end{array}
\]
Essentially, $\condInvolveThreeResources$ states that there are at
least three different events in which the values of the attribute
$\text{org:resource}$ in those events are different.
%


\subsection{Cost-related prediction}

\ifShowOutlineShowcase
\todo[inline]{(\cm) 1. Predicting total cost of the whole process\\[1ex]
  (\cm) 2. Predicting total cost of a certain activity\\[1ex]
}
\todo[inline]{(\cm) 3. Predicting whether the process is normal or expensive (based on
  whether the total cost is below the threshold or above the threshold)\\[1ex]
  (\cm) 4. Predicting average/total activity cost (in the case where the
  cost is the sum of resource and material costs)\\[1ex]
  (\cm) 5. Predicting maximal cost 
}
\todo[inline]{(\bqm) 6. Predicting the total cost of the remaining
  activities/events\\[1ex]
  (\bqm) 7. Predicting the total cost of the next three events\\[1ex]
}
\fi

Suppose that each activity within a process has its own cost and this
information is stored in the attribute named \emph{cost}. The task for
predicting the total cost of a process can be specified as follows:
\[
\begin{array}{l}
  \arTotalCost = \ruletup{ \curr <\last \targetarrow \\ 
 \quad \qquad \qquad \qquad \usuma{\eventquery{x}{cost}}{1}{\last}, ~0},
\end{array}
\]
where $\arTotalCost$ maps each trace prefix
$\prefix{k}{\trace}$ 
into the corresponding total cost that is computed by summing up the
cost of all activities. We can also specify the task for predicting
the maximal cost within a process as follows:
\[
\begin{array}{l}
  \arMaxCost = \ruletup{ \curr < \last \targetarrow \\
\qquad \qquad \qquad \quad \umaxa{\eventquery{x}{cost}}{1}{\last}, ~0}.
\end{array}
\]
In this case, $\arMaxCost$ computes the maximal cost among the cost of
all activities within the corresponding process. Similarly,
we can specify the task for predicting the average activity cost as
follows:
\[
\begin{array}{l}
  \arAvgCost = \ruletup{ \curr < \last \targetarrow\\
\qquad \qquad \qquad \quad\uavga{\eventquery{x}{cost}}{1}{\last}, ~0}.
\end{array}
\]

We could also create a more detailed specification. For instance, we
want to predict the \emph{total cost of all validation
  activities}. This task can be specified as follows:
\[
\begin{array}{l}
  \arValidationCost = \ruletup{ \curr < \last \targetarrow \rep{TotalValidationCost}, ~0}.
\end{array}
\]
where \rep{TotalValidationCost} is as follows:
\[
\sumb{\eventquery{x}{cost}}{1}{\last}{\eventquery{x}{concept:name}
  \logeq \text{"Validation"}}
\]

In a certain situation, the cost of an activity can be broken down
into several components such as human cost and material cost. Thus,
the total cost of each activity is actually the sum of the human and
material costs. To take these components into account, the prediction
task can be specified as follows:
\[
\begin{array}{l}
  \arTotalCostb = \ruletup{ \curr < \last \targetarrow \rep{TotalCost}, ~0}.
\end{array}
\]
where $\rep{TotalCost}$ is as follows:
\[
\usumc{\eventquery{x}{humanCost} + \eventquery{x}{materialCost} }{1}{\last}
\]

One might consider a process as expensive if its total cost is greater
than a certain amount (e.g., 550 Eur), otherwise it is normal. Based
on this characteristic, we could specify a task for predicting whether
a process would be expensive or not as follows:
\[
  \arExpensive = \ruletup{ \rep{TotalCost} > 550 \targetarrow
    \mbox{``expensive"},~ \mbox{``normal"}}
\]
where $\rep{TotalCost} = \usuma{\eventquery{x}{cost}}{1}{\last}$.







\subsection{Predicting Process Performance}

\ifShowOutlineShowcase
\todo[inline]{(\cm) 1. Classify process into slow, normal, or fast based on
  whether
  the total running time is below or above a certain threshold\\[1ex]
}
\todo[inline]{(\cm) 2. Predicting a process is efficient or not\\[1ex]
  \quad $\bullet$ based on the number of handovers between \\
  \qquad resources 
}
\todo[inline]{(\cm) 3. Predicting a process is efficient or not\\[1ex]
  \quad $\bullet$ based on whether the number of a certain \\
  \qquad activity is less than or greater than a certain \\
  \qquad number (e.g., the number of \\
  \qquad “modify
  delivery appointment” is less than 5)\\
}
\todo[inline]{(\bqm) 4. Predicting the number of handovers \\
  \qquad between  resources 
}
\fi

\noindent
One could consider the process that runs longer than a certain amount
of time as \emph{slow}, otherwise it is \emph{normal}. Given a
(partial) process execution information, we might be interested to
predict whether it will end up as a slow 
or a normal process. This prediction task can be specified as follows:
\begin{center}
  $\arProcPerformance = \ruletup{\condProcPerformance \targetarrow
    \mbox{``Slow"},~ \mbox{``normal"}}$.
\end{center}
 where 
\[
\begin{array}{l}
\condProcPerformance = (\eventquery{\last}{time:timestamp}~-\\
\hspace*{30mm} \eventquery{1}{time:timestamp}) > 18.000.000.
\end{array}
\]
$\arProcPerformance$ states that if the total running time of a
process is greater than 18.000.000 milliseconds (5 hours), then it is
categorized as slow, otherwise it is normal. During the
training, 
$\arProcPerformance$ maps each 
trace prefix $\prefix{k}{\trace}$ into the corresponding performance
category (i.e., slow or
normal). 
In this manner, we get a prediction model that is trained to predict
whether a certain (partial) trace will most likely be
slow 
or normal.

Notice that we can specify a more fine-grained characteristic of
process performance. For instance, we can add one more characteristic
into $\arProcPerformance$ by saying that the processes that spend less
than 3 hours (10.800.000 milliseconds) are considered as
\emph{fast}. This is specified by $\arProcPerformanceb$ as follows:
\[
\arProcPerformanceb = \ruletup{
    \condProcPerformance \targetarrow \mbox{``Slow"},~
    \condProcPerformanceb \targetarrow \mbox{``Fast"},~
    \mbox{``normal"}}
\]
 where 
\[
\begin{array}{l}
\condProcPerformanceb = (\eventquery{\last}{time:timestamp}~-\\
\hspace*{30mm}\eventquery{1}{time:timestamp}) < 10.800.000
\end{array}
\]

One might consider that a process is performed efficiently if there
are only small amount of \emph{task handovers} between resources. On
the other hand, one might consider a process is efficient if it
involves only a certain number of different resources.  Suppose that
the processes that have more than 7 times of task handovers
among the (human) resources are considered to be inefficient. We can
then specify a task to predict whether a (partial) trace is most
likely to be inefficient or not as follows:
\[
  \arEfficiency = \ruletup{ \condEfficiency > 7 \targetarrow
    \mbox{``inefficient"},~ \mbox{``normal"}}
\]
where $\condEfficiency$ is specified as follows:
\[
\begin{array}{l}
  \countb{\eventquery{x}{org:resource} \noteq
  \eventquery{x+1}{org:resource}}{1}{\last}
\end{array}
\]
i.e., $\condEfficiency$ counts how many times the value of the
attribute org:resource is changing from a one time point to another
time point by checking whether the value of the attribute org:resource
at a particular time point is different from the value of the
attribute org:resource at the next time point.
Now, suppose that the processes that involve more than 5 resources are
considered to be inefficient. We can then specify a task to predict
whether a (partial) trace is most likely to be inefficient or not as
follows:
\[
  \arEfficiencyb = \ruletup{ \condEfficiencyb > 5 \targetarrow
    \mbox{``inefficient"},~ \mbox{``normal"}}
\]
where $\condEfficiencyb = \countvala{org:resource}{1}{\last}$, i.e., 
it counts the number of different values of the attribute
org:resource.
As before, using $\arEfficiency$ and $\arEfficiencyb$, we could then
train a classifier to predict whether a process will most likely 
perform inefficiently or normal.





\subsection{Predicting Future Activities/Events}

\ifShowOutlineShowcase
\todo[inline]{(\cm) 1. Predicting the next activity\\
(\cm) 2. Predicting the next lifecycle\\
(\cm) 3. Predicting the next 3 activities\\
(\bqm) 4. Predicting the suffix\\
}  
\fi

\noindent
The task for predicting the next activity/event can be specified as follows:
\[
  \arNextEvent = \ruletup{ \curr < \last \targetarrow
    \eventquery{\curr + 1}{concept:name}, ~\text{""}}.
\]
During the construction of the prediction model, $\arNextEvent$ maps
each 
trace prefix $\prefix{k}{\trace}$
into its next activity name, because
$\eventquery{\curr+1}{concept:name}$ is evaluated to the name of the
next activity.
%
%

Similarly, we can specify the task for predicting the next lifecycle
as follows:
\[
\begin{array}{l}
  \arLifeCycle = \ruletup{ \curr < \last \targetarrow \eventquery{\curr + 1}{lifecycle:transition}, ~\text{""}}
\end{array}
\]
In this case, since $\eventquery{\curr+1}{lifecycle:transition}$ is
evaluated to the lifecycle information of the next event,
$\arLifeCycle$ maps each trace prefix
$\prefix{k}{\trace}$ 
into its next lifecycle.

Instead of just predicting the information about the next activity, we
might be interested in predicting more information such as the
information about the next three activities. This task can be
specified as follows:
\[
  \arNextThreeEvents = \ruletup{ \curr + 3 \leq \last \targetarrow
    \rep{Next3Activities}, ~\rep{RemEvents}}.
\]
where 
\[
\begin{array}[t]{lcl}
  \textup{\rep{Next3Activities}} &\textup{=}&
                                          \uconcatc{\eventquery{x}{concept:name}}{\curr + 1}{\curr+3}\\
  \textup{\rep{RemEvents}} &\textup{=}& 
                                        \uconcatc{\eventquery{x}{concept:name}}{\curr + 1}{\last}
\end{array}
\]
During the construction of the prediction model, in the training
phase, $\arNextThreeEvents$ maps each trace prefix
$\prefix{k}{\trace}$
into the information about the next three activities.






%
%

\ifShowOutlineShowcase

\bigskip \bigskip \bigskip \bigskip \bigskip \bigskip
\todo[inline]{\ \\ \ \\ \  \\ START  - SKETCH \ \\ \  \\ \ \\ }

\noindent
\textbf{Predicting quantitive information}

\todo[inline]{predicting the number of resources, a certain events, costs}

$\avga{\aggnumsrc}{\st}{\ed}{\aggcondexp}$
$\mina{\aggnumsrc}{\st}{\ed}{\aggcondexp}$
$\maxa{\aggnumsrc}{\st}{\ed}{\aggcondexp}$
$\concata{\aggnonnumsrc}{\st}{\ed}{\aggcondexp}$

$\usuma{\eventquery{x}{cost}}{1}{\last}$

$\sumb{\eventquery{x}{cost}}{1}{\last}{\eventquery{x}{concept:name}
  \logeq \text{"Validation"}}$

$\uavgc{\eventquery{x+1}{time:timestamp} -
  \eventquery{x}{time:timestamp}}{1}{\last}$

$\countb{\eventquery{x}{concept:name} \logeq \text{"validation"}}{1}{\last}$

$\countvala{org:resource}{1}{\last}$

\[
\begin{array}{r@{ \ }l@{ \ }l}
  \ar = \langle &\condPingPonga &\targetarrow \mbox{ha}, \\
                    &\multicolumn{2}{l}{\mbox{``Not Ping-Pong''}\rangle} 
\end{array}
\]
\[
  \ar_4 = \ruletup{ \curr + 1 \leq \last \targetarrow
    \mathsf{NextThree}, ~\udefined}.
\]
\todo[inline]{\ \\ \ \\ \  \\ END OF  - SKETCH \ \\ \  \\ \ \\ }

\fi

%
%

\section{Implementation and Experiment}
\label{sec:implementation-experiment}

As a proof of concept, we develop a prototype that implements
our approach.
%
%
%
%
%
This prototype includes a parser for our language and a program for
automatically processing the given prediction task specification as
well as for building the corresponding prediction model based on our
approach explained in \Cref{sec:spec-language,sec:pred-model}.
We also build a ProM\footnote{ProM is a widely used extendable
  framework for process mining (\url{http://www.promtools.org}).}
{plug-in} that wraps these functionalities.
%
Several feature encoding functions to be selected are also provided,
e.g., one hot encoding of event attributes, time since the previous
event, plain attribute values encoding, 
etc.
We can also choose the desired machine learning model to be built. 
Our implementation uses Java and Python. For the interaction between
Java and Python, we use Jep (Java Embedded Python)\footnote{Jep -
  \url{https://pypi.org/project/jep/}}. In general, we use Java for
implementing the program for processing the specification and we use
Python for dealing with the machine learning models.

Our experiments aim at demonstrating the applicability of our approach
in automatically constructing reliable prediction models based on the
given specification.
%
%
%
The experiments were conducted by applying our approach into several
case studies/problems that are based on real life event
logs. Particularly, we use the publicly available event logs that were
provided for Business Process Intelligence Challenge (BPIC)
2012, 
BPIC 2013, 
and BPIC 2015. 
For each event log, several relevant prediction tasks are
formulated 
based on the corresponding domain, 
and also by considering the available information.  For instance,
predicting the occurence of ping-pong behaviour among support groups
might be suitable for the BPIC~13 event log, but not for BPIC~12 event
log since there is no information about groups in BPIC~12 event log
(in fact, they are event logs from two different domains). For each
prediction task, we provide the corresponding formal
specification 
that can be fed into our tool in order to create the
corresponding prediction model.


For the experiment, we follow the standard \emph{holdout
  method}~\cite{HPK11}. Specifically, we partition the data into two
sets as follows: we use the first 2/3 of the log for the training data
and the last 1/3 of the log for the testing data. For each prediction
task
specification, 
we apply our approach in order to generate the corresponding
prediction model, and then we evaluate the prediction quality of the
generated prediction model by considering
each $k$-length trace prefix $\prefix{k}{\trace}$ of each trace
$\trace$ in the testing set (for $1 < k < \card{\trace}$).
In order to provide a baseline, 
we use a statistical-based prediction technique, which is often called
{Zero Rule (ZeroR)}. Specifically, for the classification task, the
prediction by $\text{ZeroR}$ is performed based on the most common
target value in the training set, while for the regression task, the
prediction is based on the mean value of the target values in the
training data.
%
%
%
%

Within these experiments, we consider several machine learning models,
namely
\begin{inparaenum}[\itshape (i)]
\item Logistic Regression, 
\item Linear Regression, 
\item Naive Bayes Classifier,
\item Decision Tree~\cite{BFSO84},
\item Random Forest~\cite{B01}, 
\item Ada Boost~\cite{FS97} with Decision Tree as the base estimator,
\item Extra Trees~\cite{GEW06},
\item Voting Classifier that is composed of Decision Tree, Random
  Forest, Ada Boost, and Extra Trees.
\end{inparaenum}
Among these, Logistic Regression, Naive Bayes, and Voting Classifier
are only used for classification tasks, and Linear Regression is only
used for regression tasks. The rest are used for both.
Notably, we also use a Deep Learning Model~\cite{GBC16}. In
particular, we use the Deep Feed-Forward Neural Network and we
consider various sizes of the network by taking into account several
different depth and width of the network (we consider different
numbers of hidden layers ranging from 2 to 6 and three variants of the
number of neurons namely 75, 100 and 150). 
%
%
%
%
 In the implementation, we use the machine learning
libraries provided by \emph{scikit-learn}~\cite{scikit-learn}. For the
implementation of neural network, we use
\emph{Keras}\footnote{\url{https://keras.io}} with
\emph{Theano}~\cite{theano} backend.



To assess the prediction quality,
we use the standard metrics for evaluating classification and
regression models that are generally used in the machine learning 
literatures. 
%
These metrics are also widely used in many works in this research
area 
(e.g.~\cite{ASS11,MRRT17,MFDG14,VDLMD15,LCDDM15,TVLD17}).
%
%
For the classification task, we use Accuracy, Area Under the ROC Curve
(AUC), Precision, Recall, and F-Measure. For the regression task, we
use Mean Absolute Error (MAE) and Root Mean Square Error (RMSE).  In
the following, we briefly explain these metrics. A more elaborate
explanation on these metrics can be found in the typical literature on
machine learning and data mining, e.g.,~\cite{MRT12,HPK11,FHT01}.

%
\emph{Accuracy} is the fraction of predictions that are correct.
%
It is computed by dividing the number of correct predictions by the
number of all predictions.
The range of accuracy value is between 0 and 1. The
value 1 indicates the best model, while 0 indicates the worst
model. 
%
%
%
An \emph{ROC (Receiver Operating Characteristic) curve} allows us to
visualize the prediction quality of a classifier. If the classifier is
good, the curve should be as closer to the top left corner as
possible. A random guessing is depicted as a straight diagonal
line. Thus, the closer the curve to the straight diagonal line, the
worse the classifier is.
The value of the \emph{area under the ROC curve (AUC)} allows us to
assess a classifier as follows: the AUC value equal to 1 shows a
perfect classifier while the AUC value equal to 0.5 shows the worst
classifier that is not better than random guessing. Thus, the closer
the value to 1, the better it is, and the closer the value to 0.5, the
worse it is.
%
%
\emph{Precision} measures the \textit{exactness} of the
prediction. 
%
%
When a classifier predicts a certain output for a certain case, the
precision value intuitively indicates how much is the chance that such
prediction is correct.
%
%
%
%
%
Specifically, among all cases that are classified into a particular
class, precision measures the fraction of those cases that are
correctly classified.
%
%
%
%
%
On the other hand, \emph{recall} measures the \textit{completeness} of
the prediction. 
%
%
%
%
Specifically, among all cases that should be classified as a
particular class, recall measures the fraction of those cases that can
be classified correctly.
%
%
%
Intuitively, 
given a particular class, the recall value indicates the ability of
the model to correctly classify all cases that should be classified
into that particular class.
%
The best precision and recall value is 1. 
%
%
\emph{F-Measure} is harmonic mean of precision and recall. It provides
a measurement that combines both precision and recall values by also
giving equal weight to them. Formally, it is computed as follows:
$\text{F-Measure} = (2 \times P \times R) / (P + R)$, where $P$ is
precision and $R$ is recall. The best F-Measure value is 1. Thus, the
closer the value to 1, the better it is.
%
%
%

\emph{MAE} computes the average of the absolute error of all
predictions over the whole testing data, where each error is computed
as the difference between the expected and the predicted
values. Formally, given $n$ testing data,
$\text{\emph{MAE}}~=~\left(\sum_{i=1}^n \card{y_i  - \hat{y_i}}\right) / n$,
where $\hat{y_i}$ (resp.\ $y_i$) is the predicted value (resp.\ the
expected/actual value) for the testing instance $i$.
\emph{RMSE} can be computed as follows:
$\text{\emph{RMSE}}~=~\sqrt{\left(\sum_{i=1}^n (y_i -
    \hat{y_i})^2\right) / n}$, where $\hat{y_i}$ (resp.\ $y_i$) is the
predicted value (resp.\ the expected/actual value) for the testing
instance $i$. Compare to MAE, RMSE is more sensitive to errors since
it gives larger penalty to larger errors by using the 'square'
operation. For both MAE and RMSE, the lower the score, the better the
model is.

In our experiments, we use the trace encoding that incorporates the
information of the last $n$-events, where $n$ is the maximal length of
the traces in the event log under consideration. Furthermore, for each
experiment we consider two types of encoding, where each of them
considers different available event attributes (One encoding
incorporates more event attributes than the others). The detail of
event attributes that are considered is explained in each experiment
below.






\subsection{Experiment on BPIC 2013 Event Log}

The event log from BPIC 2013 \footnote{More information on BPIC 2013
  can be found in
  \url{http://www.win.tue.nl/bpi/doku.php?id=2013:challenge}}~\cite{BPI-13-data} contains the data from
the Volvo IT incident management system called
VINST. 
It stores information concerning the incidents handling process. For
each incident, a solution should be found as quickly as possible so as
to bring back the service with minimum interruption to the
business. It contains 7554 traces (process instances) and 65533
events. There are also several attributes in each event containing
various information such as the problem status, the support team
(group) that is involved in handling the problem, the person who works
on the problem, etc.

In BPIC 2013, ping-pong behaviour 
is one of the interesting problems to be analyzed. Ideally, an
incident should be solved quickly without involving too many support
teams.
%
%
%
To specify the tasks for predicting whether a process would probably
exhibit a ping-pong behaviour, we first identify and express the
possible characteristics of ping-pong behaviour as follows:
%

\smallskip

\begin{flushleft}
$
\begin{array}{r@{}l}
  \condPingPongExpa =   \fexists i . (&i > \curr \fand i < \last \fand \\
  &\eventquery{i}{org:group} \noteq \eventquery{i+1}{org:group}  \fand 
  \\
                                      & \eventquery{i}{concept:name}\noteq \mbox{``Queued"} )
\end{array}
$
\end{flushleft}

\smallskip

\begin{flushleft}
$
\begin{array}{r@{}l}
\condPingPongExpb =  \fexists i . (&\eventquery{i}{org:resource} \noteq
                                 \eventquery{i+1}{org:resource}
                                     \fand\\
                                   &\eventquery{i}{org:resource} \logeq
                                 \eventquery{i+2}{org:resource})
\end{array}
$
\end{flushleft}

\smallskip

\begin{flushleft}
$
\begin{array}{r@{}l}
\condPingPongExpc =  \fexists i . (&\eventquery{i}{org:resource} \noteq
                                 \eventquery{i+1}{org:resource}
                                     \fand\\
                                   &\eventquery{i}{org:resource} \logeq
                                 \eventquery{i+3}{org:resource})
\end{array}
$
\end{flushleft}

\smallskip

\begin{flushleft}
$
\begin{array}{r@{}l}
\condPingPongExpd =  \fexists i . (&\eventquery{i}{org:group} \noteq
                                 \eventquery{i+1}{org:group}
                                     \fand\\
                                   &\eventquery{i}{org:group} \logeq
                                 \eventquery{i+2}{org:group})
\end{array}
$
\end{flushleft}

\smallskip

\begin{flushleft}
$
\begin{array}{r@{}l}
\condPingPongExpe =  \fexists i . (&\eventquery{i}{org:group} \noteq
                                 \eventquery{i+1}{org:group}
                                     \fand\\
                                   &\eventquery{i}{org:group} \logeq
                                 \eventquery{i+3}{org:group})
\end{array}
$
\end{flushleft}

\smallskip

\begin{flushleft}
$
\begin{array}{r@{}l}
\condPingPongExpf =  \fexists i . \fexists j . \fexists k . (&i < j ~\fand~ j < k ~\fand~ \\
                                   &\eventquery{i}{org:group} \noteq
                                     \eventquery{j}{org:group}
                                     \fand\\
                                   &\eventquery{i}{org:group} \noteq
                                     \eventquery{k}{org:group}
                                     \fand\\
                                   &\eventquery{j}{org:group} \noteq
                                     \eventquery{k}{org:group}
                                     )
\end{array}
$
\end{flushleft}

\smallskip

Roughly speaking, $\condPingPongExpa$ says that \emph{there is a
  change in the support team while the problem is not being
  ``Queued''}. $\condPingPongExpb$ and $\condPingPongExpc$ state that
\emph{there is a change in the person who handles the problem, but
  then at some point it changes back into the original
  person}. $\condPingPongExpd$ and $\condPingPongExpe$ say that
\emph{there is a change in the support team (group) who handles the
  problem, but then at some point it changes back into the original
  support team}. $\condPingPongExpf$ states that \emph{the process of
  handling the incident involves at least three different groups}.

We then specify three different \analrules below in order to specify
three different tasks for predicting ping-pong behaviour based on
various characteristics of this unexpected behaviour.

\begin{flushleft}
$\quad
  \arPingPongExpa = \ruletup{ \condPingPongExpa \targetarrow \mbox{``Ping-Pong''}, ~ \mbox{``Not Ping-Pong''} }
$
\end{flushleft}

\begin{flushleft}
$\quad
\begin{array}{@{}r@{ \ }l@{ \ }l}
\arPingPongExpb = \ruletup{
    &\condPingPongExpb \targetarrow &\mbox{``Ping-Pong''}, ~\\
    &\condPingPongExpc \targetarrow &\mbox{``Ping-Pong''}, ~ \\
    &\condPingPongExpd \targetarrow &\mbox{``Ping-Pong''}, ~\\
    &\condPingPongExpe \targetarrow &\mbox{``Ping-Pong''}, ~\\
    &&\mbox{``Not Ping-Pong''} \ },
\end{array}
$
\end{flushleft}

\begin{flushleft}
$\quad
  \arPingPongExpc = \ruletup{ \condPingPongExpf \targetarrow \mbox{``Ping-Pong''}, ~ \mbox{``Not Ping-Pong''} }
$
\end{flushleft}

\noindent
In this case, $\arPingPongExpa$ specifies the task for predicting
ping-pong behaviour based on the characteristic provided by
$\condPingPongExpa$ (Similarly for $\arPingPongExpb$ and
$\arPingPongExpc$).
These \analrules can be fed into our tool in order to obtain the
prediction model, and for these cases we create classification
models.
%
%

In BPIC 2013 event log, an incident can have several statuses. One of
them is waiting. In this experiment, we predict the \emph{remaining
  duration of all waiting-related events} by specifying the following
\analrule:
\[
\begin{array}{l}
  \arRemWaiting = \ruletup{ \curr < \last \targetarrow \rep{RemWaitingTime}, 0}
\end{array}
\]
where $\rep{RemWaitingTime}$ is as follows:
\[
\begin{array}[t]{l}
  \sumd{\eventquery{x+1}{time:timestamp} -
  \eventquery{x}{time:timestamp}}{\curr}{\last}{
\begin{array}[t]{l}
  \eventquery{x}{lifecycle:transition} \logeq \mbox{``Await.\! Assign."} \for \\
  \eventquery{x}{lifecycle:transition} \logeq \mbox{``Wait"}  \for \\ 
  \eventquery{x}{lifecycle:transition} \logeq \mbox{``Wait - Impl."}  \for \\ 
  \eventquery{x}{lifecycle:transition} \logeq \mbox{``Wait - User"}  \for \\ 
  \eventquery{x}{lifecycle:transition} \logeq \mbox{``Wait - Cust."}  \for \\ 
  \eventquery{x}{lifecycle:transition} \logeq \mbox{``Wait - Vendor"}
)
\end{array}
    }
\end{array}
\]
i.e., $\rep{RemWaitingTime}$ is the sum of all event duration in which
the status is related to waiting (e.g., {Awaiting
  Assignment}, {Wait}, {Wait-User}, etc). 
%
%
%
Similarly, we predict the \emph{remaining duration of all (exactly)
  waiting events} by specifying the following:
\[
\begin{array}{l}
  \arRemWaitingb = \ruletup{ \curr < \last \targetarrow \rep{RemWaitDur}, 0}
\end{array}
\]
where $\rep{RemWaitDur}$ is as follows:
\[
\begin{array}[t]{l}
  \sumd{\eventquery{x+1}{time:timestamp} -
  \eventquery{x}{time:timestamp}}{\curr}{\last}{
\begin{array}[t]{l}
  \eventquery{x}{lifecycle:transition} \logeq \mbox{``Wait"}  )
\end{array}
    }
\end{array}
\]
i.e., $\rep{RemWaitDur}$ is the sum of all event duration in which
the status is ``wait''.
Both $\arRemWaiting$ and $\arRemWaitingb$ can be fed into our tool,
and in this case we generate regression models.
%

For all of these tasks, we consider two different trace
encodings. First, we use the trace encoding that incorporates several
available event attributes, namely concept:name, org:resource,
org:group, lifecycle:transition, organization involved, impact,
product, resource country, organization country, org:role. Second, we
use the trace encoding that only incorporates the event names, i.e.,
the values of the attribute concept:name.
Intuitively, the first encoding considers more information than the
second encoding. Thus, the prediction models that are obtained by
using the first encoding use more input information for doing the
prediction.
The evaluation on the generated prediction models from all prediction
tasks specified above is reported in
\Cref{tab:exp-bpi13-1,tab:exp-bpi13-2}.




\begin{table*}
  \caption{The results from the experiments on BPIC 2013 event log
    using prediction tasks
    $\arPingPongExpa$, $\arPingPongExpb$, and $\arPingPongExpc$} 
\centering 
\begin{tabular}{| c | l | c | c | c | c | c || c | c | c | c | c |} 
\hline
&\multicolumn{11}{c|}{Experiments with the \analrule $\arPingPongExpa$ (change of group while the concept:name is not 'queued')}\\
\cline{2-12}
\multirow{10}{*}{$\arPingPongExpa$}&\multirow{2}{*}{Model}&\multicolumn{5}{c||}{1st encoding (more features)}&\multicolumn{5}{c|}{2nd encoding (less features)}\\
\cline{3-12}
&& \auc & \acc & \wprecision & \wrecall & \fmeasure & \auc & \acc & \wprecision & \wrecall & \fmeasure \\ 
\cline{2-12}
  &\zeroR & 0.50 & 0.82 & 0.68 & 0.82 & 0.75 & 0.50 & {0.82} & 0.68 & {0.82} & 0.75 \\ 
\cline{2-12}
  &\logreg & 0.64 & 0.81 & 0.75 & 0.81 & 0.76& 0.55 & {0.82} & 0.68 & {0.82} & 0.75 \\ 
\cline{2-12}
  &\nbayes  & 0.51 & 0.21 & 0.80 & 0.21 & 0.12& 0.54 & 0.19 & \textbf{0.79} & 0.19 & 0.09 \\ 
\cline{2-12}
  &\dectree & 0.67 & 0.78 & 0.80 & 0.78 & 0.79& \textbf{0.68} & {0.82} & 0.76 & {0.82} & \textbf{0.77} \\ 
\cline{2-12}
  &\randfor & \textbf{0.83} & \textbf{0.84} & \textbf{0.83} & \textbf{0.84} & \textbf{0.83} & \textbf{0.68} & {0.82} & 0.76 & {0.82} & \textbf{0.77} \\ 
\cline{2-12}
 &\adaboost & 0.73 & 0.81 & 0.77 & 0.81 & 0.78& 0.66 & {0.82} & 0.75 & {0.82} & 0.75 \\ 
\cline{2-12}
  &\extratree & 0.81 & 0.83 & 0.81 & 0.83 & 0.82& \textbf{0.68} & {0.82} & 0.76 & {0.82} & \textbf{0.77} \\ 
\cline{2-12}
  &\voting & 0.81 & 0.81 & 0.81 & 0.81 & 0.81& \textbf{0.68} & {0.82} & 0.76 & {0.82} & \textbf{0.77} \\ 
\cline{2-12}
  &\nn & 0.73 & 0.83 & 0.81 & 0.83 & 0.81 & \textbf{0.68} & \textbf{0.83} & 0.78 & \textbf{0.83} & 0.75\\
\cline{1-12}
\cline{1-12}
 
&\multicolumn{11}{c|}{Experiments with the \analrule $\arPingPongExpb$ (change of people/group and change back to the original person/group) }\\
\cline{2-12}
\multirow{10}{*}{$\arPingPongExpb$}&\multirow{2}{*}{Model}&\multicolumn{5}{c||}{1st encoding (more features)}&\multicolumn{5}{c|}{2nd encoding (less features)}\\
\cline{3-12}
&& \auc & \acc & \wprecision & \wrecall & \fmeasure & \auc & \acc & \wprecision & \wrecall & \fmeasure \\ 
\cline{2-12}
 & \zeroR & 0.50 & 0.79 & 0.63 & 0.79 & 0.70 & 0.50 & 0.79 & 0.63 & 0.79 & 0.70\\ 
\cline{2-12}
&  \logreg & 0.77 & 0.82 & 0.80 & 0.82 & 0.80 & 0.62 & 0.81 & 0.78 & 0.81 & 0.76\\ 
\cline{2-12}
  &\nbayes   & 0.69 & 0.79 & 0.75 & 0.79 & 0.75 & 0.63 & 0.80 & 0.77 & 0.80 & 0.76 \\ 
\cline{2-12}
 & \dectree & 0.73 & 0.82 & 0.82 & 0.82 & 0.82 & 0.76 & 0.82 & 0.80 & 0.82 & 0.80 \\ 
\cline{2-12}
  &\randfor  & \textbf{0.85} & \textbf{0.86} & {0.85} & \textbf{0.86} & 0.85 & \textbf{0.78} & 0.82 & 0.80 & 0.82 & 0.80  \\ 
\cline{2-12}
  &\adaboost & 0.81 & 0.84 & 0.83 & 0.84 & 0.83 & 0.68 & 0.81 & 0.79 & 0.81 & 0.77 \\ 
\cline{2-12}
  &\extratree & \textbf{0.85} & \textbf{0.86} & {0.85} &
                                                                \textbf{0.86} & \textbf{0.86} & \textbf{0.78} & 0.82 & 0.80 & 0.82 & 0.80 \\
\cline{2-12}
  &\voting & \textbf{0.85} & \textbf{0.86} & {0.85} & \textbf{0.86} & 0.85 & 0.77 & 0.82 & {0.81} & 0.82 & \textbf{0.81} \\ 
\cline{2-12}
  &\nn & 0.77 & \textbf{0.86} & \textbf{0.86} & \textbf{0.86} &{0.85} & \textbf{0.78} & \textbf{0.83} & \textbf{0.82} & \textbf{0.83} & 0.80\\
\cline{1-12}
\cline{1-12}

&\multicolumn{11}{c|}{Experiments with the \analrule $\arPingPongExpc$ (involves at least three different groups)}\\
\cline{2-12}
\multirow{10}{*}{$\arPingPongExpc$}&\multirow{2}{*}{Model}&\multicolumn{5}{c||}{1st encoding (more features)}&\multicolumn{5}{c|}{2nd encoding (less features)}\\
\cline{3-12}
&& \auc & \acc & \wprecision & \wrecall & \fmeasure & \auc & \acc & \wprecision & \wrecall & \fmeasure \\ 
\cline{2-12}
  &\zeroR & 0.50 & 0.74 & 0.54 & 0.74 & 0.63 & 0.50 & 0.74 & 0.54 & 0.74 & 0.63\\ 
\cline{2-12}
  &\logreg  & 0.78 & 0.78 & 0.76 & 0.78 & 0.76 & 0.77 & 0.79 & 0.77 & 0.79 & 0.77\\ 
\cline{2-12}
  &\nbayes & 0.75 & 0.76 & 0.73 & 0.76 & 0.70 & 0.76 & 0.77 & 0.75 & 0.77 & 0.73\\ 
\cline{2-12}
  &\dectree & 0.79 & 0.82 & 0.83 & 0.82 & 0.83 & 0.81 & {0.82} & \textbf{0.82} & {0.82} & \textbf{0.82}\\ 
\cline{2-12}
  &\randfor  & \textbf{0.92} & \textbf{0.87} & \textbf{0.87} & \textbf{0.87} & \textbf{0.87} & \textbf{0.83} & {0.82} & \textbf{0.82} & {0.82} & \textbf{0.82}\\ 
\cline{2-12}
  &\adaboost  & 0.89 & 0.86 & 0.86 & 0.86 & 0.86 & \textbf{0.83} & 0.81 & 0.80 & 0.81 & 0.80\\ 
\cline{2-12}
  &\extratree & 0.91 & \textbf{0.87} & \textbf{0.87} & \textbf{0.87} & \textbf{0.87}  & 0.82 & {0.82} & \textbf{0.82} & {0.82} & \textbf{0.82}\\
\cline{2-12}
  &\voting  & 0.91 & 0.85 & 0.85 & 0.85 & 0.85  & 0.82 & {0.82} & 0.81 & {0.82} & \textbf{0.82}\\ 
\cline{2-12}
  &\nn & {0.85} & {0.85} & {0.84} & {0.85} & {0.84}& \textbf{0.83} & \textbf{0.83} & \textbf{0.82} & \textbf{0.83} & \textbf{0.82}\\
\cline{1-12}

\end{tabular}  
\label{tab:exp-bpi13-1}
\end{table*} 

\begin{table*}
  \caption{The results of the experiments on BPIC 2013 event log
    using prediction tasks
    $\arRemWaiting$ and $\arRemWaitingb$} 
\centering 
\begin{tabular}{| c | l | c | c || c | c |} 


\hline
&\multicolumn{5}{c|}{Experiments with the \analrule $\arRemWaiting$ (the remaining duration of all waiting-related events)}\\
\cline{2-6}
\multirow{8}{*}{$\arRemWaiting$}&\multirow{2}{*}{Model}&\multicolumn{2}{c||}{1st Encoding (more features)} & \multicolumn{2}{c|}{2nd Encoding (less features)}\\
\cline{3-6}
&&\mae (in days)&\rmse (in days) &\mae (in days)&\rmse (in days)\\
\cline{2-6}
&\zeroRreg & 5.977 & {6.173} &5.977 & {6.173} \\ 
\cline{2-6}
&\linreg & 5.946 & 6.901 & 6.16 & 6.462\\ 
\cline{2-6}
&\dectreereg & 5.431 & 17.147 & 5.8 & 7.227 \\ 
\cline{2-6}
&\randforreg & 4.808 & 8.624 & 5.81 & 7.114 \\ 
\cline{2-6}
&\adaboostreg  & 14.011 & 18.349 & 14.181 & 15.164 \\ 
\cline{2-6}
&\extratreereg & {4.756} & 8.612 & {5.799} & 7.132\\ 
\cline{2-6}
&\nnreg & \textbf{2.205} & \textbf{4.702} &\textbf{4.064} & \textbf{4.596}\\ 
\hline



\hline
&\multicolumn{5}{c|}{Experiments with the \analrule $\arRemWaitingb$
  (the remaining duration of all events in which the status is ``wait'')}\\
\cline{2-6}
\multirow{8}{*}{$\arRemWaitingb$}&\multirow{2}{*}{Model}&\multicolumn{2}{c||}{1st Encoding (more features)} & \multicolumn{2}{c|}{2nd Encoding (less features)}\\
\cline{3-6}
&&\mae (in days)&\rmse (in days) &\mae (in days)&\rmse (in days)\\
\cline{2-6}
&\zeroRreg & 1.061 & \textbf{1.164} & 1.061 & {1.164} \\ 
\cline{2-6}
&\linreg & 1.436 & 1.974 &  1.099 & 1.233 \\ 
\cline{2-6}
&\dectreereg & {0.685} & 5.165 & {1.003} & 1.66 \\ 
\cline{2-6}
&\randforreg & 0.713 & 3.396 & 1.016 & 1.683\\ 
\cline{2-6}
&\adaboostreg  & 1.507 & 3.89 & 1.044 & 1.537\\ 
\cline{2-6}
&\extratreereg & 0.843 & 3.719 & 1.005 & 1.649 \\ 
\cline{2-6}
&\nnreg &  \textbf{0.37} & 2.037 & \textbf{0.683} & \textbf{0.927}\\ 
\hline


\end{tabular}  
\label{tab:exp-bpi13-2}
\end{table*} 

\subsection{Experiment on BPIC 2012 Event Log}

The event log for BPIC 2012\footnote{More information on BPIC 2012 can
  be found in
  \url{http://www.win.tue.nl/bpi/doku.php?id=2012:challenge}}~\cite{BPI-12-data}
comes from a Dutch financial institute. 
It stores the information concerning the process of handling either
personal loan or overdraft application.
It contains 13.087 traces (process instances) and 262.200 events.
%
%
Generally, the process of handling an application is as follows: Once
an application is submitted, some checks are performed. After that,
the application is augmented with necessary additional information
that is obtained by contacting the client by phone. An offer will be
send to the client, if the applicant is eligible. After this offer is
received back, it is assessed. The customer will be contacted again if
there is missing information. After that, a final assessment is
performed.
%
%
In this experiment, we consider two prediction task as follows:

\begin{enumerate}
\item 
%
%
  One type of activity within this process is named
  \emph{W\_Completeren aanvraag}, which stands for ``{Filling in
    information for the application}''. The task for predicting the
  \emph{total duration of all remaining activities of this type} is
  formulated as follows:
\[
    \begin{array}{l}
      \arRemTimeFillingApplication = \ruletup{ \curr < \last \targetarrow \rep{RemTimeFillingInfo}, 0}
    \end{array}
\]
  where $\rep{RemTimeFillingInfo}$ is as follows:
\[
{
      \begin{array}[t]{l}
        \sumd{\eventquery{x+1}{time:timestamp} -
        \eventquery{x}{time:timestamp}}{\curr}{\last}{
        \begin{array}[t]{l}
          \eventquery{x}{concept:name} \logeq \\
          \qquad \qquad \qquad\mbox{``W\_Completeren aanvraag"})
        \end{array}
        }
      \end{array}
}
\]
i.e., it computes the sum of the duration of all remaining
\emph{W\_Completeren aanvraag} activities.

\item At the end of the process, an application can be declined. The
  task to predict whether an application will eventually be declined
  is specified as follows:
%
\[
\arEventuallyDeclined = \ruletup{ \condEventuallyDeclined \targetarrow \mbox{``Declined}, ~ \mbox{``Not\_Declined} }
\]
%
where $\condEventuallyDeclined$ is as follows:
%
\[
\begin{array}{r@{}l}
\condEventuallyDeclined =  \fexists i . (&i > \curr \fand\\
                                           &\eventquery{i}{concept:name}
                                           \logeq \mbox{``A\_DECLINED"})
\end{array}
\]
i.e., $\condEventuallyDeclined$ says that eventually there will be an
event in which the application is declined. 

\end{enumerate}


Both $\arRemTimeFillingApplication$ and $\arEventuallyDeclined$ can be
fed into our tool. For $\arRemTimeFillingApplication$, we generate a
regression model, while for $\arEventuallyDeclined$, we generate a
classification model. Different from the BPIC 2013 and BPIC 2015 event
logs, there are not so many event attributes in this log. For all of
these tasks, we consider two different trace encodings. First, we use
the trace encoding that incorporates several available event
attributes, namely concept:name and lifecycle:transition. Second, we
use the trace encoding that only incorporates the event names, i.e.,
the values of the attribute concept:name.
Thus, intuitively the first encoding considers more information than
the second encoding.
The evaluation on the generated prediction models from the prediction
tasks specified above is shown in
\Cref{tab:exp-bpi12-1,tab:exp-bpi12-2}.

\begin{table*}
  \caption{The results of the experiments on BPIC 2012 event log
    using the prediction task $\arRemTimeFillingApplication$} 
\centering 
\begin{tabular}{| c | l | c | c || c | c |} 


\hline
&\multicolumn{5}{c|}{Experiments with the \analrule $\arRemTimeFillingApplication$
  (Total Duration of all remaining activites named 'W\_Completeren aanvraag')}\\
\cline{2-6}
\multirow{8}{*}{ $\arRemTimeFillingApplication$ }&\multirow{2}{*}{Model}&\multicolumn{2}{c||}{1st Encoding (more features)} & \multicolumn{2}{c|}{2nd Encoding (less features)}\\
\cline{3-6}
&& \quad\mae  (in days) \quad \qquad &\quad\rmse  (in days)\quad \qquad &\quad\mae (in days) \quad \qquad&\quad\rmse (in days) \quad \qquad\\
\cline{2-6}
&\zeroRreg & 3.963 & 5.916 & 3.963 & 5.916 \\ 
\cline{2-6}
&\linreg & 3.613  & 5.518 & 3.677 & 5.669\\ 
\cline{2-6}
&\dectreereg & 2.865 & 5.221 & 2.876 & 5.228 \\ 
\cline{2-6}
&\randforreg & 2.863 & 5.198 & 2.877 & 5.213\\ 
\cline{2-6}
&\adaboostreg  &  3.484 & 5.655 & 3.484 & 5.655 \\ 
\cline{2-6}
&\extratreereg & {2.857} & \textbf{5.185} & {2.868} & \textbf{5.191}\\ 
\cline{2-6}
&\nnreg & \textbf{2.487} & 5.683 & \textbf{2.523} & 5.667\\ 
\hline


\end{tabular}  
\label{tab:exp-bpi12-1}
\end{table*} 

\begin{table*}
  \caption{The results from the experiments on BPIC 2012 event log
    using the prediction task  $\arEventuallyDeclined$} 
\centering 
\begin{tabular}{| c | l | c | c | c | c | c || c | c | c | c | c |} 
\hline
&\multicolumn{11}{c|}{Experiments with the \analrule $\arEventuallyDeclined$
  (predict whether an application will be eventually 'DECLINED')}\\
\cline{2-12}
  \multirow{10}{*}{$\arEventuallyDeclined$}&\multirow{2}{*}{Model}&\multicolumn{5}{c||}{1st encoding (more features)}&\multicolumn{5}{c|}{2nd encoding (less features)}\\
  \cline{3-12}
&& \auc & \acc & \wprecision & \wrecall & \fmeasure & \auc & \acc & \wprecision & \wrecall & \fmeasure \\ 
  \cline{2-12}
&\zeroR &  0.50 & 0.78 & 0.61 & 0.78 & 0.68 &  0.50 & 0.78 & 0.61 & 0.78 & 0.68 \\ 
  \cline{2-12}
&\logreg &  0.69 & 0.78 & 0.75 & 0.78 & 0.76 & 0.69 & 0.77 & 0.71 & 0.77 & 0.71 \\ 
  \cline{2-12}
&\nbayes  & 0.67 & 0.33 & 0.74 & 0.33 & 0.30  & 0.67 & 0.33 & 0.73 & 0.33 & 0.30\\ 
  \cline{2-12}
&\dectree &  0.70 & 0.78 & 0.76 & 0.78 & 0.77 & 0.70 & 0.78 & 0.76 & 0.78 & 0.77\\ 
  \cline{2-12}
&\randfor &  \textbf{0.71} & 0.79 & 0.77 & 0.79 & \textbf{0.78} & \textbf{0.71} & 0.79 & 0.77 & 0.79 & \textbf{0.78} \\ 
  \cline{2-12}
&\adaboost &  \textbf{0.71} & \textbf{0.81} & \textbf{0.78} &
                                                              \textbf{0.81} & \textbf{0.78} & \textbf{0.71} & \textbf{0.80} & \textbf{0.78} & \textbf{0.80} & \textbf{0.78} \\ 
  \cline{2-12}
&\extratree &\textbf{0.71} & 0.79 & 0.77 & 0.79 & \textbf{0.78} & \textbf{0.71} & 0.79 & 0.77 & 0.79 & \textbf{0.78} \\ 
  \cline{2-12}
&\voting & \textbf{0.71} & 0.79 & 0.77 & 0.79 & \textbf{0.78} & \textbf{0.71} & 0.79 & 0.77 & 0.79 & 0.77\\ 
  \cline{2-12}
&\nn & \textbf{0.71} & {0.80} & {0.77} & {0.80} & \textbf{0.78}& \textbf{0.71} & \textbf{0.80} & \textbf{0.78} & \textbf{0.80} & \textbf{0.78}\\
  \cline{1-12}
  \cline{1-12}

\end{tabular}  
\label{tab:exp-bpi12-2}
\end{table*} 

\subsection{Experiment on BPIC 2015 Event Log}
In BPIC 2015\footnote{More information on BPIC 2015 can be found in
  \url{http://www.win.tue.nl/bpi/doku.php?id=2015:challenge}}~\cite{BPI-15-data},
5 event logs from 5 Dutch Municipalities are provided. They contain
the data of the processes for handling the building permit
application. In general, the processes in these 5 municipalities are
similar. Thus, in this experiment we only consider one of these logs.
There are several information available such as the activity name and
the resource/person that carried out a certain task/activity.
%
%
The statistic about the log that we consider is as follows: it has
1409 traces (process instances) and 59681 events.

For this event log, we consider several tasks related to predicting
workload-related information (i.e., related to the amount of
work/activities need to be done). First, we deal with the task for
predicting whether a process of handling an application is complex or
not based on the number of the remaining different activities that
need to be done. Specifically, we consider a process is complex (or
need more attention) if there are still more than 25 different
activities need to be done. This task can be specified as follows:
\[
\arComplexAppB = \langle \rep{NumDifRemAct} \geq 25 \targetarrow
  \mbox{``Complex"}, \mbox{``Normal"} \rangle 
\]
%
where $\rep{NumDifRemAct}$ is specified as follows: 
%
\[
\countvala{activityNameEN}{\curr}{\last}
\]
%
i.e., $\rep{NumDifRemAct}$ counts the number of different values of
the attribute 'activityNameEN' from the current time point until the
end of the process. As the next workload-related prediction task, we
specify the task for predicting the number of remaining
events/activities 
as follows:
\begin{center}
$
\begin{array}{l}
  \arRemainingAct = \ruletup{ \curr < \last \targetarrow \rep{RemAct}, 0}
\end{array}
$
\end{center}
where $\rep{RemAct} = \counta{\true}{\curr}{\last}$, i.e.,
$\rep{RemAct}$ counts the number of events/activities from the current
time point until the end of the process.

Both $\arComplexAppB$ and $\arRemainingAct$ can be fed into our
tool. For the former, we generate a classification model, and for the
latter, we generate a regression model. For all of these tasks, we
consider two different trace encodings. First, we use the trace
encoding that incorporates several available event attributes, namely
monitoringResource, org:resource, activityNameNL, activityNameEN,
question, concept:name. Second, we use the trace encoding that only
incorporates the event names, i.e., the values of the attribute
concept:name.
As before, the first encoding considers more information than the
second encoding. 
The evaluation on the generated prediction models from the prediction
tasks specified above is shown in
\Cref{tab:exp-bpi15-1,tab:exp-bpi15-2}


\begin{table*}
  \caption{The results from the experiments on BPIC 2015 event log
    using the prediction task $\arComplexAppB$} 
\centering 
\begin{tabular}{| c | l | c | c | c | c | c || c | c | c | c | c |} 
\hline
&\multicolumn{11}{c|}{Experiments with the \analrule $\arComplexAppB$
  (Predicting whether a process is complex)}\\
\cline{2-12}
\multirow{10}{*}{$\arComplexAppB$}&\multirow{2}{*}{Model}&\multicolumn{5}{c||}{1st encoding (more features)}&\multicolumn{5}{c|}{2nd encoding (less features)}\\
\cline{3-12}
&& \auc & \acc & \wprecision & \wrecall & \fmeasure & \auc & \acc & \wprecision & \wrecall & \fmeasure \\ 
\cline{2-12}
  &\zeroR & 0.50 & 0.57 & 0.32 & 0.57 & 0.41 & 0.50 & 0.57 & 0.32 & 0.57 & 0.41 \\ 
\cline{2-12}
  &\logreg & 0.92 & 0.83 & 0.85 & 0.83 & 0.83 & 0.90 & 0.84 & 0.84 & 0.84 & 0.83\\ 
\cline{2-12}
  &\nbayes  & 0.81 & 0.72 & 0.82 & 0.72 & 0.71 & 0.93 & 0.68 & 0.81 & 0.68 & 0.66\\ 
\cline{2-12}
  &\dectree & 0.80 & 0.79 & 0.80 & 0.79 & 0.80 & 0.84 & 0.85 & 0.85 & 0.85 & 0.85 \\ 
\cline{2-12}
&\randfor &\textbf{0.95} & \textbf{0.89} & \textbf{0.89} & \textbf{0.89} & \textbf{0.89}   & \textbf{0.95} & \textbf{0.90} & \textbf{0.90} & \textbf{0.90} & \textbf{0.90} \\ 
  \cline{2-12}
&\adaboost & 0.92 & 0.87 & 0.87 & 0.87 & 0.87 & 0.93 & 0.88 & 0.88 & 0.88 & 0.88 \\ 
  \cline{2-12}
&\extratree &\textbf{0.95} & 0.88 & 0.88 & 0.88 & 0.88 & \textbf{0.95} & 0.88 & 0.89 & 0.88 & 0.88\\ 
  \cline{2-12}
&\voting & 0.94 & 0.85 & 0.86 & 0.85 & 0.86 & \textbf{0.95} & 0.88 & 0.88 & 0.88 & 0.88\\ 
  \cline{2-12}
&\nn & {0.89} & {0.84} & {0.84} & {0.84} & {0.84} & {0.92} & {0.84} & 0.84 & {0.84} & {0.84}\\
  \cline{1-12}
  \cline{1-12}

\end{tabular}  
\label{tab:exp-bpi15-1}
\end{table*} 

\begin{table*}
  \caption{The results of the experiments on BPIC 2015 event log
    using the prediction task $\arRemainingAct$} 
\centering 
\begin{tabular}{| c | l | c | c || c | c |} 


\hline
&\multicolumn{5}{c|}{Experiments with the \analrule $\arRemainingAct$
  (the number of the remaining events/activities)}\\
\cline{2-6}
\multirow{8}{*}{ $\arRemainingAct$ }&\multirow{2}{*}{Model}&\multicolumn{2}{c||}{1st Encoding (more features)} & \multicolumn{2}{c|}{2nd Encoding (less features)}\\
\cline{3-6}
&& \qquad\mae \qquad \qquad &\qquad\rmse \qquad \qquad &\qquad\mae \qquad \qquad&\qquad\rmse \qquad \qquad\\
\cline{2-6}
&\zeroRreg & 11.21 & 13.274 & 11.21 & 13.274 \\ 
\cline{2-6}
&\linreg & 6.003 & 7.748 & 14.143 & 18.447 \\ 
\cline{2-6}
&\dectreereg & 6.972 & 9.296 & 6.752 & 9.167  \\ 
\cline{2-6}
&\randforreg & 4.965 & 6.884 & 4.948 & 6.993 \\ 
\cline{2-6}
&\adaboostreg  & 4.971 & 6.737 & 4.879 & 6.714 \\ 
\cline{2-6}
&\extratreereg &\textbf{4.684} & \textbf{6.567} & \textbf{4.703} & \textbf{6.627}\\ 
\cline{2-6}
&\nnreg & {6.325} & {8.185} & {5.929} & {7.835}  \\ 
\hline


\end{tabular}  
\label{tab:exp-bpi15-2}
\end{table*} 

\subsection{Discussion on the Experiments}

In total, our experiments involve
\totaltasksforexperiments 
different prediction tasks 
over 3 different real-life event logs from 3 different domains (1
event log from BPIC 2015, 1 event log from BPIC 2012, and 1 event log
from BPIC 2013).

Overall, these experiments show the capabilities of our language in
capturing and specifying the desired prediction tasks that are based
on the event logs coming from real-life situation. These experiments
also exhibit the applicability of our approach in automatically
constructing reliable prediction models based on the given
specification.
This is supported by the following facts:
first, for all prediction tasks that we have considered, by
considering different input features and machine learning models, we
are able to obtain prediction models that beat the baseline.
Moreover, for all prediction tasks that predict categorical values, in
our experiments we are always able to get a prediction model that has
AUC value greater than 0.5. Recall that AUC~=~0.5 indicates the worst
classifier that is not better than a random guess. Thus, since we have
AUC~>~0.5, the prediction models that we generate certainly take into
account the given input and predict the most probable output based on
the given input, instead of randomly guessing the output no matter
what the input is. In fact, in many cases, we could even get very high
AUC values which are ranging between 0.8 and 0.9 (see
\Cref{tab:exp-bpi13-1,tab:exp-bpi15-1}). This score is very close to
the AUC value for the best predictor (recall that AUC~=~1 indicates
the best classifier).

As can be seen from the experiments, the choice of the input features
and the machine learning models 
influence the quality of the prediction model. 
The result of our experiments also shows that there is no single
machine learning model that always outperforms other models on every
task.
Since our approach does not rely on a particular machine learning
model,
it justifies that we can simply plug in different supervised machine
learning techniques in order to get different or better
performance. In fact, 
in our experiments, by considering different models we could get
different/better prediction quality.
Concerning the input features, for each task in our experiments, we
intentionally consider two different input encodings.  The first one
includes many attributes (hence it incorporates many information), and
the second one includes only a certain attribute (i.e., it
incorporates less information).
In general, our common sense would expect that the more
information, 
the better the prediction quality would be. 
This is because we thought that, by having more information, we
have a more holistic view of the situation.
Although many of our experiment results show this fact, there are
several cases where considering less features could give us a better
result, e.g., the RMSE score in the experiment with several models on
the task $\arRemWaitingb$, and the scores of several metrics in the
experiment $\arComplexAppB$ show this fact (see
\Cref{tab:exp-bpi13-2,tab:exp-bpi15-1}).
%
%
In fact, this is aligned with the typical observation in machine
learning. The presence of irrelevant features could decrease the
prediction quality.
Although in the learning process a good model should (or will try to)
ignore 
irrelevant features, the absence of these unrelated features might
make the learning process better and might improve the quality of the
prediction.
%
%
%
%
%
%
%
%
Additionally, in some situation, too many features might cause
overfitting, i.e., the model fits the training data very well, but it
fails to generalize well while doing prediction on the new data.


Based on the experience from these experiments, time constraint would
also be a crucial factor in choosing the model when we would like to
apply this approach in practice. Some models require a lot of tuning
in order to achieve a good performance (e.g., neural network), while
other models do not need many adjustment and able to achieve
relatively good performance (e.g., Extra Trees, Random Forest).

Looking at another perspective, our experiments complement various
studies in the area of predictive process monitoring in several ways.
First, instead of using machine learning models that are typically
used in many studies within this area such as Random Forest and
Decision Tree (cf.~\cite{MFDG14,VDLMD15,DDFGMR16,DDFT16}), we also
consider 
other machine learning models that, to the best of our knowledge, are
not typically used. For instance, we use Extra Trees, Ada Boost,
and Voting Classifier.
%
%
Thus, we provide a fresh insight on the performance of these machine
learning models in predictive process monitoring by using them in
various different prediction tasks (e.g., predicting (fine-grained)
time-related information, unexpected behaviour).
Although this work is not aimed at comparing various machine learning
models, as we see from the experiments, in several cases, Extra Trees
exhibits similar performance (in terms of accuracy) as Random
Forest. There are also some cases where it outperforms the Random
Forest (e.g., see the experiment with the task $\arRemainingAct$ in
\Cref{tab:exp-bpi15-2}). In the experiment with the task
$\arEventuallyDeclined$, {AdaBoost} outperforms all other models.
%
%
%
Regarding the type of the prediction tasks, we also look into the
tasks that are not yet highly explored in the literature within the
area of predictive process monitoring. For instance, while there are
numerous works on predicting the remaining processing time, to the
best of our knowledge, there is no literature exploring a more
fine-grained task such as the prediction of the remaining duration of
a particular type of event (e.g., predicting the duration of all
remaining waiting events). We also consider several workload-related
prediction tasks, which is rarely explored in the area of predictive
process monitoring.

Concerning the Deep Learning approach, 
there have
been several studies that explore the usage of Deep Neural Network for
predictive process monitoring
(cf.~\cite{TVLD17,ERF17a,ERF17b,DGMPY17,MEF17}). However, they focus
on predicting the name of the future activities/events, the next
timestamp, and the remaining processing time.
In this light, our experiments 
contribute new insights on exhibiting the usage of Deep Learning
approach in dealing with different prediction tasks other than just
those tasks.
%
%
Although the deep neural network does not always give the best result
in all tasks in our experiments, there are several interesting cases
where it shows a very good performance. Specifically, in the
experiments with the tasks $\arRemWaiting$ and $\arRemWaitingb$
(cf.~\Cref{tab:exp-bpi13-2}), where all other models cannot beat the
RMSE score of the baseline, the deep neural network comes to the
rescue and becomes the only model that could beat the RMSE score of
our baseline. 



\section{Related Work}
\label{sec:related-work}

This work is tightly related to 
the area of predictive analysis in business process management. In the
literature, there have been several works focusing on predicting
time-related properties of running 
processes. 
%
The works by~\cite{VPS10,ASS11,RW13,RW15,PSBD14,PSBD18} focus on
predicting the remaining processing time.
In~\cite{VPS10, ASS11}, the authors present an approach for predicting
the remaining processing time based on annotated transition system
that contains time information extracted from event logs.
%
The work by~\cite{RW13,RW15} proposes a technique for predicting the
remaining processing time using stochastic petri nets.
The works by~\cite{SWGM14,SWGM15,MFE12,PVFTW12} focus on predicting
delays in process execution. In~\cite{SWGM14,SWGM15}, the authors use
queueing theory to address the problem of delay prediction,
while~\cite{MFE12} explores the delay prediction in the domain of
transport and logistics process.
In~\cite{FGP12}, the authors present an ad-hoc predictive clustering
approach for predicting process performance. The authors
of~\cite{TVLD17} present a deep learning approach (using LSTM neural
network) for predicting the timestamp of the next event and use it to
predict the remaining cycle
time 
by repeatedly predicting the timestamp of the next event.

Looking at another perspective, the works
by~\cite{MFDG14,DDFT16,VDLMD15} focus on predicting the outcomes of a
running process.
The work by~\cite{MFDG14} introduces a framework for predicting the
business constraints compliance of a running process.
In~\cite{MFDG14}, the business constraints are formulated in
propositional Linear Temporal Logic (LTL), where the atomic
propositions are all possible events during the process executions.
The work by~\cite{DDFT16} improves the performance of~\cite{MFDG14} by
using a clustering preprocessing step.
Another work on outcomes prediction is presented by~\cite{PVWFT16},
which proposes an approach for predicting aggregate process outcomes
by taking into account the information about overall process
risk. Related to process risks,~\cite{CDLV13,CDLVT15} propose an
approach for risks prediction. The work by~\cite{MRRT17} presents an
approach based on evolutionary algorithm for predicting business
process indicators of a running process instance, where business
process indicator is a quantifiable metric that can be measured by
data that is generated by the
processes. 
%
%
The authors of~\cite{MF17} present a work on predicting business
constraint satisfaction. Particularly,~\cite{MF17} studies the impact
of considering the estimation of prediction
reliability 
on the costs of the processes.

Another major stream of works tackle the problem of predicting the
future activities/events of a running process
(cf.~\cite{TVLD17,ERF17a,ERF17b,DGMPY17,MEF17,BMDB16,PSBD18}).  The
works by~\cite{TVLD17,ERF17a,ERF17b,DGMPY17,MEF17} use deep learning
approach for predicting the future events, e.g., the next event of the
current running
process. Specifically,~\cite{TVLD17,ERF17a,ERF17b,DGMPY17} use LSTM
neural network, while~\cite{MEF17} uses deep feed-forward neural
network. In~\cite{PSBD18,DGMPY17,TVLD17} the authors also tackle the
problem of predicting the whole sequence of future events (the suffix
of the current running process).

A key difference 
between many of those works and ours is that, instead of focusing on
dealing with a particular prediction task (e.g., predicting the
remaining processing time or the next event), this work introduces a
specification language that enables us to specify various desired
prediction tasks for predicting various future information of a
running business process.
%
%
To deal with these various desired prediction tasks, we present a
mechanism to automatically process the given specification of
prediction task and to build the corresponding prediction model.
%
%
From another point of view, several works in this area often describe
the prediction tasks under study simply by using a (possibly
ambiguous) natural language. In this light, the presence of our
language complements this area by providing a means to formally and
unambiguously specifying/describing the desired prediction tasks.
Consequently, it could ease the definition of the task and the
comparison among different works that propose a particular prediction
technique for a particular prediction task.

%
Regarding the specification language, unlike the propositional
LTL~\cite{Pnue77}, 
which is the basis of Declare language~\cite{PV06,PSV07} and often
used for specifying business constraints over a sequence of events
(cf.~\cite{MFDG14}), our \langnameabr language (which is part of our
rule-based specification language) allows us not only to specify
properties over sequence of events but also to specify properties over
the data (attribute values) of the events, i.e., it is data-aware.
Concerning data-aware specification language, the work
by~\cite{BCDDM13} introduces a data-aware specification language by
combining data querying mechanisms and temporal logic. Such language
has been used in several works on verification of data-aware processes
systems (cf.~\cite{AS-ICSOC-13,thesis-as-16,AS-IJCAI-15,AS-JELIA-14}).
%
The works by~\cite{DMM14,MDGM13} provide a data-aware extension of the
Declare language based on the First-Order LTL (LTL-FO).
Although those languages are data-aware, they do not support
arithmetic expressions/operations over the data which is absolutely
necessary for our purpose, e.g., for expressing the time difference
between the timestamp of the first and the last event.
Another interesting data-aware language is S-FEEL, which is part of
the Decision Model and Notation (DMN) standard~\cite{OMG15} by OMG.
Though S-FEEL supports arithmetic expressions over the data, it does
not allow us to universally/existentially quantify different event
time points and to compare different event attribute values at
different event time points, which is important for our needs, e.g.,
in specifying the ping-pong behaviour.

Concerning aggregation, there are several formal languages that
incorporate such feature (cf.~\cite{DLT15,BGS13,BKMZ13}) and many of
them have been used in system monitoring. The work by~\cite{DLT15}
extends the temporal logic Past Time LTL with counting
quantifier. Such extension allows us to
express a constraint on the number of occurrences of events (similar
to our $\aggcount$ function). In~\cite{BGS13} a language called
SOLOIST is introduced and it supports several aggregate functions on
the number of event occurrences within a certain time
window. Differently from ours, both~\cite{DLT15} and~\cite{BGS13} do
not consider aggregation over data (attribute values). The works
by~\cite{BKMZ13,BKMZ15a} extend the temporal logic that was introduced
in~\cite{BKMP08,BKMZ15b} with several aggregate functions. Such
language allows us to select the values to be aggregated. However, due
to the interplay between the set and bag semantics in their language,
as they have illustrated,
some values might be lost while computing the aggregation because
they first collect the set of tuples of values that satisfy the
specified condition and then they collect the bag of values to be
aggregated from that set of tuples of values. To avoid this situation,
they need to make sure that each tuple of values has a sort of unique
identifier. This situation does not happen in our aggregation because,
in some sense, we directly use the bag semantics while collecting the
values to be aggregated.

Importantly, unlike those languages above, apart from allowing us to
specify a complex constraint/pattern, a fragment of our \langnameabr
language also allows us to specify the way to compute certain values,
which is needed for specifying the way to compute the target/predicted
values, e.g., the remaining processing time, or the remaining number
of a certain activity/event. Our language is also specifically tuned
for expressing data-aware properties based on the typical structure of
business process execution logs (cf.~\cite{IEEE-XES:2016}), and the
design is highly driven by the typical prediction tasks in business
process management.
From another point of view, our work complements the works on
predicting SLA/business constraints compliance by providing an
expressive language to specify complex data-aware
constraints that may involve arithmetic expression and data
aggregation.


\section{Discussion}
\label{sec:discussion}

This section discusses potential limitations of this work, which might
pave the way towards our future direction.

This work focuses on the problem of predicting the future information
of a single running process based on the current information of that
corresponding running process. In practice, there could be several
processes running concurrently. Hence, it is absolutely interesting to
extend the work further so as to consider the prediction problems on
concurrently running processes. This extension would involve the
extension of the language itself. For instance, the language should be
able to specify some patterns over multiple running
processes. Additionally, it should be able to express the desired
predicted information or the way to compute the desired predicted
information, and it might involve the aggregation of information over
multiple running processes. Consequently, the mechanism for building
the corresponding prediction model needs to be adjusted.

Our experiments (cf.~\Cref{sec:implementation-experiment}) show a
possible instantiation of our generic approach in creating prediction
services. In this case we predict the future information of a running
process by only considering the information from a single running
process. However, in practice, other processes that are concurrently
running might affect the execution of other processes. For instance,
if there are so many processes running together and there are not
enough employees for handling all processes simultaneously, some
processes might need to wait. Hence, when we predict the remaining
duration of waiting events, the current workload information might be
a factor that need to be considered and ideally these information
should be incorporated in the prediction. One possibility to overcome
this limitation is to use the trace encoding function that
incorporates the information related to the processes that are
concurrently running. For instance, we can make an encoding function
that extracts relevant information from all processes that are
concurrently running, and use them as the input features.
Such information could be the number of employees that are actively
handling some processes, the number of available resources/employees,
the number of processes of a certain type that are currently running,
etc.

This kind of machine learning based technique performs the prediction
based on the observable information. Thus, if the information to be
predicted depends on some unobservable factors, the quality of the
prediction might be decreasing. Therefore, in practice, all factors
that highly influence the information to be predicted should be
incorporated as much as possible. Furthermore, the prediction model is
only built based on the historical information about the previously
running processes and neglects the possibility of the existence of the
domain knowledge (e.g., some organizational rules) that might
influence the prediction. 
In some sense, it (implicitly) assumes that the domain knowledge is
already incorporated in those historical data that captures the
processes execution in the past. Obviously, it is then interesting to
develop the technique further so as to incorporate the existing domain
knowledge in the creation of the prediction model with the aim of
enhancing the prediction quality.
Looking at another perspective, since the prediction model is only
built based on the historical data of the past processes execution,
this approach is absolutely suitable for the situation in which the
(explicit) process model is unavailable or hard to obtain.




As also observed by other works in this area (e.g.,~\cite{ASS11}), in
practice, by predicting the future information of a running process,
we might affect the future of the process itself, and hence we might
reduce the preciseness of the prediction. For instance, when it is
predicted that a particular process would exhibit an unexpected
behaviour, we might be eager to prevent it by closely watching the
process in order to prevent that unexpected behaviour. In the end,
that unexpected behaviour might not be happened due to our preventive
actions, and hence the prediction is not happened. On the other hand,
if we predict that a particular process will run normally, we might put
less attention than expected into that process, and hence the
unexpected behaviour might occur. Therefore, knowing the (prediction
of the) future might not always be good for this case. This also
indicates that a certain care need to be done while using the
predicted information.

\section{Conclusion}
\label{sec:conclusion}

We have introduced an approach for obtaining predictive process
monitoring services based on the specification of the desired
prediction tasks.
%
%
%
Specifically, we proposed a novel rule-based language for specifying
the desired prediction tasks, and
we devise a mechanism 
for automatically building the corresponding prediction models based
on the given specification.
Establishing such language is a non-trivial task. The language should
be able to capture various prediction tasks, while at the same time
allowing us to have a procedure for building/deriving the
corresponding prediction model.
Our language is a logic-based language which is fully equipped with a
well-defined formal semantics. Therefore, it allows us to do formal
reasoning over the specification, and to have a machine processable
language that enables us to automate the creation of the prediction
model. The language allows us to express complex properties involving
data and arithmetic expressions. It also allows us to specify the way
to compute certain values. Notably, our language supports several
aggregate functions.
A prototype that implements our approach has been developed and
several experiments using real life event logs confirmed the
applicability of our approach.
Remarkably, our experiments involve the usage of a deep learning model
(In particular, we use the deep feed-forward neural network).


%
%

Apart from those that have been discussed in \Cref{sec:discussion},
the future work includes the extension of the tool and the language.
%
One possible extension would be to incorporate \emph{trace attribute
  accessor} that allows us to specify properties involving trace
attribute values.
As our \langnameabr language is a logic-based language, there is a
possibility to exploit existing logic-based tools such as
Satisfiability Modulo Theories (SMT) solver~\cite{BSST09} for
performing some reasoning tasks related to the
language. 
%
Experimenting with other supervised machine learning techniques would
be the next step as well, 
for instance by using another deep learning approach (i.e., another
type of neural network such as recurrent neural network) with the aim
of improving the prediction quality.


\begin{acknowledgements}
%
  We thank 
  T.\ K.\ Wijaya for various suggestions related to this work, and
  Yasmin K. 
  for the implementation of several prototype components.
\end{acknowledgements}

\bibliographystyle{spbasic}
\bibliography{string-tiny,main}

\begin{thebibliography}{67}
\providecommand{\natexlab}[1]{#1}
\providecommand{\url}[1]{{#1}}
\providecommand{\urlprefix}{URL }
\expandafter\ifx\csname urlstyle\endcsname\relax
  \providecommand{\doi}[1]{DOI~\discretionary{}{}{}#1}\else
  \providecommand{\doi}{DOI~\discretionary{}{}{}\begingroup
  \urlstyle{rm}\Url}\fi
\providecommand{\eprint}[2][]{\url{#2}}

\bibitem[{van~der Aalst and et~al.(2012)}]{ProcessMiningManifesto}
van~der Aalst W, et~al (2012) Process mining manifesto. In: BPM Workshops 2012

\bibitem[{van~der Aalst(2016)}]{Aalst:2016}
van~der Aalst WMP (2016) Process Mining - Data Science in Action. Springer

\bibitem[{van~der Aalst et~al.(2010)van~der Aalst, Pesic, and Song}]{VPS10}
van~der Aalst WMP, Pesic M, Song M (2010) Beyond process mining: From the past
  to present and future. In: CAiSE 2010

\bibitem[{van~der Aalst et~al.(2011)van~der Aalst, Schonenberg, and
  Song}]{ASS11}
van~der Aalst WMP, Schonenberg M, Song M (2011) Time prediction based on
  process mining. Inf\ Sys\

\bibitem[{Bagheri~Hariri et~al.(2013{\natexlab{a}})Bagheri~Hariri, Calvanese,
  De~Giacomo, Deutsch, and Montali}]{BCDDM13}
Bagheri~Hariri B, Calvanese D, De~Giacomo G, Deutsch A, Montali M
  (2013{\natexlab{a}}) Verification of relational data-centric dynamic systems
  with external services. In: PODS 2013

\bibitem[{Bagheri~Hariri et~al.(2013{\natexlab{b}})Bagheri~Hariri, Calvanese,
  Montali, Santoso, and Solomakhin}]{AS-ICSOC-13}
Bagheri~Hariri B, Calvanese D, Montali M, Santoso A, Solomakhin D
  (2013{\natexlab{b}}) Verification of semantically-enhanced artifact systems.
  In: Proc.\ of the 11th Int.\ Joint Conf.\ on Service Oriented Computing
  (ICSOC), Springer, LNCS, vol 8274, pp 600--607,
  \doi{https://doi.org/10.1007/978-3-642-45005-1_51}

\bibitem[{Barrett et~al.(2009)Barrett, Sebastiani, Seshia, and
  Tinelli}]{BSST09}
Barrett CW, Sebastiani R, Seshia SA, Tinelli C (2009) Satisfiability modulo
  theories. In: Handbook of Satisfiability

\bibitem[{Basin et~al.(2008)Basin, Klaedtke, M{\"u}ller, and
  Pfitzmann}]{BKMP08}
Basin D, Klaedtke F, M{\"u}ller S, Pfitzmann B (2008) {Runtime Monitoring of
  Metric First-order Temporal Properties}. In: IARCS

\bibitem[{Basin et~al.(2013)Basin, Klaedtke, Marinovic, and
  Z{\u{a}}linescu}]{BKMZ13}
Basin D, Klaedtke F, Marinovic S, Z{\u{a}}linescu E (2013) Monitoring of
  temporal first-order properties with aggregations. In: RV 2013

\bibitem[{Basin et~al.(2015{\natexlab{a}})Basin, Klaedtke, Marinovic, and
  Z{\u{a}}linescu}]{BKMZ15a}
Basin D, Klaedtke F, Marinovic S, Z{\u{a}}linescu E (2015{\natexlab{a}})
  Monitoring of temporal first-order properties with aggregations. FMSD

\bibitem[{Basin et~al.(2015{\natexlab{b}})Basin, Klaedtke, M\"{u}ller, and
  Z\u{a}linescu}]{BKMZ15b}
Basin D, Klaedtke F, M\"{u}ller S, Z\u{a}linescu E (2015{\natexlab{b}})
  Monitoring metric first-order temporal properties. JACM

\bibitem[{Bianculli et~al.(2013)Bianculli, Ghezzi, and San~Pietro}]{BGS13}
Bianculli D, Ghezzi C, San~Pietro P (2013) The tale of {SOLOIST}: A
  specification language for service compositions interactions. In: FACS 2012

\bibitem[{Breiman(2001)}]{B01}
Breiman L (2001) Random forests. Machine Learning 45(1):5--32,
  \doi{10.1023/A:1010933404324}

\bibitem[{Breiman et~al.(1984)Breiman, Friedman, Stone, and Olshen}]{BFSO84}
Breiman L, Friedman J, Stone C, Olshen R (1984) Classification and Regression
  Trees. The Wadsworth and Brooks-Cole statistics-probability series, Taylor \&
  Francis

\bibitem[{Breuker et~al.(2016)Breuker, Matzner, Delfmann, and Becker}]{BMDB16}
Breuker D, Matzner M, Delfmann P, Becker J (2016) Comprehensible predictive
  models for business processes. MIS Quarterly

\bibitem[{Calvanese et~al.(2014)Calvanese, {İ}smail~{İ}lkan Ceylan, Montali,
  and Santoso}]{AS-JELIA-14}
Calvanese D, {İ}smail~{İ}lkan Ceylan, Montali M, Santoso A (2014)
  Verification of context-sensitive knowledge and action bases. In: Proc.\ of
  JELIA, Springer, LNCS, vol 8761, pp 514--528,
  \doi{https://doi.org/10.1007/978-3-319-11558-0_36}

\bibitem[{Calvanese et~al.(2015)Calvanese, Montali, and Santoso}]{AS-IJCAI-15}
Calvanese D, Montali M, Santoso A (2015) Verification of generalized
  inconsistency-aware knowledge and action bases. In: Proc.\ of the 24th Int.\
  Joint Conf.\ on Artificial Intelligence (IJCAI), AAAI Press, pp 2847--2853

\bibitem[{Conforti et~al.(2013)Conforti, de~Leoni, La~Rosa, and van~der
  Aalst}]{CDLV13}
Conforti R, de~Leoni M, La~Rosa M, van~der Aalst WMP (2013) Supporting
  risk-informed decisions during business process execution. In: CAiSE 2013

\bibitem[{Conforti et~al.(2015)Conforti, de~Leoni, La~Rosa, van~der Aalst, and
  ter Hofstede}]{CDLVT15}
Conforti R, de~Leoni M, La~Rosa M, van~der Aalst WM, ter Hofstede AH (2015) A
  recommendation system for predicting risks across multiple business process
  instances. DSS

\bibitem[{De~Masellis et~al.(2014)De~Masellis, Maggi, and Montali}]{DMM14}
De~Masellis R, Maggi FM, Montali M (2014) Monitoring data-aware business
  constraints with finite state automata. In: ICSSP 2014

\bibitem[{Di~Francescomarino et~al.(2016{\natexlab{a}})Di~Francescomarino,
  Dumas, Federici, Ghidini, Maggi, and Rizzi}]{DDFGMR16}
Di~Francescomarino C, Dumas M, Federici M, Ghidini C, Maggi FM, Rizzi W
  (2016{\natexlab{a}}) Predictive business process monitoring framework with
  hyperparameter optimization. In: CAiSE 2016

\bibitem[{Di~Francescomarino et~al.(2016{\natexlab{b}})Di~Francescomarino,
  Dumas, Maggi, and Teinemaa}]{DDFT16}
Di~Francescomarino C, Dumas M, Maggi FM, Teinemaa I (2016{\natexlab{b}})
  Clustering-based predictive process monitoring. IEEE TSC

\bibitem[{Di~Francescomarino et~al.(2017)Di~Francescomarino, Ghidini, Maggi,
  Petrucci, and Yeshchenko}]{DGMPY17}
Di~Francescomarino C, Ghidini C, Maggi FM, Petrucci G, Yeshchenko A (2017) An
  eye into the future: Leveraging a-priori knowledge in predictive business
  process monitoring. In: BPM 2017

\bibitem[{Di~Francescomarino et~al.(2018)Di~Francescomarino, Ghidini, Maggi,
  and Milani}]{DGMM18}
Di~Francescomarino C, Ghidini C, Maggi FM, Milani F (2018) Predictive process
  monitoring methods: Which one suits me best? In: BPM 2018

\bibitem[{Du et~al.(2015)Du, Liu, and Tiu}]{DLT15}
Du X, Liu Y, Tiu A (2015) Trace-length independent runtime monitoring of
  quantitative policies in {LTL}. In: FM 2015

\bibitem[{Evermann et~al.(2017{\natexlab{a}})Evermann, Rehse, and
  Fettke}]{ERF17a}
Evermann J, Rehse JR, Fettke P (2017{\natexlab{a}}) A deep learning approach
  for predicting process behaviour at runtime. In: BPM Workshops 2016

\bibitem[{Evermann et~al.(2017{\natexlab{b}})Evermann, Rehse, and
  Fettke}]{ERF17b}
Evermann J, Rehse JR, Fettke P (2017{\natexlab{b}}) Predicting process
  behaviour using deep learning. DSS

\bibitem[{Folino et~al.(2012)Folino, Guarascio, and Pontieri}]{FGP12}
Folino F, Guarascio M, Pontieri L (2012) Discovering context-aware models for
  predicting business process performances. In: OTM 2012

\bibitem[{Freund and Schapire(1997)}]{FS97}
Freund Y, Schapire RE (1997) A decision-theoretic generalization of on-line
  learning and an application to boosting. Journal of Computer and System
  Sciences

\bibitem[{Friedman et~al.(2001)Friedman, Hastie, and Tibshirani}]{FHT01}
Friedman J, Hastie T, Tibshirani R (2001) The elements of statistical learning.
  Springer

\bibitem[{Geurts et~al.(2006)Geurts, Ernst, and Wehenkel}]{GEW06}
Geurts P, Ernst D, Wehenkel L (2006) Extremely randomized trees. Machine
  Learning 63(1):3--42, \doi{10.1007/s10994-006-6226-1}

\bibitem[{Goodfellow et~al.(2016)Goodfellow, Bengio, and Courville}]{GBC16}
Goodfellow I, Bengio Y, Courville A (2016) Deep Learning. MIT Press

\bibitem[{Han et~al.(2011)Han, Pei, and Kamber}]{HPK11}
Han J, Pei J, Kamber M (2011) Data mining: concepts and techniques. Elsevier

\bibitem[{{IEEE Comp. Intelligence Society}(2016)}]{IEEE-XES:2016}
{IEEE Comp Intelligence Society} (2016) {IEEE Standard for eXtensible Event
  Stream (XES)} for achieving interoperability in event logs and event streams.
  IEEE Std 1849-2016

\bibitem[{Leontjeva et~al.(2015)Leontjeva, Conforti, Di~Francescomarino, Dumas,
  and Maggi}]{LCDDM15}
Leontjeva A, Conforti R, Di~Francescomarino C, Dumas M, Maggi FM (2015) Complex
  symbolic sequence encodings for predictive monitoring of business processes.
  In: BPM 2015

\bibitem[{Maggi et~al.(2013)Maggi, Dumas, Garc{\'i}a-Ba{\~{n}}uelos, and
  Montali}]{MDGM13}
Maggi FM, Dumas M, Garc{\'i}a-Ba{\~{n}}uelos L, Montali M (2013) Discovering
  data-aware declarative process models from event logs. In: BPM 2013

\bibitem[{Maggi et~al.(2014)Maggi, Di~Francescomarino, Dumas, and
  Ghidini}]{MFDG14}
Maggi FM, Di~Francescomarino C, Dumas M, Ghidini C (2014) Predictive monitoring
  of business processes. In: CAiSE 2014

\bibitem[{M{\'a}rquez-Chamorro et~al.(2017{\natexlab{a}})M{\'a}rquez-Chamorro,
  Resinas, and Ruiz-Cort{\'e}s}]{MRR18}
M{\'a}rquez-Chamorro AE, Resinas M, Ruiz-Cort{\'e}s A (2017{\natexlab{a}})
  Predictive monitoring of business processes: a survey. IEEE TSC

\bibitem[{M{\'a}rquez-Chamorro et~al.(2017{\natexlab{b}})M{\'a}rquez-Chamorro,
  Resinas, Ruiz-Cort{\'e}s, and Toro}]{MRRT17}
M{\'a}rquez-Chamorro AE, Resinas M, Ruiz-Cort{\'e}s A, Toro M
  (2017{\natexlab{b}}) Run-time prediction of business process indicators using
  evolutionary decision rules. ESWA

\bibitem[{Mehdiyev et~al.(2017)Mehdiyev, Evermann, and Fettke}]{MEF17}
Mehdiyev N, Evermann J, Fettke P (2017) A multi-stage deep learning approach
  for business process event prediction. In: CBI 2017

\bibitem[{Metzger and F{\"o}cker(2017)}]{MF17}
Metzger A, F{\"o}cker F (2017) Predictive business process monitoring
  considering reliability estimates. In: CAiSE 2017

\bibitem[{Metzger et~al.(2012)Metzger, Franklin, and Engel}]{MFE12}
Metzger A, Franklin R, Engel Y (2012) Predictive monitoring of heterogeneous
  service-oriented business networks: The transport and logistics case. In:
  SRII 2012

\bibitem[{Metzger et~al.(2015)Metzger, Leitner, Ivanović, Schmieders,
  Franklin, Carro, Dustdar, and Pohl}]{MLISFCDP15}
Metzger A, Leitner P, Ivanović D, Schmieders E, Franklin R, Carro M, Dustdar
  S, Pohl K (2015) Comparing and combining predictive business process
  monitoring techniques. IEEE TSMC

\bibitem[{Mohri et~al.(2012)Mohri, Rostamizadeh, and Talwalkar}]{MRT12}
Mohri M, Rostamizadeh A, Talwalkar A (2012) Foundations of machine learning.
  MIT press

\bibitem[{{Object Management Group}(2015)}]{OMG15}
{Object Management Group} (2015) {Decision Model and Notation (DMN)} 1.0.
  \urlprefix\url{http://www.omg.org/spec/DMN/1.0/}

\bibitem[{Pedregosa et~al.(2011)Pedregosa, Varoquaux, Gramfort, Michel,
  Thirion, Grisel, Blondel, Prettenhofer, Weiss, Dubourg, Vanderplas, Passos,
  Cournapeau, Brucher, Perrot, and Duchesnay}]{scikit-learn}
Pedregosa F, Varoquaux G, Gramfort A, Michel V, Thirion B, Grisel O, Blondel M,
  Prettenhofer P, Weiss R, Dubourg V, Vanderplas J, Passos A, Cournapeau D,
  Brucher M, Perrot M, Duchesnay E (2011) Scikit-learn: Machine learning in
  {P}ython. Journal of Machine Learning Research 12:2825--2830

\bibitem[{Pesic and van~der Aalst(2006)}]{PV06}
Pesic M, van~der Aalst WMP (2006) A declarative approach for flexible business
  processes management. In: BPM Workshops 2006

\bibitem[{Pesic et~al.(2007)Pesic, Schonenberg, and van~der Aalst}]{PSV07}
Pesic M, Schonenberg H, van~der Aalst WMP (2007) {DECLARE}: Full support for
  loosely-structured processes. In: EDOC 2007

\bibitem[{Pika et~al.(2012)Pika, van~der Aalst, Fidge, ter Hofstede, and
  Wynn}]{PVFTW12}
Pika A, van~der Aalst WMP, Fidge CJ, ter Hofstede AHM, Wynn MT (2012)
  Predicting deadline transgressions using event logs. In: BPM Workshops 2012

\bibitem[{Pika et~al.(2016)Pika, van~der Aalst, Wynn, Fidge, and ter
  Hofstede}]{PVWFT16}
Pika A, van~der Aalst W, Wynn M, Fidge C, ter Hofstede A (2016) Evaluating and
  predicting overall process risk using event logs. Inf\ Sci\

\bibitem[{Pnueli(1977)}]{Pnue77}
Pnueli A (1977) The temporal logic of programs. In: Proc.\ of FOCS, pp 46--57

\bibitem[{Polato et~al.(2014)Polato, Sperduti, Burattin, and de~Leoni}]{PSBD14}
Polato M, Sperduti A, Burattin A, de~Leoni M (2014) Data-aware remaining time
  prediction of business process instances. In: IJCNN 2014

\bibitem[{Polato et~al.(2018)Polato, Sperduti, Burattin, and Leoni}]{PSBD18}
Polato M, Sperduti A, Burattin A, Leoni Md (2018) Time and activity sequence
  prediction of business process instances. Computing

\bibitem[{Rogge-Solti and Weske(2013)}]{RW13}
Rogge-Solti A, Weske M (2013) Prediction of remaining service execution time
  using stochastic petri nets with arbitrary firing delays. In: ICSOC 2013

\bibitem[{Rogge-Solti and Weske(2015)}]{RW15}
Rogge-Solti A, Weske M (2015) Prediction of business process durations using
  non-markovian stochastic petri nets. Inf\ Sys\

\bibitem[{Santoso(2016)}]{thesis-as-16}
Santoso A (2016) Verification of data-aware business processes in the presence
  of ontologies. PhD thesis, Free University of Bozen-Bolzano, Technische
  Universit{\"a}t Dresden,
  \protect\url{http://nbn-resolving.de/urn:nbn:de:bsz:14-qucosa-213372}

\bibitem[{Santoso(2018)}]{AS-BPMDS-18}
Santoso A (2018) Specification-driven multi-perspective predictive business
  process monitoring. In: Enterprise, Business-Process and Information Systems
  Modeling, BPMDS 2018, EMMSAD 2018, Springer, LNBIP, vol 318, pp 97--113,
  \doi{https://doi.org/10.1007/978-3-319-91704-7_7}

\bibitem[{Senderovich et~al.(2014)Senderovich, Weidlich, Gal, and
  Mandelbaum}]{SWGM14}
Senderovich A, Weidlich M, Gal A, Mandelbaum A (2014) Queue mining –
  predicting delays in service processes. In: CAiSE 2014

\bibitem[{Senderovich et~al.(2015)Senderovich, Weidlich, Gal, and
  Mandelbaum}]{SWGM15}
Senderovich A, Weidlich M, Gal A, Mandelbaum A (2015) Queue mining for delay
  prediction in multi-class service processes. Inf\ Sys\

\bibitem[{Senderovich et~al.(2017)Senderovich, Di~Francescomarino, Ghidini,
  Jorbina, and Maggi}]{SDGJM17}
Senderovich A, Di~Francescomarino C, Ghidini C, Jorbina K, Maggi FM (2017)
  Intra and inter-case features in predictive process monitoring: A tale of two
  dimensions. In: BPM 2017

\bibitem[{Smullyan(1968)}]{Smul68}
Smullyan RM (1968) First Order Logic. Springer, Berlin (Germany)

\bibitem[{Steeman(2013)}]{BPI-13-data}
Steeman W (2013) {BPI} challenge 2013.
  \urlprefix\url{https://doi.org/10.4121/uuid:a7ce5c55-03a7-4583-b855-98b86e1a2b07}

\bibitem[{Tax et~al.(2017)Tax, Verenich, La~Rosa, and Dumas}]{TVLD17}
Tax N, Verenich I, La~Rosa M, Dumas M (2017) Predictive business process
  monitoring with {LSTM} neural networks. In: CAiSE 2017

\bibitem[{{Theano Development Team}(2016)}]{theano}
{Theano Development Team} (2016) {Theano: A {Python} framework for fast
  computation of mathematical expressions}. arXiv e-prints
  \urlprefix\url{http://arxiv.org/abs/1605.02688}

\bibitem[{Van~Dongen(2012)}]{BPI-12-data}
Van~Dongen B (2012) {BPI} challenge 2012.
  \urlprefix\url{https://doi.org/10.4121/uuid:3926db30-f712-4394-aebc-75976070e91f}

\bibitem[{Van~Dongen(2015)}]{BPI-15-data}
Van~Dongen B (2015) {BPI} challenge 2015.
  \urlprefix\url{https://doi.org/10.4121/uuid:ed445cdd-27d5-4d77-a1f7-59fe7360cfbe}

\bibitem[{Verenich et~al.(2015)Verenich, Dumas, La~Rosa, Maggi, and
  Di~Francescomarino}]{VDLMD15}
Verenich I, Dumas M, La~Rosa M, Maggi FM, Di~Francescomarino C (2015) Complex
  symbolic sequence clustering and multiple classifiers for predictive process
  monitoring. In: BPM Workshops 2015

\end{thebibliography}

\end{document}
